\newcommand{\alglinelabel}{
  \addtocounter{ALC@line}{-1}
  \refstepcounter{ALC@line}
  \label
}
\Crefname{assumption}{Assumption}{Assumptions}
\author{%
Ayush Sekhari$^{1}$ \quad Karthik Sridharan$^{2}$ \quad Wen Sun$^{2}$ \quad Runzhe Wu$^{2}$
\vspace{5pt}
\\
\normalsize{$^1$MIT \quad $^2$Cornell University}
\vspace{5pt}
\\
\small{\texttt{sekhari@mit.edu \quad \{ks999,ws455,rw646\}@cornell.edu}
}}
\date{}
\title{Contextual Bandits and Imitation Learning via Preference-Based Active Queries\thanks{Authors are listed in alphabetical order of their last names.}} 
\begin{document}

\sloppy 

\maketitle 

\begin{abstract}
We consider the problem of contextual bandits and imitation learning, where the learner lacks direct knowledge of the executed action's reward. Instead, the learner can actively query an expert at each round to compare two actions and receive noisy preference feedback. The learner's objective is two-fold: to minimize the regret associated with the executed actions, while simultaneously, minimizing the number of comparison queries made to the expert. In this paper, we assume that the learner has access to a function class that can represent the expert's preference model under appropriate link functions, and provide an algorithm that leverages an online regression oracle with respect to this function class \asedit{for choosing its actions and deciding when to query}. For the contextual bandit setting, our algorithm achieves a regret bound that combines the best of both worlds, scaling as \(O(\min\{\sqrt{T}, d/\Delta\})\), where \(T\) represents the number of interactions, \(d\) represents the eluder dimension of the function class, and \(\Delta\) represents the minimum preference of the optimal action over any suboptimal action under all contexts. Our algorithm does not require the knowledge of \(\Delta\), and the obtained regret bound is comparable to what can be achieved in the standard contextual bandits setting where the learner observes reward signals at each round. Additionally, our algorithm makes only \(O(\min\{T, d^2/\Delta^2\})\) queries to the expert. 

We then extend our algorithm to the imitation learning setting, where the learning agent engages with an unknown environment in episodes of length \(H\) each, and provide similar guarantees for regret and query complexity. \asedit{The regret bound for our imitation learning algorithm, which relies on preference-based feedback, matches the prior results in interactive imitation learning \citep{ross2014reinforcement} that require access to the expert's actions as well as reward signals. Furthermore, we show that our algorithm enjoys improved query complexity bounds. Interestingly, in some cases, our algorithm for imitation learning via preference-feedback can even learn to outperform the underlying expert thus highlighting a practical benefit of considering preference-based feedback in imitation learning.}  
 
\end{abstract} 

\section{Introduction}

Human feedback for training machine learning models has been widely used in many scenarios, including robotics \citep{ross2011reduction,ross2013learning,jain2015learning,laskey2016shiv,christiano2017deep} and natural language processing \citep{stiennon2020learning,ouyang2022training}. By integrating human feedback into the training process,  these prior works provide techniques to align machine-learning models with human intention and enable high-quality human-machine interaction (e.g., ChatGPT). 

Existing methods generally leverage two types of human feedback. The first is the action from human experts, which is the dominant feedback mode used in the literature of imitation learning or learning from demonstrations \citep{abbeel2004apprenticeship,ziebart2008maximum,daume2009search,ross2011reduction,ross2014reinforcement,sun2017deeply,osa2018algorithmic,li2023reinforcement}. 
The second type of feedback is preference-based feedback, which involves comparing pairs of actions. In this approach, the expert provides feedback by indicating their preference between two options selected by the learner. While both types of feedback have their applications, our focus in this work is on preference-based feedback, which is particularly suitable for scenarios where it is challenging for human experts to recommend the exact optimal action while making pairwise comparisons is much easier. 

Learning via preference-based feedback has been extensively studied, particularly in the field of \textit{dueling bandits} \citep{yue2011beat,yue2012k,zoghi2014relative,ailon2014reducing,komiyama2015regret,wu2016double,saha2021dueling,bengs2021preference,saha2022versatile} and \textit{contextual dueling bandits} \citep{dudik2015contextual,saha2022efficient,wu2023borda}.  Different from the standard bandit setting, the learner proposes two actions in dueling bandits and only gets noisy preference feedback from the human expert. Follow-up works extend the preference-based learning model from the one-step bandit setting to the multi-step decision-making (e.g., IL and RL) setting \citep{chu2005preference,sadigh2017active,christiano2017deep,lee2021pebble,chen2022human,saha2023dueling}. These studies mainly focus on how to learn a high-quality policy from human feedback, without concerning the question of active query in order to minimize the query complexity.  

However, query complexity is an important metric to optimize when learning from human feedback, as human feedback is expensive to collect \citep{lightman2023let}. For instance, InstructGPT \citep{ouyang2022training} is trained only on around 30K pieces of human feedback, which is significantly fewer than the internet-scale dataset used for pre-training the base model GPT3, indicating the challenge of scaling up the size of human feedback datasets. In other areas, such as robotics, learning from human feedback is also not easy, and prior studies (e.g., \citet{cohn2011comparing,zhang2022time,myers2023active}) have explored this issue from various perspectives. \citet{ross2013learning,laskey2016shiv} pointed out that querying human feedback in the learning loop is challenging, and extensively querying for feedback puts too much burden on the human experts. 

In this work, we design \emph{principled algorithms that can learn from preference-based feedback while at the same time minimizing query complexity}, 
under the settings of contextual bandits \citep{auer2002nonstochastic,langford2007epoch} and imitation learning \citep{ross2011reduction}. Our main contributions can be summarized as follows.

\begin{itemize}[leftmargin=*]
	\item In the contextual dueling bandits setting, the stochastic preference feedback is generated based on some preference matrix \citep{saha2022efficient}. We propose an algorithm (named \textsc{AURORA} -- in short of \textit{Active preference qUeRy fOR contextual bAndits}) that can achieve a best-of-both-worlds regret bound (i.e., achieves the minimum of the worst-case regret and an instance dependent regret), while at the same providing an instance-dependent query complexity bound. For benign instances with small eluder dimension and large gap,  our regret and query complexity bounds both scale with $\ln(T)$ where $T$ is the total number of interactions in contextual bandits. 
	\item In imitation learning, the stochastic preference feedback is generated based on the underlying reward-to-go of the expert's policy (e.g., the expert prefers actions that lead to higher reward-to-go). We propose an algorithm named \textsc{AURORAE}, in short of \textit{Active preference qUeRy fOR imitAtion lEarning}, which instantiates $H$ instances of \textsc{AURORA}, one per each time step for the finite horizon Markov Decision Process (MDP), where $H$ is the horizon. By leveraging preference-based feedback, we show that, interestingly, our algorithm can learn to outperform the expert when the expert is suboptimal. Such a result is beyond the scope of the classic imitation learning algorithm \textsc{DAgger}, and previously can only be achieved by algorithms like \textsc{AggreVaTe(d)} \citep{ross2014reinforcement,sun2017deeply,cheng2018convergence} and \textsc{LOLS} \citep{chang2015learning} which require direct access to expert's actions and also reward signal -- a much stronger feedback mode than ours.
\end{itemize}

To the best of our knowledge, for both contextual bandit and imitation learning with preference-based feedback, our algorithms are the first to achieve best-of-both-worlds regret bounds via active querying. %

\subsection{Related works}
\paragraph{Selective Sampling.}
Numerous studies have been conducted on selective sampling across various settings \citep{cesa2005minimizing,dekel2012selective,agarwal2013selective,hanneke2015minimax,hanneke2021toward,zhu2022efficient}, with the work of \citet{sekhari2023selective} being closest to ours. \citet{sekhari2023selective} presented a suite of provably efficient algorithms that are applicable to settings including contextual bandits and imitation learning. The primary distinction between our setting and the prior works lies in the feedback modality--we assume preference-based feedback, whereas they assume direct label feedback or reward signals. 

\paragraph{Contextual bandits with preference feedback.}
\citet{dudik2015contextual} is the first to consider contextual dueling bandits, and one of their algorithms achieves the optimal regret rate. \citet{saha2022efficient} studied contextual dueling bandits using a value function class and proposed an algorithm based on a reduction to online regression, which also achieves an optimal worst-case regret bound. 
In this paper, we mainly follow the setting of the latter and make notable improvements in two aspects: (1) in addition to the $O(\sqrt{AT})$ optimal regret rate where $A$ is the number of actions and $T$ is the number of interaction rounds, we established an instance-dependent regret upper bound that can be significantly smaller when the bandit exhibits a favorable structure; (2) our algorithm has an instance-dependent upper bound on the number of queries, and thus when the underlying instance is well behaved (has small eluder dimension and large gap), we will make significantly fewer queries.

Another related work is \citet{saha2022versatile} which achieves the best-of-both-worlds regret for non-contextual dueling bandits. We note that our setting is more general due to the existence of context and general function approximation, enabling us to leverage function class beyond linear and tabular cases.

\paragraph{RL with preference feedback.} RL with preference feedback has been widely employed in recent advancements in AI \citep{ouyang2022training,openai2023gpt}. According to \citet{wirth2017survey}, there are generally three types of preference feedback: action preferences \citep{furnkranz2012preference}, state preferences \citep{wirth2014learning}, and trajectory preferences \citep{busa2014preference,novoseller2020dueling,xu2020preference,lee2021b,chen2022human,saha2023dueling,pacchiano2021dueling,biyik2018batch,taranovic2022adversarial,sadigh2017active}. We focus on the action preference modality with the goal of achieving tight regret bounds and query complexities.
 
The concurrent work from \citet{zhan2023query} investigates the experimental design in both the trajectories-based and action-based preference settings, for which they decouple the process of collecting trajectories from querying for human feedback. Their action-based setting is the same as ours, but they mainly focus on linear parameterization, while our approach is a reduction to online regression and can leverage general function approximation beyond linear function classes.

\paragraph{Imitation learning.} In imitation learning, two common feedback modalities are typically considered: demonstrations that contain experts' actions, and preferences. The former involves directly acquiring expert actions (e.g., \citet{ross2011reduction,ross2014reinforcement,sun2017deeply, chang2015learning, sekhari2023selective}), while the latter focuses on obtaining preferences between selected options \citep{chu2005preference,lee2021pebble,zhu2023principled}.  \citet{brown2019extrapolating,brown2020better} leveraged both demonstrations and preference-based information and empirically showed that their algorithm can learn to outperform experts. Our imitation learning setting belongs to the second category, and we established bounds on the regret and the query complexity for our algorithm. We show that our algorithm can learn a policy that can provably outperform the expert (when it is suboptimal for the underlying environment).

\section{Preliminaries}

In this section, we introduce the setup for contextual bandits and imitation learning with preference-based feedback. 
We denote $[N]$ as the set of integers \(\crl{1, \dots, N}\). The set of all distributions over a set $\+S$ is denoted by $\Delta(\+S)$.

\subsection{Contextual Bandits with Preference-Based Feedback}
In this section, we introduce the contextual dueling bandits setting. We assume a context set $\+X$ and an action space $\+A=[A]$. At each round $t\in[T]$, a context $x_t$ is drawn \textit{adversarially}, and the learner's task is to decide whether to make a query \asedit{to the expert}. If the learner makes a query, it needs to select a pair of actions $(a_t,b_t)\in\+A\times\+A$, upon which a noisy feedback $y_t\in\{-1,1\}$ is revealed to the learner regarding whether $a_t$ or $b_t$ is better. Specifically, we assume \asedit{that the expert} relies on a preference function $f^\star:\+X\times\+A\times\+A\rightarrow[-1,1]$ \asedit{based on which, it samples its feedback $y_t$ as} 
\begin{align*}
\Pr(a_t\text{ is preferred to }b_t\given x_t)
\coloneqq 
\Pr( y_t = 1 \given x_t, a_t,b_t )
=
\phi\big(f^\star(x_t,a_t,b_t)\big)
\end{align*}
where $\phi(d):[-1,1]\rightarrow[0,1]$ is the link function, which satisfies $\phi(d)+\phi(-d)=1$ for any $d$. If the learner does not make a query, it should still select a pair of actions $(a_t,b_t)\in\+A\times\+A$ but will not receive any feedback.
Let $Z_t\in\{0,1\}$ indicate whether the learner makes a query at round $t$.

We assume that the learner has access to a function class $\+F\subseteq\+X\times\+A\times\+A\rightarrow[-1,1]$ that realizes \(f^\star\). Furthermore, we assume that $f^\star$, as well as the functions in $\+F$, is transitive and anti-symmetric.
\begin{assumption}\label{asm:properties}
	We assume $f^\star\in\+F$ and any functions $f\in\+F$ satisfies the following two properties:
		(1) transitivity: for any $x\in\+X$ and $a,b,c\in\+A$, if $f(x,a,b)>0$ and $f(x,b,c)>0$, then we must have $f(x,a,c)>0$;
		(2) anti-symmetry: $f(x,a,b)=-f(x,b,a)$ for any $x\in\+X$ and any $a,b\in\+A$.
\end{assumption}  
We provide an example below for which \Cref{asm:properties} is satisfied.
\begin{example}\label{lem:ex-r-r}
Assume there exists a function $r^\star:\+X\times\+A\rightarrow[0,1]$ such that $f^\star(x,a,b)=r^\star(x,a)-r^\star(x,b)$ for any $x\in\+X$ and $a,b\in\+A$. Typically, such a function $r^\star$ represents the ``reward function'' of the contextual bandit. In such a scenario, we can first parameterize a reward class $\+R\subseteq\+X\times\+A\rightarrow[0,1]$ and define $\+F=\{f:f(x,a,b)=r(x,a)-r(x,b), r\in\+R\}$. Moreover, it is common to have $\phi(d)\coloneqq 1/(1+\exp(-d))$ in this setting, which recovers the Bradley-Terry-Luce (BTL) model \citep{bradley1952rank}---a commonly used model in practice for learning reward models \citep{christiano2017deep}.
\end{example}
\Cref{asm:properties} ensures the existence of an optimal arm, as stated below.
\begin{lemma}\label{lem:optimal-action-exist}
	 Under \Cref{asm:properties}, for any function $f\in\+F$ and any context $x\in\+X$, there exists an arm $a\in\+A$ such that $f(x,a,b)\geq0$ for any arm $b\in\+A$. We denote this best arm by $\pi_f(x):=a$.\footnote{When the best arms is not unique, the ties are broken arbitrarily but consistently.}

\end{lemma}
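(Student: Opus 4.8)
The plan is to recast the claim as the existence of a maximal element in a finite strict partial order. Fix $f\in\+F$ and $x\in\+X$, and define a binary relation $\prec$ on $\+A$ by $b\prec a \iff f(x,a,b)>0$. First I would record two immediate consequences of anti-symmetry: taking $a=b$ gives $f(x,a,a)=-f(x,a,a)$, hence $f(x,a,a)=0$, so $\prec$ is irreflexive; and one cannot have both $b\prec a$ and $a\prec b$, since $f(x,a,b)>0$ and $f(x,b,a)>0$ contradict $f(x,a,b)=-f(x,b,a)$, so $\prec$ is asymmetric. Transitivity of $f$ (as stated, for strict inequalities) says exactly that $\prec$ is transitive: $c\prec b$ and $b\prec a$ give $f(x,a,b)>0$ and $f(x,b,c)>0$, hence $f(x,a,c)>0$, i.e.\ $c\prec a$. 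Thus $(\+A,\prec)$ is a strict partial order on a finite nonempty set.

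Next I would invoke the standard fact that a finite nonempty strict partial order has a maximal element, via a termination argument: pick any $a_1\in\+A$; if it is not maximal, there is $a_2$ with $a_1\prec a_2$; continuing in this way, transitivity forces the chain $a_1\prec a_2\prec\cdots$ to consist of pairwise distinct elements (a repeat would yield $a\prec a$, contradicting irreflexivity), so it cannot exceed length $A$ and must terminate at some maximal $a^\star$. By definition of maximality there is no $b\in\+A$ with $a^\star\prec b$, i.e.\ no $b$ with $f(x,b,a^\star)>0$; equivalently $f(x,b,a^\star)\le 0$ for every $b$, and anti-symmetry then gives $f(x,a^\star,b)=-f(x,b,a^\star)\ge 0$ for every $b$. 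Setting $\pi_f(x):=a^\star$ proves the claim, with ties broken by, say, always choosing the least index among such arms so that the selection is consistent as required by the footnote.

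The only point needing care --- and the reason I would route everything through the strict relation $\prec$ rather than directly through the inequality $f(x,a,b)\ge 0$ --- is that \Cref{asm:properties} postulates transitivity only for strict preferences, so one might worry about ``ties'' (pairs with $f(x,a,b)=0$) propagating badly. But the argument above never chains non-strict inequalities, and the passage from ``maximal'' to ``weakly beats everyone'' is done in a single step using only anti-symmetry, so ties cause no difficulty. I therefore expect no genuine obstacle: the lemma is essentially the observation that a transitive, anti-symmetric comparison on finitely many arms always admits a Condorcet-type winner.
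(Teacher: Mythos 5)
Your proof is correct and is essentially the same argument as the paper's, just run in the contrapositive direction: you build an increasing chain in the strict preference relation and argue it must terminate because finiteness plus transitivity/irreflexivity forbids repeats, whereas the paper assumes no maximal arm exists and derives a cycle that contradicts transitivity. Both hinge on the same observation that a transitive, anti-symmetric relation on a finite set admits no cycles and hence a maximal element, so there is no substantive difference in approach.
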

The learner's goal is to minimize the regret while minimizing the number of queries, which are defined as:
\begin{align*}
		\@{Regret}_T^{\@{CB}}\coloneqq\sum_{t=1}^{T}\big(
		f^\star(x_t,\pi_{f^\star}(x_t),a_t)+
				f^\star(x_t,\pi_{f^\star}(x_t),b_t)\big)
            ,\quad
		\@{Queries}_T^{\@{CB}}\coloneqq\sum_{t=1}^T Z_t.
\end{align*}

It is worth noting that when $f^\star$ is the difference in rewards (as in \Cref{lem:ex-r-r}), the regret defined above reduces to the standard regret of a contextual bandit.
We also remark that our feedback generation generalizes that of \citet{saha2022efficient} in that we assume an additional link function $\phi$, while they assume the feedback is sampled from $\Pr(y=1\given x,a,b)=(P_t[a_t,b_t]+1)/2$, which is captured in our setting (see \Cref{ex:sq-loss}). However, \cite{saha2022efficient} do not assume transitivity.

\subsection{Imitation Learning with Preference-Based Feedback}

In our imitation learning setup, we consider that the learner operates in a finite-horizon Markov decision process (MDP), which is a tuple $M(\+X,\+A,r,P,H)$ where $\+X$ is the state space, $\+A$ is the action space, $P$ is the transition kernel, $r:\+X\times\+A\rightarrow[0,1]$ is the reward function, and $H$ is the length of each episode. The interaction between the learner and the environment proceeds as follows: at each episode $t\in[T]$, the learner receives an initial state $x_{t,0}$ which could be chosen adversarially. Then, the learner interacts with the environment for $H$ steps. At each step $h$, the learner first decides whether to make a query. If making a query, the learner needs to select a pair of actions $(a_{t,h},b_{t,h})\in\+A\times\+A$, upon which a feedback $y_{t,h}\in\{-1,1\}$ is revealed to the learner regarding which action is preferred from the expert's perspective. Here the feedback is sampled according to
\begin{align*}
\Pr(a_{t,h}\text{ is preferred to }b_{t,h}\given x_{t,h}, h)
\coloneqq
\Pr( y_{t,h} = 1 \given x_{t,h}, a_{t,h}, b_{t,h}, h )
=
\phi\big(f_h^\star(x,a_{t,h},b_{t,h})\big).
\end{align*}
Irrespective of whether the learner made a query, it then picks a single action from $a_{t,h},b_{t,h}$ and transit to the next step (our algorithm will just pick an action uniformly at random from $a_{t,h},b_{t,h}$). After $H$ steps, the next episode starts. Let $Z_{t,h}\in\{0,1\}$ indicate whether the learner decided to query at step \(h\) in episode $t$. We assume that the function class $\+F$ is a product of $H$ classes, i.e., $\+F=\+F_0\times\cdots\+F_{H-1}$ where, for each $h$, we use $\+F_h=\{f:\+X\times\+A\times\+A\rightarrow[-1,1]\}$ to model $f^\star_h$ and assume that $\+F_h$ satisfies \Cref{asm:properties}. 

A policy is a mapping $\pi:\+X\rightarrow\Delta(\+A)$. For a policy $\pi$, the state value function for a state $x$ at step $h$ is defined as $V^\pi_h(x)\coloneqq\E[\sum_{i=h}^{H-1} r_i\given x_h=x]$ and the state-action value function for a state-action pair $(x,a)$ is $Q^\pi_h(x,a)\coloneqq\E[\sum_{i=h}^{H-1} r_i\given x_h=x,a_h=a]$, where the expectations are taken w.r.t.~the trajectories sampled by $\pi$ in the underlying MDP. 

\asedit{In the imitation learning setting, we assume that the expert (who gives the preference-based feedback) is equipped with a markovian policy \(\pi_e\), and that the preference of the expert is dependent on the reward-to-go under \(\pi_e\) (i.e. on a state \(x\), actions with higher values of \(Q^{\pi_e}(s, a)\) will be preferred by the expert). Formalizing this intuition, we assume that $f^\star_h$ is defined such that as $f_h^\star(x,a,b)\coloneqq Q_h^{\pi_e}(x,a)-Q_h^{\pi_e}(x,b)$.  The goal of the learner is still to minimize the regret and number of queries:} 

\begin{align*}
		\@{Regret}_T^{\@{IL}}
		\coloneqq\sum_{t=1}^T \big(V^{\pi_e}_0(x_{t,0})-V^{\pi_t}_0(x_{t,0})\big),\quad
		\@{Queries}_T^{\@{IL}}\coloneqq\sum_{t=1}^T\sum_{h=0}^{H-1} Z_{t,h}.
\end{align*}
Here $\pi_t$ is the strategy the learner uses to select actions at episode $t$.

\subsection{Link Function and Online Regression Oracle}
Following the standard practice in the literature \citep{agarwal2013selective}, we assume $\phi$ is the derivative of some $\alpha$-strongly convex function (see \Cref{def:strong-conv}) $\Phi:[-1,1]\rightarrow\=R$ and define the associated loss function as $\ell_\phi(d,y)=\Phi(d)-d(y+1)/2$. Additionally, in line with prior works in the literature \citep{foster2020instance,foster2020beyond, simchi2022bypassing,foster2018practical,sekhari2023selective}, our algorithm utilizes an online regression oracle, which is assumed to have a sublinear regret guarantee w.r.t.~\(\+F\) on arbitrary data sequences. 
\begin{assumption}\label{asm:oracle-regret}
We assume the learner has access to an online regression oracle pertaining to the loss $\ell_\phi$ such that for any sequence $\{(x_1,a_1,b_1,y_1),\dots,(x_T,a_T,b_T,y_T)\}$ where the label $y_t$ is generated by $y_t\sim\phi(f^\star(x_t,a_t,b_t))$, we have
$$
	\sum_{t=1}^T\ell_\phi\big(f_t(x_t,a_t,b_t),y_t\big)-\inf_{f\in\+F}\ell_\phi\big(f(x_t,a_t,b_t),y_t\big)\leq\Upsilon(\+F,T)
$$
for some $\Upsilon(\+F,T)$ that grows sublinearly with respect to $T$.\footnote{The online regression oracle updates as follows: in each iteration, after seeing $x_t,a_t,b_t$, it proposes a decision $f_t$, then $y_t$ is revealed and the online regression oracle incurs loss $\ell_\phi(f_t(x_t,a_t,b_t), y_t)$.} For notational simplicity, whenever clear from the context, we define  $\Upsilon\coloneqq\Upsilon(\+F,T)$.  
\end{assumption}
Here $\Upsilon$ represents the regret upper bound and is typically of logarithmic order in $T$ or the cardinality of the function class $\+F$ in many cases (here we drop the dependence on $T$ in notation for simplicity). We provide a few examples below: \looseness=-1
\begin{example}[Squared loss]\label{ex:sq-loss}
If we consider $\Phi(d)=d^2/4+d/2+1/4$, which is $1/4$-strongly convex, then we obtain $\phi(d)=(d+1)/2$ and $\ell_\phi(d,y)=(d-y)^2/4$, thereby recovering the squared loss, which has been widely studied in prior works. For example, \citet{rakhlin2014online} characterized the minimax rates for online square loss regression in terms of the offset sequential Rademacher complexity, resulting in favorable bounds for the regret. Specifically, we have $\Upsilon= O(\log|\+F|)$ assuming the function class $\+F$ is finite, and $\Upsilon= O(d\log(T))$ assuming $\+F$ is a $d$-dimensional linear class. We also kindly refer the readers to \citet{krishnamurthy2017active,foster2018practical} for efficient implementations.
\end{example}
\begin{example}[Logistic loss]\label{ex:log-loss}
When $\Phi(d)=\log(1+\exp(d))$ which is strongly convex at $[-1,1]$, we have $\phi(d)=1/(1+\exp(-d))$ and $\ell_\phi(d,y)=\log(1+\exp(-yd))$. Thus, we recover the logistic regression loss, which allows us to use online logistic regression and achieve $\Upsilon=O(\log|\+F|)$ assuming finite $\+F$. There have been numerous endeavors in minimizing the log loss, such as \citet{foster2018logistic} and \citet[Chapter 9]{cesa2006prediction}.
\end{example}

\section{Contextual Bandits with Preference-Based Active Queries}

\begin{algorithm}[t]
\begin{algorithmic}[1]
   \caption{Active preference qUeRy fOR contextual bAndits (AURORA)}
   \label{alg:cb}
	\REQUIRE Function class $\+F$, confidence parameter $\beta=\frac{4\Upsilon}{\alpha}+\frac{16+24\alpha}{\alpha^2}\log\big(4\delta^{-1}\log(T)\big)$.
	\STATE Online regression oracle produces $f_1$.
   	\FOR{$t=1,2,\dots,T$}
   		\STATE Learner receives context $x_t$, and computes the version space 
   		\begin{align*}
   			\+F_t\gets\Bigg\{f\in\+F:\sum_{s=1}^{t-1} Z_s\Big(f(x_s,a_s,b_s)-f_s(x_s,a_s,b_s)\Big)^2\leq \beta\Bigg\}.
   		\end{align*}
   		and the candidate arm set $\+A_t\gets\{\pi_f(x_t):\forall f\in\+F_t\}$.
   		\STATE Learner decides whether to query $Z_t\gets\indic\{|\+A_t|>1\}$.
   		\IF{$Z_t=1$}  
   			\STATE Set $w_t\gets\sup_{a,b\in\+A_t}\sup_{f,f'\in\+F_t} f(x_t,a,b)-f'
   			(x_t,a,b)$, and $\lambda_t\gets\indic\{\sum_{s=1}^{t-1}Z_s w_s\geq\sqrt{AT/\beta}\}$.
   			\IF{$\lambda_t=0$}
   				\STATE $p_t\gets\@{Uniform}(\+A_t)$.
   			\ELSE
   				\STATE $\gamma_t\gets\sqrt{AT/\beta}$.
   				\STATE Let $p_t$ be a solution of $\max_{a\in\+A_t}\sum_b f_t(x_t,a,b)p_t(b)+\frac{2}{\gamma_t p_t(a)}\leq\frac{5A}{\gamma_t}$.\alglinelabel{line:compute-p}
   			\ENDIF
   			\STATE Learner samples $a_t,b_t\sim p_t$ independently and receives the feedback $y_t$. 
			\STATE Learner feeds $((x_{t},a_{t},b_{t}),y_{t})$ to the online regression oracle which returns $f_{t+1}$.
   		\ELSE
   			\STATE Learner sets $a_t$ and $b_t$ to be the only action in $\+A_t$, and plays them. \alglinelabel{line:no-query-play-only-arm}
                \STATE $f_{t+1}\gets f_t$.
   		\ENDIF	
   	\ENDFOR 
\end{algorithmic} 
\end{algorithm}

We first present our algorithm, named \textsc{AURORA}, for contextual dueling bandits, as shown in \Cref{alg:cb}. At each round $t\in[T]$, the online regression oracle outputs a predictor $f_t$, using which the learner constructs a version space $\+F_t$ containing all functions close to past predictors on observed data. Here, the threshold 
$	
\beta$ set to $4\Upsilon/\alpha+(16+24\alpha)\log\big(4\delta^{-1}\log(T)\big)/\alpha^2
$ ensures that $f^\star\in\+F_t$ for any $t\in[T]$ with probability at least $1-\delta$ (\Cref{lem:pointwise-bound}). Thus, $\+A_t$ is non-empty for all \(t \in [T]\) and correspondingly  Line~\ref{line:no-query-play-only-arm} is well defined. The learner then forms a candidate arm set $\+A_t$ consisting of greedy arms induced by all functions in the version space. When $|\+A_t|=1$, the only arm in the set is the optimal arm since $f^\star\in\+F_t$, and thus no query is needed ($Z_t=0$). However, when $|\+A_t|>1$, any arm in $\+A_t$ could potentially be the optimal arm, and thus the learner needs to make a comparison query to obtain more information. 

Next, we explain the strategy used by the learner for making a query. Firstly, the learner computes $w_t$, which represents the ``width'' of the version space. Specifically, $w_t$ overestimates the instantaneous regret for playing any arm in $\+A_t$ (\Cref{lem:regret-bounded-by-w}). Then, the learner defines $\lambda_t$ that indicates if the estimated cumulative regret  $\sum_{s=1}^{t-1}Z_w w_s$ has exceeded $\sqrt{AT/\beta}$. Note that $Z_t$ is multiplied to $w_t$ since no regret is incurred when $Z_t=0$. The strategy to choose the actions (to be queried) for different values of $\lambda_t$ are as follows: 

\begin{itemize}[leftmargin=*]
	\item If $\lambda_t=0$, the cumulative reward has not yet exceeded $\sqrt{AT/\beta}=O(\sqrt{T})$, so the learner will explore as much as possible by uniform sampling from $\+A_t$.
	\item If $\lambda_t=1$, the regret may have reached $O(\sqrt{T})$, and therefore the learner uses a technique similar to inverse gap weighting (IGW), as inspired by \cite{saha2022efficient}, to achieve a better balance between exploration and exploitation. Specifically, the learner solves the  convex program\footnote{It is convex as it can be written as $|\+A_t|$ convex constraints: $\sum_b f_t(x_t,a,b)p_t(b)+\frac{2}{\gamma_t p_t(a)}\leq\frac{5A}{\gamma_t},\forall a\in\+A_t$.} in Line~\ref{line:compute-p}, which is feasible and whose solution $p_t$ satisfies (see \Cref{lem:igw})
	\begin{align*}
		\E_{a\sim p_t}\Big[f^\star(x_t,\pi_{f^\star}(x),a)\Big]
		=
		O\left(
		\gamma_t\E_{a,b\sim p_t}\Big[\big(f_t(x_t,a,b)-f^\star(x_t,a,b)\big)^2\Big]+\frac{A}{\gamma_t}\numberthis\label{eq:informal-igw}
		\right).
	\end{align*}
As a result of the above relation, we note that one can convert the instantaneous regret to the point-wise error between the predictor $f_t$ and the truth $f^\star$ plus an additive $A/\gamma_t$. This allows us to bound the cumulative point-wise error by the regret of the online regression oracle. In the special case, when there exists a ``reward function'' $r:\+X\times\+A\rightarrow[0,1]$ for each $f\in\+F$ such that $f(x,a,b)=r(x,a)-r(x,b)$ (\Cref{lem:ex-r-r}), the solution $p_t$ can be directly written as 
	\begin{align*}
		p_t(a)=
		\begin{cases} 
			\frac{1}{A+\gamma_t\big(r_t(x_t,\pi_{f_t}(x_t))-r_t(x_t,a)\big)} & a\neq\pi_{f_t}(x_t)\\
			1-\sum_{a'\neq\pi_{f_t}(x_t)}p_t(a') & a=\pi_{f_t}(x_t)
		\end{cases},
	\end{align*}
	where $r_t$ is the reward function associated with $f_t$, i.e., $f_t(x,a,b)=r_t(x,a)-r_t(x,b)$.
	This is the standard IGW exploration strategy \citep{foster2020beyond} and leads to the same guarantee as \eqref{eq:informal-igw} (see \Cref{lem:igw-r-version}).
\end{itemize}
\subsection{Theoretical Analysis}

Towards the theoretical guarantees of \Cref{alg:cb}, we employ two quantities to characterize a contextual bandit instance: the uniform gap and the eluder dimension, which are introduced below.
\begin{assumption}[Uniform gap]\label{asm:uniform-gap}
We assume the optimal arm $\pi_{f^\star}(x)$ induced by $f^\star$ under any context $x\in\+X$ is unique. Further, we assume a uniform gap $\Delta:=\inf_x \inf_{a\neq\pi_{f^\star}(x)} f^\star(x,\pi_{f^\star}(x),a)>0$.
\end{assumption}

We note that the existence of a uniform gap is a standard assumption in the literature of contextual bandits \citep{dani2008stochastic,abbasi2011improved,audibert2010best,garivier2019explore,foster2020beyond,foster2020instance}. Next, we introduce the eluder dimension \citep{russo2013eluder} and begin by defining ``$\epsilon$-dependence''.
\begin{definition}[$\epsilon$-dependence]
Let $\+G\subseteq\+X\rightarrow\=R$ be any function class. We say an element $x\in\+X$ is $\epsilon$-dependent on $\{x_1,x_2,\dots,x_n\}\subseteq\+X$ with respect to $\+G$ if any pair of functions $g,g'\in\+G$ satisfying $\sum_{i=1}^n(g(x_i)-g'(x_i))\leq\epsilon^2$ also satisfies $g(x)-g'(x)\leq\epsilon$. Otherwise, we say $x$ is $\epsilon$-independent of $\{x_1,x_2,\dots,x_n\}$.
\end{definition}

\begin{definition}[Eluder dimension]
	The $\epsilon$-eluder dimension of a function class $\+G\subseteq\+X\rightarrow\=R$, denoted by $\@{dim}_E(\+G,\epsilon)$, is the length $d$ of the longest sequence of elements in $\+X$ satisfying that there exists some $\epsilon'\geq\epsilon$ such that every element in the sequence is $\epsilon'$-independent of its predecessors.
\end{definition}
Eluder dimension is a standard complexity measure for function classes and has been used in the literature of bandits and RL extensively \citep{chen2022human,osband2014model,wang2020reinforcement,foster2020instance,wen2013efficient,jain2015learning,ayoub2020model,ishfaq2021randomized,huang2021towards}. Examples where the eluder dimension is small include linear functions, generalized linear models, and functions in Reproducing Kernel Hilbert Space (RKHS).  

Given these quantities, we are ready to state our main results. The proofs are provided in \Cref{sec:missing-pf}.

\begin{theorem}\label{thm:cb-regret}
Under \Cref{asm:properties,asm:oracle-regret,asm:uniform-gap}, \Cref{alg:cb} guarantees the following upper bounds of the regret and the number of queries:
\begin{align*}
	&\@{Regret}_T^{\@{CB}}=
	\widetilde{O}\left(
	\min\left\{
	\sqrt{AT\beta}
	,\;
	\frac{A^2\beta^2\@{dim}_E\left(\+F,\Delta\right)}{\Delta}
	\right\}
	\right),\\
	&\@{Queries}_T^{\@{CB}}=
	\widetilde{O}\left(
	\min\left\{T
	,\;
	\frac{A^3\beta^3 \@{dim}^2_E\left(\+F,\Delta\right)}{\Delta^2}
	\right\}
	\right)
\end{align*}
with probability at least $1-\delta$. We recall that $\beta=O(\alpha^{-1}\Upsilon+\alpha^{-2}\log(\delta^{-1}\log(T)))$, and $\alpha$ denotes the coefficient of strong convexity of $\Phi$. We have hidden logarithmic terms in the upper bounds for brevity. 
\end{theorem}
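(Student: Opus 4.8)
The plan is to establish the two bounds (regret and queries) separately, and within each, the two terms in the $\min$ separately, then combine.

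\textbf{Step 1: The high-probability event.}
First I would invoke \Cref{lem:pointwise-bound} (referenced in the text) to conclude that with probability at least $1-\delta$, $f^\star \in \+F_t$ for every $t\in[T]$; call this event $\+E$ and condition on it for the rest of the proof. On $\+E$, $\pi_{f^\star}(x_t)\in\+A_t$ always, so the candidate set always contains the optimal arm, and when $|\+A_t|=1$ the algorithm plays the optimal arm and incurs zero instantaneous regret. Hence regret is only accrued on rounds with $Z_t=1$, and on such rounds the width $w_t$ upper bounds the instantaneous regret of playing any arm in $\+A_t$ (this is \Cref{lem:regret-bounded-by-w}, referenced in the text as the statement that $w_t$ overestimates the instantaneous regret).

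\textbf{Step 2: The $\sqrt{AT\beta}$ regret bound.}
Here I would split the queried rounds by the value of $\lambda_t$. While $\lambda_t=0$, by definition $\sum_{s\le t-1} Z_s w_s < \sqrt{AT/\beta}$, and since each instantaneous regret is at most $2w_t$ (two arms $a_t,b_t$) plus the fact that $w_t\le$ a constant, the total regret from $\lambda_t=0$ rounds is $O(\sqrt{AT/\beta})\cdot O(1)$; being careful with the last such round, this is $\widetilde O(\sqrt{AT\beta})$ (the extra $\beta$ is slack I'd absorb). For the $\lambda_t=1$ rounds, I would use the inverse-gap-weighting guarantee \eqref{eq:informal-igw}: the expected instantaneous regret on such a round is $O(\gamma_t \E_{a,b\sim p_t}[(f_t-f^\star)^2] + A/\gamma_t)$ with $\gamma_t=\sqrt{AT/\beta}$. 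Summing: the $A/\gamma_t$ terms contribute $O(T\cdot A/\sqrt{AT/\beta}) = O(\sqrt{AT\beta})$; the $\gamma_t\E[(f_t-f^\star)^2]$ terms are controlled by noting (standard argument, cf.\ \citealp{sekhari2023selective,foster2020beyond}) that $\sum_t Z_t \E_{a,b\sim p_t}[(f_t-f^\star)^2] = O(\Upsilon/\alpha)=O(\beta)$ via the online regression oracle regret and strong convexity of $\Phi$, together with a martingale (Freedman/Azuma) concentration to pass from conditional expectations to realized samples. That gives $O(\gamma_t \beta) = O(\sqrt{AT\beta}\cdot\beta)$, i.e.\ $\widetilde O(\sqrt{AT\beta})$ after absorbing; and finally this requires a concentration step to convert the bound on $\E[\text{Regret}]$ (over the randomness of $a_t,b_t\sim p_t$) into a high-probability bound, again via Freedman's inequality since the per-round regret is bounded. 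Union-bounding with $\+E$ costs another $\delta$, absorbed into the $\widetilde O$.

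\textbf{Step 3: The eluder-based (instance-dependent) bounds.}
For the query bound, the key is to bound the number of rounds with $Z_t=1$, equivalently $|\+A_t|>1$, which forces $w_t \ge \Delta$ (since two distinct greedy arms in $\+A_t$ means some $f,f'\in\+F_t$ disagree on the optimal arm, and by the uniform gap and realizability $w_t$ must exceed $\Delta$ — this is where \Cref{asm:uniform-gap} enters). A standard eluder-dimension pigeonhole argument (as in \citealp{russo2013eluder}) then shows that the number of rounds where the width w.r.t.\ the version space $\+F_t$ exceeds $\Delta$ is $O(\beta \cdot \@{dim}_E(\+F,\Delta)/\Delta^2)$ up to the $A$-dependence coming from the action-pair structure (each queried round contributes a constraint $Z_s(f(x_s,a_s,b_s)-f_s(\cdot))^2\le\beta$ to the version space, and large width means a near-$\epsilon$-independent point). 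Tracking the $A$ and $\beta$ powers gives $\widetilde O(A^3\beta^3\@{dim}_E^2(\+F,\Delta)/\Delta^2)$; combined with the trivial bound $T$ this yields the stated query bound. For the instance-dependent regret bound: regret is at most $\sum_{t: Z_t=1}(\text{instantaneous regret}) \le \sum_{t:Z_t=1} 2w_t$; each such $w_t$ is at most a constant, and on $\lambda_t=1$ rounds I can again use \eqref{eq:informal-igw}, but now bound the number of queried rounds by the query bound from above, giving $\text{Regret} = \widetilde O(\text{Queries}\cdot 1) $ refined to $\widetilde O(A^2\beta^2\@{dim}_E(\+F,\Delta)/\Delta)$ by combining the eluder pigeonhole with a Cauchy–Schwarz over the widths $\sum w_t \le \sqrt{(\#\text{queries})\sum w_t^2}$ and $\sum_t Z_t w_t^2 = \widetilde O(\beta A \@{dim}_E(\+F,\Delta))$ from the eluder argument. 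Taking the minimum with the bound from Step 2 completes the regret claim.

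\textbf{Main obstacle.}
The delicate part is the $\lambda_t=1$ analysis: making the inverse-gap-weighting argument work through a general link function $\phi$ and general (non-reward-difference) $\+F$ — i.e.\ proving feasibility of the convex program in Line~\ref{line:compute-p} and the consequence \eqref{eq:informal-igw} — and then the bookkeeping to show the two regimes ($\lambda_t=0$ versus $\lambda_t=1$, and the switch between them) compose to give a single $\widetilde O(\sqrt{AT\beta})$ without blowing up constants. Equally delicate is carefully propagating the $A$ and $\beta$ powers through the eluder pigeonhole so that the exponents match the theorem statement exactly; and the martingale concentration steps needed to move between conditional expectations and realized quantities while keeping everything inside one $1-\delta$ event. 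I expect the link-function-general IGW feasibility/guarantee to be the true crux, with the eluder accounting being lengthy but routine.
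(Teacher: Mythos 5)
Your Step 1 and Step 2 (worst-case $\sqrt{AT\beta}$ regret via the $\lambda_t$ phase split, IGW on the $\lambda_t=1$ phase, and Freedman/online-oracle control of the cumulative squared error) match the paper's argument closely; the paper formalizes these as \Cref{lem:pointwise-bound}, \Cref{lem:igw}, \Cref{lem:freedman}, and \Cref{lem:worst-case-regret-ub}. Also, the IGW feasibility you flag as ``the true crux'' is in fact imported wholesale from \citet[Lemma 3]{saha2022efficient}; it is not where the work is.

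The gap is in Step 3, and it is the actual crux of the paper. Both the eluder control of $\sum_t Z_t w_t$ and the query count bound require relating the \emph{worst-case} width $w_t = \sup_{a,b\in\+A_t}\sup_{f,f'\in\+F_t}(f-f')(x_t,a,b)$ to the width at the \emph{realized} pair $(a_t,b_t)$ that was fed to the oracle and that shows up in the version-space constraints. The conversion the paper uses is $\sup_{a,b}\{\cdot\} \le \sum_{a,b}\{\cdot\} \le A^2\,\E_{a,b\sim p_t}\{\cdot\}$, and this last step is valid \emph{only when $p_t$ is uniform}, i.e.\ only when $\lambda_t=0$. On $\lambda_t=1$ rounds, $p_t$ is the IGW distribution, which concentrates on the greedy arm and can be nearly degenerate, so $\sup_{a,b}/\E_{a,b\sim p_t}$ can be far worse than $A^2$; your ``standard eluder pigeonhole'' does not go through. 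The paper resolves this with a dichotomy (\Cref{lem:lambda-all-0}, fed by \Cref{lem:sum-zw}): if the eluder quantity $O(A^2\beta\,\@{dim}_E(\+F,\Delta)\log(\cdot)/\Delta)$ is smaller than the switching threshold $\sqrt{AT/\beta}$, then the cumulative width never crosses the threshold, so $\lambda_t\equiv 0$ and $p_t$ is uniform throughout, and the eluder accounting is clean; otherwise $\sqrt{AT/\beta}$ is already dominated by the eluder quantity and one simply loosens the worst-case bounds $\sqrt{AT\beta}$ and $T$. The extra powers $A^2\beta^2$ and $A^3\beta^3$ in the theorem come precisely from this second branch, not from ``tracking powers through the pigeonhole'' in a single unified argument as your sketch suggests. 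Your Cauchy--Schwarz idea for $\sum w_t$ is a plausible alternative to the paper's binning/peeling proof of \Cref{lem:sum-zw}, but it does not sidestep the dichotomy: you would still need to restrict to $\lambda_t=0$ rounds to get $\sum_t Z_t w_t^2$ under control, and the missing lemma is exactly the one showing $\lambda_t=1$ rounds need not be accounted for eluder-wise because in that regime $T$ itself is already bounded.
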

When the loss $\ell_\phi$ is either square loss or logistic loss (\Cref{ex:sq-loss,ex:log-loss}), the parameter $\beta$ is logarithmic in $T$. In such cases, the regret is $\~O(\min\{\sqrt{T},\@{dim}_E\left(\+F,\Delta\right)/\Delta\})$ and the number of queries is $\~O(\min\{T,\@{dim}^2_E(\+F,\Delta)/\Delta^2\})$, ignoring $A$ and logarithmic terms. Both consist of two components: the worst-case and the instance-dependent upper bounds. The worst-case bound provides a guarantee under all circumstances, while the instance-dependent one may significantly improve the upper bound when the underlying problem is well-behaved (i.e., has a small eluder dimension and a large gap).

\paragraph{Intuition of proofs.} We next provide intuition for why our algorithm has the aforementioned theoretical guarantees. First, we observe that from the definition of $\lambda_t$, the left term inside the indicator is non-decreasing, which allows us to divide rounds into two phases. In the first phase, $\lambda_t$ is always 0, and then at some point, it changes to 1 and remains 1 for the rest rounds. After realizing this, we first explain the intuition of the worst-case regret. In the first phase, as $w_t$ is an overestimate of the instantaneous regret (see \Cref{lem:regret-bounded-by-w}), the accumulated regret in this phase cannot exceed $O(\sqrt{T})$. In the second phase, we adapt the analysis of IGW to this scenario to obtain an $O(\sqrt{T})$ upper bound. A similar technique has been used in ~\cite{saha2022efficient,foster2020instance}. As the regret in both phases is at most $O(\sqrt{T})$, the total regret cannot exceed $O(\sqrt{T})$. Next, we explain the intuition of instance-dependent regret. Due to the existence of a uniform gap $\Delta$, we can first prove that as long as $|\+A_t|>1$, we must have $w_t\geq\Delta$ (see \Cref{lem:width-lower-bound}). This means that for all rounds that may incur regret, the corresponding width is at least $\Delta$. However, this cannot happen too many times as this frequency is bounded by the eluder dimension, which leads to an instance-dependent regret upper bound. Leveraging a similar technique, we can also obtain an upper bound on the number of queries. \looseness=-1

\paragraph{Comparion to \textsc{MinMaxDB} \citep{saha2022efficient}.}
In this prior work, the authors assume that $\Pr(y=1\given x,a,b)=(f^\star(x,a,b)+1)/2$, which is a specification of our feedback model (\Cref{ex:sq-loss}). 
While our worst-case regret bound matches their regret bound, our paper improves upon their results by having an additional instance-dependent regret bound that depends on the eluder dimension and gap. Furthermore, we also provide bounds on the query complexity which could be small for benign instances while \textsc{MinMaxDB} simply queries on every round. 

\paragraph{Comparion to \textsc{AdaCB} \citep{foster2020instance}.}
Our method shares some similarities with \citet{foster2020instance}, especially in terms of theoretical results, but differs in two aspects: (1) they assume regular contextual bandits where the learner observes the reward directly, while we assume preference feedback, and (2) they assume a stochastic setting where contexts are drawn i.i.d., but we assume that the context is adversarially chosen. While these two settings may not be directly comparable, it should be noted that \citep{foster2020instance} do not aim to minimize query complexity. 

\paragraph{Lower bounds.}
To understand whether our algorithm attains tight upper bounds, we provide the following lower bound which follows from a reduction from regular multi-armed bandits to contextual dueling bandits.  
\begin{theorem}[Lower bounds]\label{thm:lower-bound}
The following two claims hold:
\begin{enumerate}
\item[(1)] For any algorithm, there exists an instance that leads to $\@{Regret}^{\@{CB}}_T=\Omega(\sqrt{AT})$;
\item[(2)] For any algorithm achieving a worse-case expected regret upper bound in the form of $\E[\@{Regret}^{\@{CB}}_T]= O(\sqrt{AT})$, there exists an instance with gap $\Delta=\sqrt{A/T}$ that results in $\E[\@{Regret}^{\@{CB}}_T]=\Omega(A/\Delta)$ and $\E[\@{Queries}^{\@{CB}}_T]=\Omega(A/\Delta^2)=\Omega(T)$.
\end{enumerate}
\end{theorem}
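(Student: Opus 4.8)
\textbf{Proof proposal for \Cref{thm:lower-bound}.}

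The plan is to establish both claims via a reduction from standard multi-armed bandits (MAB) to contextual dueling bandits, exploiting \Cref{lem:ex-r-r}: take a reward function $r^\star:\+A\to[0,1]$ (no context needed, or a single fixed context) and set $f^\star(a,b) = r^\star(a)-r^\star(b)$ with the link function $\phi(d)=(d+1)/2$ from \Cref{ex:sq-loss}, so that $\Pr(y=1\mid a,b) = (r^\star(a)-r^\star(b)+1)/2$. The key observation is that an algorithm for contextual dueling bandits, which on each round plays a pair $(a_t,b_t)$ and (optionally) observes a comparison bit, can be simulated using a standard MAB instance: whenever the dueling algorithm wants the comparison feedback for $(a_t,b_t)$, we pull arms $a_t$ and $b_t$ in the MAB (costing two pulls), observe their Bernoulli-ish rewards, and synthesize a comparison bit whose law matches $\phi(f^\star(a_t,b_t))$ — e.g. by a simple coupling, or by noting one extra independent draw suffices to produce a $\mathrm{Bernoulli}((r^\star(a_t)-r^\star(b_t)+1)/2)$ variable from the two reward samples. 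The dueling regret $\sum_t (f^\star(\pi_{f^\star},a_t) + f^\star(\pi_{f^\star},b_t)) = \sum_t (r^\star(a^\star)-r^\star(a_t)) + (r^\star(a^\star)-r^\star(b_t))$ is then exactly (twice, in expectation over the pair) the MAB regret of the simulated pulls.

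For claim (1), I would invoke the classical $\Omega(\sqrt{AT})$ minimax lower bound for MAB (e.g. a hard instance with one arm having reward $1/2 + \varepsilon$ and the rest $1/2$, with $\varepsilon \asymp \sqrt{A/T}$, analyzed via a standard Pinsker/KL or Le Cam argument across the $A$ possible identities of the good arm). Any contextual dueling bandits algorithm, run through the above simulation on this instance, would play pairs whose cumulative dueling regret is $\Omega(\sqrt{AT})$; since the simulation is valid regardless of whether the learner queries, this forces $\@{Regret}^{\@{CB}}_T = \Omega(\sqrt{AT})$ on that instance. Here one must be slightly careful that the dueling learner may \emph{decline} to query on some rounds — but it still commits to a pair $(a_t,b_t)$ and still incurs the corresponding regret, so the bound is unaffected; only the number of simulated pulls changes, which is irrelevant for claim (1).

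For claim (2), I would use the standard gap-dependent MAB lower bound: for the two-point family above with gap $\Delta$, any algorithm with worst-case regret $O(\sqrt{AT})$ must, on the instance where the good arm is hard to identify, either suffer $\Omega(A/\Delta)$ regret or pull suboptimal arms $\Omega(A/\Delta^2)$ times — this is the familiar tension formalized by the change-of-measure / KL argument (identifying the best of $A$ arms with confidence requires $\Omega(1/\Delta^2)$ pulls of each suboptimal arm, and $\sum_a 1/\Delta_a^2 \gtrsim A/\Delta^2$). Translating back: each queried round of the dueling algorithm corresponds to two MAB pulls, so $\@{Queries}^{\@{CB}}_T$ lower-bounds (up to a factor of $2$) the number of informative pulls, giving $\Omega(A/\Delta^2)$; and on non-queried rounds the learner gets no information, so it cannot do better than a learner that only uses queried rounds for learning — hence the regret is still $\Omega(A/\Delta)$ whenever $\@{Queries}^{\@{CB}}_T = o(A/\Delta^2)$. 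Plugging $\Delta = \sqrt{A/T}$ makes $A/\Delta^2 = T$, so $\@{Queries}^{\@{CB}}_T = \Omega(T)$ and $\@{Regret}^{\@{CB}}_T = \Omega(A/\Delta) = \Omega(\sqrt{AT})$, consistent with the worst-case rate.

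The main obstacle I anticipate is making the ``non-queried rounds carry no information'' step fully rigorous in the \emph{regret} statement of claim (2): one has to argue that an algorithm cannot trade a small number of queries for low regret, i.e. that the information-theoretic budget really is governed by $\@{Queries}^{\@{CB}}_T$ and not subtly augmented by the (feedback-free) action choices. The clean way is a coupling/simulation argument showing that the entire transcript the learner sees is a deterministic function of the comparison bits on queried rounds plus the learner's internal randomness, so a standard change-of-measure bound on the KL divergence between the two candidate environments scales with $\E[\@{Queries}^{\@{CB}}_T]$ times the per-query information $\asymp \Delta^2$; combining this with the requirement that the learner identify the good arm well enough to keep regret $o(A/\Delta)$ then forces $\E[\@{Queries}^{\@{CB}}_T] = \Omega(A/\Delta^2)$. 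The rest — the MAB minimax and gap-dependent lower bounds — is standard and can be cited rather than reproved.
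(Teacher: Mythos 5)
Your proposal matches the paper's proof in all essentials: the same MAB-to-dueling reduction with $\phi(d)=(d+1)/2$ and a simulated comparison bit from two reward pulls, the same appeal to the minimax $\Omega(\sqrt{AT})$ MAB bound for claim (1), and the same change-of-measure argument for claim (2) in which you correctly identify the crucial technical point — that the KL divergence between the two candidate environments must scale only with $\E[\@{Queries}^{\@{CB}}_T]\cdot\Delta^2$, not with $T$ — which the paper formalizes as \Cref{lem:kl-decompose} in a ``MAB with active queries'' setting and then packages into a regret/query trade-off (\Cref{lem:mab-tradeoff}). The paper also proves a slightly more general version (\Cref{thm:lower-bound-stronger}, parameterized by a worst-case rate $T^{1-\beta}$) and obtains the stated theorem at $\beta = 1/2$, but this does not change the argument.
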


By relating these lower bounds to \Cref{thm:cb-regret}, we conclude that our algorithm achieves a tight dependence on the gap $\Delta$ and $T$, up to logarithmic factors, in both the regret and query complexity upper bounds. Furthermore, as an additional contribution, we establish an alternative lower bound in Section \ref{sec:lb-2} by conditioning on the limit of regret, rather than the worst-case regret as assumed in Theorem \ref{thm:lower-bound}. %

\paragraph{Results without the uniform gap assumption.}
We highlight that \Cref{thm:cb-regret} can naturally extend to scenarios where a uniform gap does not exist (i.e., when \Cref{asm:uniform-gap} is not satisfied) without any modifications to the algorithm. The result is stated below, which is analogous to \Cref{thm:cb-regret}.
\begin{theorem}\label{thm:cb-general-regret}
Under \Cref{asm:properties,asm:oracle-regret}, \Cref{alg:cb} guarantees the following upper bounds of the regret and the number of queries:
\begin{align*}
	&\@{Regret}_T^{\@{CB}}=
	\widetilde{O}\left(
	\min\left\{
	\sqrt{AT\beta}
	,\,
	\min_{\epsilon>0}\left\{
	T_\epsilon\beta+\frac{A^2\beta^2\@{dim}_E\left(\+F,\epsilon\right)}{\epsilon}
	\right\}\right\}
	\right),\\
	&\@{Queries}_T^{\@{CB}}=
	\widetilde{O}\left(
	\min\left\{
	T
	,\,
	\min_{\epsilon>0}\left\{
	T_\epsilon^2\beta/A+\frac{A^3\beta^3 \@{dim}^2_E\left(\+F,\epsilon\right)}{\epsilon^2}
	\right\}	\right\}
	\right)
\end{align*}
with probability at least $1-\delta$. Here we define the gap of context $x$ as $\@{Gap}(x)\coloneqq\min_{a\neq\pi_{f^\star}(x)} f^\star(x,\pi_{f^\star}(x), a)$ and the number of rounds where contexts have small gap as $T_\epsilon\coloneqq\sum_{t=1}^T \indic\{\@{Gap}(x_t)\leq\epsilon\}$. We also recall that $\beta=O(\alpha^{-1}\Upsilon+\alpha^{-2}\log(\delta^{-1}\log(T)))$, and $\alpha$ denotes the coefficient of strong convexity of $\Phi$. %
\end{theorem}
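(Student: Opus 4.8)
The plan is to reduce \Cref{thm:cb-general-regret} to the machinery already built for \Cref{thm:cb-regret}, exploiting the fact that the uniform gap $\Delta$ enters that proof only through a single structural fact -- a lower bound on the width $w_t$ on rounds where a query is made -- and that this fact admits a context-wise refinement. Concretely, the first step is to revisit the proof of \Cref{lem:width-lower-bound}: whenever $Z_t=1$ the set $\+A_t$ contains $\pi_{f^\star}(x_t)$ (since $f^\star\in\+F_t$ by \Cref{lem:pointwise-bound}) together with some $a=\pi_f(x_t)$ for an $f\in\+F_t$ with $a\neq\pi_{f^\star}(x_t)$; then $f(x_t,a,\pi_{f^\star}(x_t))\ge0$ while, using anti-symmetry from \Cref{asm:properties}, $f^\star(x_t,a,\pi_{f^\star}(x_t))=-f^\star(x_t,\pi_{f^\star}(x_t),a)\le-\@{Gap}(x_t)$, and since $a,\pi_{f^\star}(x_t)\in\+A_t$ and $f,f^\star\in\+F_t$, the definition of $w_t$ gives $w_t\ge f(x_t,a,\pi_{f^\star}(x_t))-f^\star(x_t,a,\pi_{f^\star}(x_t))\ge\@{Gap}(x_t)$. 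Thus $Z_t=1$ forces $w_t\ge\@{Gap}(x_t)$, and contrapositively $w_t<\@{Gap}(x_t)$ forces $Z_t=0$ and hence zero instantaneous regret.

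Given this refined width lower bound, fix any $\epsilon>0$ and split $[T]=\+S_{\mathrm{small}}\cup\+S_{\mathrm{large}}$ with $\+S_{\mathrm{small}}=\{t:\@{Gap}(x_t)\le\epsilon\}$ and $\+S_{\mathrm{large}}=\{t:\@{Gap}(x_t)>\epsilon\}$, so $|\+S_{\mathrm{small}}|=T_\epsilon$. On $\+S_{\mathrm{large}}$, every round with $Z_t=1$ has $w_t\ge\@{Gap}(x_t)>\epsilon$, so the only rounds in $\+S_{\mathrm{large}}$ that incur regret or a query have width exceeding $\epsilon$; the count of such rounds over the whole horizon, and the regret they carry, are then controlled exactly as in the instance-dependent half of the proof of \Cref{thm:cb-regret}, with the threshold $\epsilon$ playing the role of $\Delta$ -- i.e.\ by combining the width-to-regret bound (\Cref{lem:regret-bounded-by-w}), the IGW relation \eqref{eq:informal-igw}, the online-regression guarantee (\Cref{asm:oracle-regret}), and the standard eluder-dimension counting of rounds with width above a threshold -- yielding a regret contribution $\widetilde O\!\left(A^2\beta^2\@{dim}_E(\+F,\epsilon)/\epsilon\right)$ and a query contribution $\widetilde O\!\left(A^3\beta^3\@{dim}^2_E(\+F,\epsilon)/\epsilon^2\right)$. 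On $\+S_{\mathrm{small}}$ the width lower bound $w_t\ge\@{Gap}(x_t)$ is too weak to feed into the eluder argument, so we bound the regret and the number of queries over these $T_\epsilon$ rounds directly -- each round's instantaneous regret is $O(1)$ and each round triggers at most one query -- which, after the same bookkeeping used elsewhere in the proof of \Cref{thm:cb-regret}, contributes the $T_\epsilon\beta$ and $T_\epsilon^2\beta/A$ terms. Summing the contributions of $\+S_{\mathrm{small}}$ and $\+S_{\mathrm{large}}$ and taking the minimum over $\epsilon>0$ produces the instance-dependent halves of the two stated bounds; the worst-case halves $\sqrt{AT\beta}$ and $T$ are inherited verbatim from \Cref{thm:cb-regret}, whose proof of those terms never invokes a gap. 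Everything is conditioned on the event that $f^\star\in\+F_t$ for all $t\in[T]$, which holds with probability at least $1-\delta$ by \Cref{lem:pointwise-bound}.

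I expect the main obstacle to be the second step: checking that the two-phase structure of \Cref{alg:cb} (the switch governed by $\lambda_t$) and the IGW / online-regression portions of the \Cref{thm:cb-regret} analysis truly go through when the uniform gap is replaced by the context-dependent quantity $\@{Gap}(x_t)$ and restricted to $\+S_{\mathrm{large}}$ -- in particular, ensuring that the eluder-dimension counting of rounds with $w_t>\epsilon$ is applied over the entire horizon rather than only over $\+S_{\mathrm{large}}$, and that the cross-terms introduced by the split are all absorbed into the advertised $T_\epsilon$-dependent terms. A secondary, purely cosmetic point is reconciling the precise exponents in the $T_\epsilon$ terms with the loose constants carried by the original argument; since both $T_\epsilon$-terms are dominated by the worst-case terms inside the outer minimum, this does not affect the statement.
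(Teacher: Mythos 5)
Your proposal follows essentially the same route as the paper's proof. The paper also proceeds by (i) observing that the width lower bound of \Cref{lem:width-lower-bound} is really context-wise ($Z_t=1$ forces $w_t\ge\@{Gap}(x_t)$, with $\Delta$ merely being its uniform infimum), (ii) splitting $\sum_t Z_tw_t$ into small-gap rounds (bounded trivially by $2T_\epsilon$ since $w_t\le 2$) and large-gap rounds (bounded via the eluder counting of \Cref{lem:w-eluder} with threshold $\epsilon$, exactly as in \Cref{lem:sum-zw}), (iii) noting that \Cref{lem:worst-case-regret-ub} never uses the gap and hence carries over verbatim, and (iv) concluding by minimizing over $\epsilon>0$.

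The one point worth flagging is your explanation of where the $T_\epsilon\beta$ and $T_\epsilon^2\beta/A$ terms come from. Bounding the small-gap rounds directly gives a $T_\epsilon$ contribution to regret and a $T_\epsilon$ contribution to queries, not $T_\epsilon\beta$ and $T_\epsilon^2\beta/A$. The inflated forms arise from the $\lambda_t$-switch case analysis that you flagged as the ``main obstacle'': the paper proves a context-dependent analogue of \Cref{lem:lambda-all-0} (their \Cref{lem:lambda-all-0-general}), showing that either the condition $2T_\epsilon+56A^2\beta\,\@{dim}_E(\+F,\epsilon)\log(2/(\delta\epsilon))/\epsilon<\sqrt{AT/\beta}$ holds (in which case $\lambda_t\equiv 0$ and the direct $T_\epsilon+\text{eluder}$ bound applies), or its negation holds, in which case the worst-case $\sqrt{AT\beta}$ and $T$ bounds are invoked and then upper bounded by squaring/rearranging that negated condition -- and it is this squaring that produces $T_\epsilon\beta$ for regret and $T_\epsilon^2\beta/A$ for queries. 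So the ``cosmetic'' discrepancy you anticipated is real but not a gap: the case analysis you identified as the remaining obstacle is precisely what forces the looser exponents, and once you carry it out the statement follows exactly as you outlined.
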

Compared to \Cref{thm:cb-regret}, the above result has an extra gap-dependent term defined as $T_\epsilon$. Here $\epsilon$ denotes a gap threshold, and $T_\epsilon$ measures how many times the context falls into a small-gap region. We highlight that $T_\epsilon$ is small under certain conditions such as the Tsybakov noise condition \citep{tsybakov2004optimal}. It is also worth mentioning that our algorithm is agnostic to $\epsilon$, thus allowing us to take the minimum over all $\epsilon>0$. 

\paragraph{Comparion to \textsc{SAGE-Bandit} \citep{sekhari2023selective}.}
\Cref{thm:cb-general-regret} bears similarity to Theorem 4 in \citet{sekhari2023selective}, which examines active queries in contextual bandits with standard reward signal (\(0-1\) reward). It is worth noting that although our result looks slightly worse in terms of the factor $A$ (the number of actions), we believe that this inferiority is reasonable since our approach requires two actions to form a query, thus analytically expanding the action space to $\+A^2$. Whether this dependency can be improved remains a question for future investigation.

\section{Imitation Learning with Preference-Based Active Queries}

\begin{algorithm}[t]
\begin{algorithmic}[1]
\caption{Active preference qUeRy fOR imitAtion lEarning (AURORAE)}\label{alg:il}
\REQUIRE Function class  $\+F_0,\+F_1,\dots,\+F_{H-1}$, confidence parameter $\beta$.
\STATE Learner creates $H$ instances of \Cref{alg:cb}: $\textsc{AURORA}_h(\+F_h,\beta)$ for $h=0,1,\dots,H-1$.
   	\FOR{$t=1,2,\dots,T$}
   		\STATE Learner receive initial state $x_{t,0}$.
		\FOR{$h=0,1,\dots,H-1$}
			\STATE Learner feeds $x_{t,h}$ to $\textsc{AURORA}_h(\+F_h,\beta)$, and receives back 
		 $a_{t,h}$, $b_{t,h}$, $Z_{t,h}$.
			\IF{$Z_{t,h}=1$}
				\STATE Learner receives feedback $y_{t,h}$.
				\STATE Learner feeds $((x_{t,h},a_{t,h},b_{t,h}), y_{t,h})$
            to $\textsc{AURORA}_h(\+F_h,\beta)$ \asedit{to update its online regression oracle and local variables.} 
			\ENDIF
			\STATE Learner executes $a\sim \text{Uniform}(\{a_{t,h},b_{t,h}\})$ and transits to $x_{t,h+1}$. 
		\ENDFOR 
   	\ENDFOR 
\end{algorithmic}
\end{algorithm}

 In this section, we introduce our second algorithm, which is presented in \Cref{alg:il} for imitation learning. In essence, the learner treats the MDP as a concatenation of $H$ contextual bandits and runs an instance of \textsc{AURORA} (\Cref{alg:cb}) for each time step. Specifically, the learner first creates $H$ instances of \textsc{AURORA}, denoted by $\textsc{AURORA}_h$ (for $h=0,\dots,H-1$). Here, \textsc{AURORA}\(_h\) should be thought of as an interactive program that takes the context $x$ as input and outputs $a$, $b$, and $Z$. At each episode $t$, and each step $h$ therein, the learner first feeds the current state $x_{t,h}$ to $\textsc{AURORA}_h$ as the context; then, $\textsc{AURORA}_h$ decides whether to query (i.e.~\(Z_{t, h}\)) and returns the actions \(a_{t,h}\) and \(b_{t,h}\). If it decides to make a query, the learner will ask for the feedback $y_{t,h}$ on the proposed actions $a_{t,h}, b_{t,h}$, and provide the information $((x_{t,h},a_{t,h},b_{t,h}), y_{t,h})$ back to $\textsc{AURORA}_h$ to update its online regression oracle (and other local variables). We recall that the noisy binary feedback $y_{t,h}$ is sampled as  $y_{t,h}\sim \phi(Q^{\pi_e}_h(x_{t,h},a_{t,h})- Q^{\pi_e}_h(x_{t,h},b_{t,h}))$, and also emphasize that the learner neither has access to  $a\sim \pi_e(x_{t,h})$ like in \textsc{DAgger} \citep{ross2011reduction}  nor reward-to-go like in  \textsc{AggreVaTe(D)} \citep{ross2014reinforcement,sun2017deeply}. Finally, the learner chooses one of the two actions uniformly at random, executes it in the underlying MDP, and transits to the next state $x_{t,h+1}$ in the episode. The above process is then repeated with $\textsc{AURORA}_{h+1}$ till the episode ends. We name this algorithm \textsc{AURORAE}, the plural form of \textsc{AURORA}, which signifies that the algorithm is essentially a stack of multiple \textsc{AURORA} instances.

\subsection{Theoretical Analysis}

As \Cref{alg:il} is essentially a stack of \Cref{alg:cb}, we can inherit many of the theoretical guarantees from the previous section. To state the results, we first extend \Cref{asm:uniform-gap} into imitation learning. 
\begin{assumption}[Uniform Gap]\label{asm:uniform-gap-il} \asedit{Let \(f_h^\star\) be defined such that for any \(x \in \+X\), \(a, b \in \+A^2\), \(f_h^\star(x, a, b) = Q_h^{\pi_e}(x, a) -  Q_h^{\pi_e}(x, b)\).}
For all $h$, we assume the optimal action for $f_h^\star$ under any state $x\in\+X$ is unique. Further, we assume a uniform gap $\Delta:=\inf_h \inf_x \inf_{a\neq\pi_{f_h^\star}(x)} f_h^\star(x,\pi_{f_h^\star}(x),a)>0$.
\end{assumption}

This assumption essentially says that $Q^{\pi_e}_h$ has a gap in actions. 
We remark that, just as \Cref{asm:uniform-gap} is a common condition in the bandit literature, \Cref{asm:uniform-gap-il} is also common in MDPs \citep{du2019provably,foster2020instance,simchowitz2019non,jin2020simultaneously,lykouris2021corruption, he2021logarithmic}. The theoretical guarantee for \Cref{alg:il} is presented in \Cref{thm:il-regret}. We note a technical difference between this result and \Cref{thm:cb-regret}: although we treat the MDP as a concatenation of $H$ contextual bandits, the instantaneous regret of imitation learning is defined as the performance gap between the combined policy $\pi_t$ derived from the $H$ instances as a cohesive unit and the expert policy. This necessitates the use of performance difference lemma (\Cref{lem:pdl}) to get a unified result.

\begin{theorem}\label{thm:il-regret}
Under \Cref{asm:properties,asm:uniform-gap-il,asm:oracle-regret}, \Cref{alg:il} guarantees the following upper bounds of the regret and the number of queries:
\begin{align*}
	&\@{Regret}_T^{\@{IL}}\leq
	\widetilde{O}\left(
	H\cdot\min\left\{
	\sqrt{AT\beta}
	,\;
	\frac{A^2\beta^2\@{dim}_E\left(\+F,\Delta\right)}{\Delta}
	\right\}
	\right)-\@{Adv}_T,\\
	&\@{Queries}_T^{\@{IL}}\leq
	\widetilde{O}\left(
	H\cdot\min\left\{T
	,\;
	\frac{A^3\beta^3 \@{dim}^2_E\left(\+F,\Delta\right)}{\Delta^2}
	\right\}
	\right)
\end{align*}
with probability at least $1-\delta$. Here
$
\@{Adv}_T\coloneqq\sum_{t=1}^T \sum_{h=0}^{H-1} \E_{x_{t,h}\sim d^{\pi_t}_{x_{t,0},h}}[\max_a A^{\pi_e}_h(x_{t,h},a)]
$ is non-negative,
and $d^{\pi_t}_{x_{t,0},h}(x)$ denotes the probability of $\pi_t$ \footnote{Policy $\pi_t$ consists of $H$ time-dependent policies $\pi_{t,1},\dots, \pi_{t,H}$, where each $\pi_{t,h}$ is defined implicitly via $\textsc{AURORA}_h$, i.e., $\pi_{t,h}$ generates action as follows: given $x_{t,h}$, $\textsc{AURORA}_h$ recommends $a_{t,h},b_{t,h}$,  followed by uniformly sampling an action from $\{a_{t,h},b_{t,h}\}$. } reaching the state $x$ at time step $h$ starting from inital state $x_{t,0}$. In the above,  $\beta=O(\alpha^{-1}\Upsilon+\alpha^{-2}\log(H\delta^{-1}\log(T)))$ and $\alpha$ denotes the coefficient of strong convexity of $\Phi$.
\end{theorem}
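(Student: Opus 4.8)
\textbf{Proof proposal for \Cref{thm:il-regret}.}

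The plan is to reduce the imitation-learning regret to the per-step contextual-bandit regret of the $H$ copies of \textsc{AURORA} via the performance difference lemma, and then invoke \Cref{thm:cb-regret} on each copy. First I would apply the performance difference lemma (\Cref{lem:pdl}) to write, for each episode $t$,
\begin{align*}
V_0^{\pi_e}(x_{t,0}) - V_0^{\pi_t}(x_{t,0})
= \sum_{h=0}^{H-1} \E_{x_{t,h}\sim d^{\pi_t}_{x_{t,0},h}}\Big[ Q_h^{\pi_e}(x_{t,h}, \pi_e(x_{t,h})) - \E_{a\sim \pi_{t,h}(x_{t,h})} Q_h^{\pi_e}(x_{t,h}, a)\Big].
\end{align*}
Since $\pi_{t,h}$ samples uniformly from the pair $\{a_{t,h}, b_{t,h}\}$ returned by $\textsc{AURORA}_h$, the inner term is exactly $\tfrac12\big(f_h^\star(x_{t,h}, \pi_{f_h^\star}(x_{t,h}), a_{t,h}) + f_h^\star(x_{t,h}, \pi_{f_h^\star}(x_{t,h}), b_{t,h})\big)$ plus the correction that the expert's greedy action $\pi_{f_h^\star}(x_{t,h}) = \argmax_a Q_h^{\pi_e}(x_{t,h},a)$ need not equal $\pi_e(x_{t,h})$ itself; that mismatch is precisely $\max_a A_h^{\pi_e}(x_{t,h},a) \ge 0$, which aggregates into $-\@{Adv}_T$ on the right-hand side. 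This accounting is the one genuinely new piece relative to the contextual-bandit analysis, and it is also what gives the ``can outperform the expert'' phenomenon.

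Next I would sum over $t$ and recognize that $\sum_t \big(f_h^\star(x_{t,h},\pi_{f_h^\star},a_{t,h}) + f_h^\star(x_{t,h},\pi_{f_h^\star},b_{t,h})\big)$ is exactly $\@{Regret}_T^{\@{CB}}$ incurred by $\textsc{AURORA}_h$ on its own context stream $\{x_{t,h}\}_{t=1}^T$. The subtlety here is that this context stream is generated by the learner's own (randomized, history-dependent) trajectories rather than being fixed in advance, but since \textsc{AURORA}'s guarantee in \Cref{thm:cb-regret} holds for \emph{adversarially} chosen contexts, it applies verbatim conditioned on the realized states; the only care needed is that the label noise for $\textsc{AURORA}_h$ is still correctly distributed as $\phi(f_h^\star(x_{t,h},a_{t,h},b_{t,h}))$, which holds by the imitation-learning feedback model and the fact that $f_h^\star$ is realized in $\+F_h$. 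Each of the $H$ instances thus contributes $\widetilde{O}(\min\{\sqrt{AT\beta}, A^2\beta^2 \@{dim}_E(\+F,\Delta)/\Delta\})$ to the regret and $\widetilde{O}(\min\{T, A^3\beta^3\@{dim}_E^2(\+F,\Delta)/\Delta^2\})$ to the query count, and summing over $h$ produces the stated $H\cdot(\cdots)$ bounds; the failure probabilities are controlled by a union bound over the $H$ copies, which is why $\beta$ now carries a $\log(H\delta^{-1}\log T)$ term. Both bounds need the high-probability event from \Cref{lem:pointwise-bound} (that $f_h^\star$ stays in every version space) to hold simultaneously for all $h$.

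The main obstacle I anticipate is handling the adaptivity of the context sequences $\{x_{t,h}\}_t$ carefully: the states visited at step $h$ depend on all the actions chosen at steps $0,\dots,h-1$ across $\textsc{AURORA}_0,\dots,\textsc{AURORA}_{h-1}$, so one must argue that \textsc{AURORA}'s regret and query guarantees — which were proved for an arbitrary but data-independent filtration — remain valid under this feedback-generated filtration. I expect this to go through because \Cref{alg:cb}'s analysis only ever conditions on the past within its own interaction and never exploits independence of contexts across rounds; still, stating the martingale/conditioning argument cleanly (and verifying the noise model composes correctly across the stacked instances) is the part that requires the most attention. A secondary but routine point is confirming $\@{Adv}_T \ge 0$, which is immediate since $\max_a A_h^{\pi_e}(x,a) \ge A_h^{\pi_e}(x,\pi_e(x)) = 0$ pointwise.
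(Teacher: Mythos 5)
Your proposal is correct and follows essentially the same route as the paper's own proof: apply the performance difference lemma per episode, rewrite $Q_h^{\pi_e}(x,\pi_e(x))$ as $Q_h^{\pi_e}(x,\pi^+_h(x)) - A_h^{\pi_e}(x,\pi^+_h(x))$ where $\pi^+_h = \argmax_a Q^{\pi_e}_h$, so the first piece becomes exactly the per-step contextual-bandit regret of $\textsc{AURORA}_h$ (with its actions drawn uniformly from $\{a_{t,h},b_{t,h}\}$) and the second piece aggregates into $-\@{Adv}_T$, then invoke \Cref{thm:cb-regret} for each of the $H$ instances and union-bound over $h$. Your additional remark that the adaptively generated context streams pose no issue because \Cref{thm:cb-regret} already handles adversarial contexts is a correct observation that the paper uses implicitly but does not spell out.
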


Compared to \Cref{thm:cb-regret}, the main terms of the upper bounds for imitation learning are precisely the bounds in \Cref{thm:cb-regret} multiplied by $H$. In the proof presented in \Cref{sec:pf-thm-il-regret}, we use the performance difference lemma to reduce the regret of imitation learning to the sum of the regret of $H$ contextual dueling bandits, which explains this additional factor of $H$. 

Another interesting point is that the main term of the regret upper bound is subtracted by a non-negative term $\@{Adv}_T$, which measures the degree to which we can \textit{outperform} the expert policy. This means that our algorithm not only competes with the expert policy but can also surpass it to some extent. This guarantee is stronger than that of \textsc{DAgger} \citep{ross2011reduction} in that \textsc{DAgger} cannot ensure the learned policy is better than the expert policy regardless of how suboptimal the expert may be. While this may look surprising at first glance since we are operating under a somewhat weaker query mode than that of \textsc{DAgger}, we note that by querying experts for comparisons on pairs of actions with feedback sampling as $y\sim \phi( Q^{\pi_e}(x,a) - Q^{\pi_e}(x,b))$, it is possible to identify the action that maximizes $Q^{\pi_e}(x,a)$ (even if we cannot identify the value $Q^{\pi_e}(x,a)$). 
Finally, we remark that our worst-case regret bound is similar to that of \citet{ross2014reinforcement,sun2017deeply}, which can also outperform a suboptimal expert but require access to both expert's actions and reward signals---a much stronger query model than ours. \looseness=-1

\section{Discussion and Future Work} 

We presented interactive decision-making algorithms that learn from preference-based feedback while minimizing query complexity. Our algorithms for contextual bandits and imitation learning share worst-case regret bounds similar to the bounds of the state-of-art algorithms in standard settings while maintaining instance-dependent regret bounds and query complexity bounds. Notably, our imitation learning algorithm can outperform suboptimal experts, matching the result of \citep{ross2014reinforcement,sun2017deeply}, which operates under much stronger feedback. \looseness=-1

In terms of future work,  we believe our result on contextual dueling bandits can be extended to the stochastic setting where we may replace the eluder dimension with the value function disagreement coefficient \citep{foster2020instance}, which is typically smaller than the eluder dimension, and replace the online regression oracle by a supervised-learning batch regression oracle. We also conjecture that the dependence on the eluder dimension in the query complexity bound can be improved. Finally, another interesting direction is to develop practical implementations of our proposed algorithms. \looseness=-1

\subsection*{Acknowledgements} 
AS acknowledges support from the Simons Foundation and NSF through award DMS-2031883, as well as from the DOE through award DE-SC0022199. KS acknowledges support from NSF CAREER Award 1750575, and LinkedIn-Cornell grant.

\bibliography{references.bib}
\bibliographystyle{plainnat}

\newpage
\appendix

\section{Preliminaries}

\begin{lemma}[{\citet[Lemma 3]{kakade2008generalization}}]\label{lem:free2}
Suppose $X_1, \ldots, X_T$ is a martingale difference sequence with $\left|X_t\right| \leq b$. Let
$$
\operatorname{Var}_t X_t=\operatorname{Var}\left(X_t \mid X_1, \ldots, X_{t-1}\right)
$$
Let $V=\sum_{t=1}^T \operatorname{Var}_t X_t$ be the sum of conditional variances of $X_t$ 's. Further, let $\sigma=\sqrt{V}$. Then we have, for any $\delta<1 / e$ and $T \geq 3$,
$$
\Pr\left(\sum_{t=1}^T X_t>\max \{2 \sigma, 3 b \sqrt{\ln (1 / \delta)}\} \sqrt{\ln (1 / \delta)}\right) \leq 4 \ln (T) \delta.
$$	
\end{lemma}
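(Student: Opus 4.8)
This is a Freedman-type (variance-adaptive Bernstein) tail inequality for martingales. The only twist compared to the textbook statement is that the variance proxy $V=\sum_{t}\operatorname{Var}_t X_t$ is a random quantity rather than a fixed constant, and absorbing this is exactly what forces the $\ln(T)$ overhead, via a stratification (``peeling'') over the possible magnitudes of $V$. I would not try anything clever here: the plan is the classical exponential-supermartingale-plus-peeling recipe, with the constants $2$ and $3$ and the factor $4\ln T$ tuned at the end so that the bookkeeping closes.

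First I would set up the exponential supermartingale. Write $\mathcal F_t=\sigma(X_1,\dots,X_t)$, $S_t=\sum_{s\le t}X_s$, and $V_t=\sum_{s\le t}\operatorname{Var}_s X_s$. Starting from the one-sided moment inequality for a bounded mean-zero increment --- for every $\lambda>0$, $\mathbb E[e^{\lambda X_t}\mid\mathcal F_{t-1}]\le\exp(\psi(\lambda b)\operatorname{Var}_t X_t/b^2)$ with $\psi(u)=e^u-1-u$ --- one checks that $M_t(\lambda):=\exp(\lambda S_t-\psi(\lambda b)V_t/b^2)$ is a nonnegative supermartingale with $\mathbb E[M_T(\lambda)]\le 1$. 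Next, for fixed $\lambda>0$ and a \emph{deterministic} level $v>0$, Markov's inequality applied to $M_T(\lambda)$ on the event $\{S_T\ge a,\ V_T\le v\}$ gives $\Pr(S_T\ge a,\ V_T\le v)\le\exp(-\lambda a+\psi(\lambda b)v/b^2)$; optimizing $\lambda$ in the usual Bernstein way (using $\psi(u)\le\tfrac{u^2/2}{1-u/3}$ for $0\le u<3$) yields, with $L:=\ln(1/\delta)$, a localized bound of the form $\Pr(S_T\ge\sqrt{2vL}+\tfrac23 bL,\ V_T\le v)\le e^{-L}$.

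Then I would peel over the scale of $V_T$. Since $0\le V_T\le Tb^2$ deterministically, treat separately a ``small variance'' block on which $V_T$ is at most a constant multiple of $b^2L$: there $2\sigma\sqrt L$ is itself $O(bL)$, so $\max\{2\sigma,3b\sqrt L\}\sqrt L$ collapses to the $3bL$ branch and the localized bound (applied with $v$ equal to this small level) closes with slack. For the remaining range, use a dyadic cascade of blocks $v_{j-1}<V_T\le v_j$ with $v_j=2v_{j-1}$ up to $Tb^2$; on block $j$ one has $2\sigma\sqrt L=2\sqrt{V_TL}>2\sqrt{v_{j-1}L}=\sqrt{2v_jL}$, so the target event intersected with block $j$ is contained (after also using the $3bL$ branch to cover the lower-order additive term in the localized threshold) in $\{S_T\ge\sqrt{2v_jL}+\tfrac23 bL,\ V_T\le v_j\}$, which has probability $\le e^{-L}$. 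Finally, there are at most $\lceil\log_2 T\rceil+1$ blocks, and $\log_2 T+2\le 4\ln T$ for $T\ge 3$, so a union bound gives $\Pr(S_T>\max\{2\sigma,3b\sqrt L\}\sqrt L)\le 4\ln(T)\,e^{-L}=4\ln(T)\,\delta$. The hypotheses $\delta<1/e$ (i.e. $L>1$) and $T\ge 3$ enter precisely in making the two numerical comparisons --- $3bL$ dominating the small-variance threshold, and $\log_2 T+2\le 4\ln T$ --- go through.

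\textbf{Main obstacle.} Steps one and two are entirely routine; the delicate part is the calibration in the peeling. One must choose the cutoff between the ``small'' and ``large'' variance regimes, the dyadic ratio, and the per-block confidence levels so that simultaneously (a) every blockwise Bernstein threshold --- which carries a stubborn additive correction of order $bL$ on top of the $\sqrt{2vL}$ term --- is genuinely dominated by $\max\{2\sigma,3b\sqrt L\}\sqrt L$, and (b) the number of blocks stays below $4\ln T$. These two requirements pull against each other, since a finer grid helps (a) but hurts (b); showing that the specific constants $2$, $3$, $4$ in the statement are jointly feasible (equivalently, invoking a sufficiently sharp Bennett/Bernstein input per block) is the one place where care is needed.
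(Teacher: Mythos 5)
The paper does not prove this lemma; it simply cites \citet[Lemma~3]{kakade2008generalization} and uses it as a black box. So there is no internal proof to compare against. That said, your reconstruction follows exactly the strategy used in the cited reference (and in essentially every proof of this kind of adaptive Freedman bound): an exponential supermartingale, a localized Bernstein/Freedman tail for a deterministic variance level, and a dyadic peeling over the random $V_T$ with the small--variance regime handled separately, the $\ln T$ count of blocks producing the $4\ln T$ overhead. The outline is correct.

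One place where what you wrote is literally wrong, even if the argument is ultimately salvageable, is the blockwise containment. On block $j$ you have $2\sigma\sqrt{L} > \sqrt{2v_jL}$ (correct) and separately $3bL > \tfrac23 bL$ (correct), but the target event is $\{S_T > \max\{2\sigma\sqrt{L},\,3bL\}\}$, and a lower bound on a max does not lower-bound a sum: $\max\{A,B\}\ge a$ and $\max\{A,B\}\ge b$ together do \emph{not} give $\max\{A,B\}\ge a+b$. So the inclusion into $\{S_T\ge\sqrt{2v_jL}+\tfrac23 bL\}$ does not follow as stated. The fix is the one you gesture at in your ``main obstacle'' paragraph but do not actually carry out: because the peeling only starts at a floor $v_0$ on the order of $b^2L$ (everything below it sits in the small--variance branch handled by $3bL$), one has $bL\lesssim\sqrt{v_jL}$ on every dyadic block, so the additive $bL$ correction is absorbed into a constant multiple of $\sqrt{v_jL}$; the blockwise Freedman exponent then degrades from $L$ to $cL$ for an explicit $c<1$, and the slack between the number of blocks (roughly $\log_2 T$) and the allowed $4\ln T$ absorbs the resulting factor $e^{(1-c)L}$. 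You correctly flag this as the delicate step, but you should be aware that the $\sqrt{2v_jL}+\tfrac23 bL$ phrasing is not itself a valid intermediate inequality --- it is the place your writeup would not survive a line-by-line check, and it is exactly the place where the numerical constants $2$, $3$, $4$ and the hypotheses $\delta<1/e$, $T\ge 3$ have to be threaded.
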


\begin{lemma}[{\citet[Lemma 3]{foster2020beyond}}]\label{lem:igw-general}
	For any vector $\^y\in[0,1]^A$, if we define $p$ to be
	\begin{align*}
		p(a)
=		\begin{cases}
			\frac{1}{A+\gamma \big(\^y(\^a)-\^y(a)\big)} & \text{if } a\not=\^a,\\
			1-\sum_{a\not=\^a}p(a) & \text{if } a=\^a
		\end{cases}
	\end{align*}
	where $\^a=\argmax_a \^y(a)$, then for any $y^\star\in[0,1]^A$ and $\gamma>0$, we have
	$$
		\E_{a\sim p}\left[\Big(y^\star(a^\star)-y^\star(a)\Big)-\gamma\Big(\^y(a)-y^\star(a)\Big)^2\right]\leq \frac{A}{\gamma}.
	$$
\end{lemma}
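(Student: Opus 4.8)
The plan is the standard inverse-gap-weighting (IGW) regret decomposition. First I would check that $p$ is a genuine element of $\Delta(\+A)$: since $\hat a=\argmax_a \hat y(a)$, every denominator satisfies $A+\gamma(\hat y(\hat a)-\hat y(a))\ge A>0$, so $p(a)\ge 0$ for $a\ne\hat a$; moreover $\sum_{a\ne\hat a}p(a)\le\sum_{a\ne\hat a}\tfrac1A=\tfrac{A-1}{A}<1$, so that $p(\hat a)\in(0,1]$ and $\sum_a p(a)=1$. In particular Line~\ref{line:compute-p} of \Cref{alg:cb} is well posed when this weighting is used.

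The core of the argument is an algebraic decomposition of the instantaneous regret against the true optimum $a^\star:=\argmax_a y^\star(a)$. For every action $a$ I would write
\begin{align*}
y^\star(a^\star)-y^\star(a)
= \bigl(y^\star(a^\star)-\hat y(\hat a)\bigr) + \bigl(\hat y(\hat a)-\hat y(a)\bigr) + \bigl(\hat y(a)-y^\star(a)\bigr),
\end{align*}
and then take $\E_{a\sim p}$ of both sides. The first term is constant in $a$ and, using $\hat y(a^\star)\le\hat y(\hat a)$, is at most $y^\star(a^\star)-\hat y(a^\star)$, i.e.\ minus the prediction error at $a^\star$. The second term is the ``estimated instantaneous regret'' $\hat R(a):=\hat y(\hat a)-\hat y(a)\ge 0$; by the explicit form of the weights, $p(a)\hat R(a)=\hat R(a)/(A+\gamma \hat R(a))\le 1/\gamma$ for $a\ne\hat a$, and $\hat R(\hat a)=0$, so $\E_{a\sim p}[\hat R(a)]\le (A-1)/\gamma$. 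The third term, taken together with the penalty $-\gamma\,\E_{a\sim p}[(\hat y(a)-y^\star(a))^2]$ appearing on the left-hand side, is controlled by the elementary inequality $t-\gamma t^2\le 1/(4\gamma)$ applied per action and averaged under $p$.

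Collecting these bounds reduces the claim to absorbing the stray term $y^\star(a^\star)-\hat y(a^\star)$ into the remaining budget so that the total is at most $A/\gamma$. This is the delicate step, and I expect it to be the main obstacle: bounding the stray term crudely by $1$ is too lossy when $\gamma$ is large, so instead I would reserve the $a^\star$-component $\gamma\,p(a^\star)(\hat y(a^\star)-y^\star(a^\star))^2$ of the quadratic penalty, complete the square against $y^\star(a^\star)-\hat y(a^\star)$, and use the lower bound $p(a^\star)\ge 1/\bigl(A+\gamma(\hat y(\hat a)-\hat y(a^\star))\bigr)$ together with $\hat y(\hat a)-\hat y(a^\star)\in[0,1]$ (and, if needed, also the slack $-\hat y(\hat a)+\hat y(a^\star)$ that was discarded when passing from $\hat y(a^\star)$ to $\hat y(\hat a)$ in the first term). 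Everything else is routine arithmetic; the only care required is in choosing how to split the quadratic penalty and in tracking the constants so that they combine to exactly $A/\gamma$.
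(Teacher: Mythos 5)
The paper does not prove this lemma; it is imported verbatim, with citation, from Foster and Rakhlin (2020). There is therefore no in-paper proof to compare against, and the right standard is whether your sketch matches the known argument for the inverse-gap-weighting per-round inequality.

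Your plan does match that argument in structure. The three-term split, the observation that $p(a)\hat R(a)=\hat R(a)/(A+\gamma\hat R(a))\le 1/\gamma$ so $\E_{a\sim p}[\hat R(a)]\le (A-1)/\gamma$, and the pointwise inequality $t-\gamma t^2\le 1/(4\gamma)$ are all exactly the right ingredients. You also correctly flag the one non-trivial step, namely absorbing the stray term $y^\star(a^\star)-\hat y(\hat a)$ by completing the square against the reserved $a^\star$-part of the quadratic penalty, using $p(a^\star)\ge 1/(A+\gamma(\hat y(\hat a)-\hat y(a^\star)))$. That is indeed the trick, and keeping the slack $\hat y(a^\star)-\hat y(\hat a)=-\Delta(a^\star)$ (rather than discarding it) is essential; without it the $1/(4\gamma p(a^\star))$ term would contribute a $\gamma$-independent $\Delta(a^\star)/4$.

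One caution on the arithmetic, which you defer as ``routine.'' Carrying out the plan as you describe it — $(A-1)/\gamma$ from the estimated-regret sum, $(1-p(a^\star))/(4\gamma)$ from the off-$a^\star$ AM-GM terms, and $(1-p(a^\star))^2/(4\gamma p(a^\star))\le(A+\gamma\Delta(a^\star))/(4\gamma)$ from the completed square, minus the retained slack $-\Delta(a^\star)$ — lands you around $(5A-3)/(4\gamma)$ rather than $A/\gamma$. Two remarks on that. First, the useful identity $\E_{a\sim p}[\hat R(a)]=(A\,p(\hat a)-1)/\gamma$ (exact, since $p(a)\hat R(a)=\tfrac1\gamma-\tfrac{A}{\gamma}p(a)$ for $a\ne\hat a$) is what gives the cleanest cancellation and you should use it in place of the crude $(A-1)/\gamma$. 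Second, even with it, the constant does not obviously tighten to exactly $A/\gamma$; the original Foster--Rakhlin lemma is stated with a $\gamma/4$ penalty coefficient, so the version transcribed here (penalty coefficient $\gamma$) is a strictly weaker claim and $O(A/\gamma)$ is certainly attainable, but claiming the exact constant without working it through would be premature. If you cannot close the factor cleanly, it is fine to prove the $\gamma/4$-penalty form and observe that the $\gamma$-penalty form follows a fortiori; nothing downstream in the paper is sensitive to the constant.
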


\begin{lemma}[{\citet[Lemma 2]{zhu2022efficient}}]\label{lem:freedman}
Let $(Z_t)_{t \leq T}$ to be real-valued sequence of positive random variables adapted to a filtration $\mathfrak{F}_t$. If $\left|Z_t\right| \leq B$ almost surely, then with probability at least $1-\delta$,
$$
\sum_{t=1}^T Z_t \leq \frac{3}{2} \sum_{t=1}^T \mathbb{E}_t\left[Z_t\right]+4 B \log \left(2 \delta^{-1}\right),
$$
and
$$
\sum_{t=1}^T \mathbb{E}_t\left[Z_t\right] \leq 2 \sum_{t=1}^T Z_t+8 B \log \left(2 \delta^{-1}\right).
$$
\end{lemma}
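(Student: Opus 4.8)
Both displayed inequalities are one-sided concentration bounds comparing $\sum_t Z_t$ with its predictable proxy $\sum_t \mathbb{E}_t[Z_t]$, and my plan is to prove each directly by the exponential-supermartingale (Chernoff) method rather than quoting a black-box Freedman bound. Throughout, write $\mathbb{E}_t[\cdot] := \mathbb{E}[\cdot \mid \mathfrak{F}_{t-1}]$. The only structural input I need is this: since $Z_t \in [0,B]$ and $x \mapsto e^{\theta x}$ is convex, the chord inequality $e^{\theta Z_t} \le 1 + (Z_t/B)(e^{\theta B}-1)$ holds pointwise for every $\theta \in \mathbb{R}$; taking $\mathbb{E}_t$ and using $1+u \le e^u$ then gives $\mathbb{E}_t[e^{\theta Z_t}] \le \exp\!\big(\tfrac{\mathbb{E}_t[Z_t]}{B}(e^{\theta B}-1)\big)$, with the exponent predictable.

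\textbf{First inequality.} I would take $\theta = \lambda > 0$ and set $M_t := \exp\!\big(\lambda\sum_{s=1}^t Z_s - \tfrac{e^{\lambda B}-1}{B}\sum_{s=1}^t \mathbb{E}_s[Z_s]\big)$ with $M_0 = 1$. The moment bound above shows $\mathbb{E}_t[M_t] \le M_{t-1}$, so $(M_t)$ is a nonnegative supermartingale, $\mathbb{E}[M_T] \le 1$, and Markov's inequality gives $M_T \le 1/\delta$ with probability at least $1-\delta$. Taking logarithms and rearranging yields $\sum_t Z_t \le \tfrac{e^{\lambda B}-1}{\lambda B}\sum_t \mathbb{E}_t[Z_t] + \tfrac{1}{\lambda} \log(1/\delta)$. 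Then I would choose $\lambda = 1/(2B)$: this makes $\tfrac{e^{\lambda B}-1}{\lambda B} = 2(e^{1/2}-1) < \tfrac{3}{2}$, and since $\sum_t\mathbb{E}_t[Z_t]\ge 0$ I can relax this coefficient up to $\tfrac{3}{2}$; the additive term is $2B\log(1/\delta) \le 4B\log(2\delta^{-1})$, which gives the claim.

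\textbf{Second inequality.} Symmetrically, I would take $\theta = -\lambda$ and set $M_t := \exp\!\big(-\lambda\sum_{s=1}^t Z_s + \tfrac{1-e^{-\lambda B}}{B}\sum_{s=1}^t\mathbb{E}_s[Z_s]\big)$, again a nonnegative supermartingale with $M_0=1$, so $M_T \le 1/\delta$ with probability at least $1-\delta$, which rearranges to $\sum_t\mathbb{E}_t[Z_t] \le \tfrac{\lambda B}{1-e^{-\lambda B}}\sum_t Z_t + \tfrac{B}{1-e^{-\lambda B}}\log(1/\delta)$. Choosing $\lambda = 1/B$ gives coefficient $\tfrac{1}{1-e^{-1}} < 2$ (which I can relax to $2$ using $\sum_t Z_t \ge 0$) and additive term $\tfrac{B}{1-e^{-1}}\log(1/\delta) < 8B\log(2\delta^{-1})$.

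\textbf{Main obstacle.} There is no genuine difficulty here: the content is the chord bound plus the supermartingale construction, both routine. The one thing to watch is the constant bookkeeping --- picking $\lambda$ small enough that $\tfrac{e^{\lambda B}-1}{\lambda B}\le\tfrac{3}{2}$ (resp. $\tfrac{\lambda B}{1-e^{-\lambda B}}\le 2$) while keeping $1/\lambda$ (resp. $B/(1-e^{-\lambda B})$) within the stated multiple of $B\log(2\delta^{-1})$ --- and noticing that it is precisely the nonnegativity of $Z_t$ and of $\mathbb{E}_t[Z_t]$ that lets one round the slightly sharper constants produced by the calculation up to $\tfrac{3}{2}, 2, 4, 8$. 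As a sanity check and alternative, one could instead apply the textbook Freedman inequality to the martingale differences $Z_t - \mathbb{E}_t[Z_t]$ together with the variance bound $\mathrm{Var}_t(Z_t)\le\mathbb{E}_t[Z_t^2]\le B\,\mathbb{E}_t[Z_t]$ and split the resulting square root by AM--GM, but the direct argument above avoids Freedman's case analysis.
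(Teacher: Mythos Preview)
The paper does not supply its own proof of this lemma; it is quoted verbatim from \citet[Lemma~2]{zhu2022efficient} as a preliminary tool and used as a black box throughout the analysis. So there is no argument in the paper to compare against.

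Your self-contained proof via the exponential-supermartingale method is correct. The chord bound $e^{\theta Z_t}\le 1+(Z_t/B)(e^{\theta B}-1)$ for $Z_t\in[0,B]$ is exactly what makes the exponent in $M_t$ predictable after conditioning, and your numerical choices $\lambda=1/(2B)$ and $\lambda=1/B$ do land the leading coefficients strictly below $3/2$ and $2$ respectively, with the additive terms comfortably inside the stated constants. The only cosmetic point is that if the statement is read as asserting both inequalities \emph{simultaneously} with probability $1-\delta$, you should apply each tail bound at level $\delta/2$ and union-bound; your constants have enough slack that this changes nothing (e.g., $2B\log(2/\delta)\le 4B\log(2\delta^{-1})$ and $\tfrac{B}{1-e^{-1}}\log(2/\delta)<8B\log(2\delta^{-1})$ still hold). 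Your closing remark that the same bounds follow from the standard Freedman inequality applied to $Z_t-\mathbb{E}_t[Z_t]$ together with $\mathrm{Var}_t(Z_t)\le B\,\mathbb{E}_t[Z_t]$ and AM--GM is also a valid (and more commonly cited) route.
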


\begin{lemma}[Performance difference lemma \citep{agarwal2019reinforcement}]\label{lem:pdl}
	For any two policies $\pi$ and $\pi'$ and any state $x_0\in\+X$, we have
	\begin{align*}
		V^\pi_0(x_0)-V^{\pi'}_0(x_0)
		=
		\sum_{h=0}^{H-1} \E_{x_h,a_h\sim d^\pi_{x_0,h}} \big[A^{\pi'}_h(x_h,a_h)\big]
	\end{align*}
	where $A^\pi_h(x,a)=Q^\pi_h(x,a)-V^\pi_h(x,a)$ and $d^\pi_{x_0,h}(x,a)$ is the probability of $\pi$ reaching the state-action pair $(x,a)$ at time step $h$ starting from initial state $x_0$.
\end{lemma}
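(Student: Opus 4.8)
The plan is to prove the identity by a standard telescoping (``add-and-subtract'') argument along trajectories generated by $\pi$. Write $V^\pi_0(x_0)=\E_{\tau\sim\pi}\big[\sum_{h=0}^{H-1} r(x_h,a_h)\,\big|\,x_0\big]$, where the expectation is over the full rollout $\tau=(x_0,a_0,x_1,a_1,\dots,x_{H-1},a_{H-1},x_H)$ produced by $\pi$ in the MDP. Since $x_0$ is fixed and $V^{\pi'}_H\equiv 0$ (the reward-to-go from step $H$ is an empty sum), I would first rewrite the comparator value as a telescoping sum over the $\pi$-trajectory:
$$
V^{\pi'}_0(x_0)=\E_{\tau\sim\pi}\big[V^{\pi'}_0(x_0)-V^{\pi'}_H(x_H)\big]=\E_{\tau\sim\pi}\Big[\sum_{h=0}^{H-1}\big(V^{\pi'}_h(x_h)-V^{\pi'}_{h+1}(x_{h+1})\big)\Big].
$$

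Subtracting this from $V^\pi_0(x_0)$ and grouping terms index by index gives
$$
V^\pi_0(x_0)-V^{\pi'}_0(x_0)=\E_{\tau\sim\pi}\Big[\sum_{h=0}^{H-1}\big(r(x_h,a_h)+V^{\pi'}_{h+1}(x_{h+1})-V^{\pi'}_h(x_h)\big)\Big].
$$
Then I would fix $h$ and apply the tower rule, conditioning on $(x_h,a_h)$: under $\pi$ the marginal law of $(x_h,a_h)$ is exactly $d^\pi_{x_0,h}$, and given $(x_h,a_h)$ the next state is $x_{h+1}\sim P(\cdot\mid x_h,a_h)$ independently of $\pi'$, so $\E[r(x_h,a_h)+V^{\pi'}_{h+1}(x_{h+1})\mid x_h,a_h]=Q^{\pi'}_h(x_h,a_h)$ by the Bellman equation for $\pi'$, while $V^{\pi'}_h(x_h)$ is already $(x_h,a_h)$-measurable. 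Hence each summand equals $\E_{(x_h,a_h)\sim d^\pi_{x_0,h}}\big[Q^{\pi'}_h(x_h,a_h)-V^{\pi'}_h(x_h)\big]=\E_{(x_h,a_h)\sim d^\pi_{x_0,h}}[A^{\pi'}_h(x_h,a_h)]$, and summing over $h$ yields the claim.

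There is no substantive obstacle here; the only points requiring care are (i) the boundary convention $V^{\pi'}_H\equiv 0$, which is what makes the telescoping sum collapse to $V^{\pi'}_0(x_0)$ with no surviving remainder, and (ii) keeping straight the asymmetry at the heart of the lemma — the states $(x_h,a_h)$ are drawn from the occupancy measure of $\pi$, but the Bellman backup $r+PV^{\pi'}_{h+1}=Q^{\pi'}_h$ being applied is that of $\pi'$. Alternatively, the identity can be obtained by induction on the horizon (unrolling $Q^{\pi'}_h=r+PV^{\pi'}_{h+1}$ recursively), but the telescoping argument above is the cleanest route.
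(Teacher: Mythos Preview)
Your proposal is correct. The paper does not actually prove this lemma---it is stated with a citation to \citet{agarwal2019reinforcement} and no proof is given---so there is nothing to compare against; the telescoping argument you outline is exactly the standard proof of the performance difference lemma and goes through without issue.
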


\begin{lemma}\label{lem:kl-bern}
	For any two Bernoulli distributions $\@{Bern}(x)$ and $\@{Bern}(y)$ with $x,y\in[b,1-b]$ for some $0<b\leq 1/2$, the KL divergence is bounded as
	\begin{align*}
		\@{KL}\Big(\@{Bern}(x),\@{Bern}(y)\Big)\leq \frac{2(x-y)^2}{b}.
	\end{align*}
\end{lemma}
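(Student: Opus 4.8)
The plan is to bound the KL divergence between two Bernoulli distributions using a second-order Taylor expansion of the map $y \mapsto \@{KL}(\@{Bern}(x), \@{Bern}(y))$ and then control the resulting second derivative uniformly over the interval $[b, 1-b]$. Concretely, fix $x \in [b, 1-b]$ and define $g(y) \coloneqq \@{KL}(\@{Bern}(x), \@{Bern}(y)) = x \log(x/y) + (1-x)\log((1-x)/(1-y))$. The key observations are that $g(x) = 0$, that $g'(y) = -x/y + (1-x)/(1-y)$ so $g'(x) = 0$, and that $g''(y) = x/y^2 + (1-x)/(1-y)^2$.

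First I would write the exact Taylor expansion with Lagrange remainder: there exists $\xi$ between $x$ and $y$ such that $g(y) = g(x) + g'(x)(y-x) + \tfrac12 g''(\xi)(y-x)^2 = \tfrac12 g''(\xi)(y-x)^2$. Next I would bound $g''(\xi)$ from above. Since $\xi$ lies between $x$ and $y$ and both lie in $[b,1-b]$, we have $\xi \in [b,1-b]$ as well, hence $\xi \ge b$ and $1-\xi \ge b$, giving $g''(\xi) = x/\xi^2 + (1-x)/(1-\xi)^2 \le x/b^2 + (1-x)/b^2 = 1/b^2$. Plugging this in yields $g(y) \le (x-y)^2/(2b^2)$, which is actually slightly stronger than the claimed bound $2(x-y)^2/b$ whenever $b \le 1/2$ (since $1/(2b^2) \le 2/b \iff b \ge 1/4$; for $b < 1/4$ one should instead use the cruder bound, or simply note that $(x-y)^2 \le 1$ combined with the stated form, or argue directly).

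Actually, to match the exact constant in the statement cleanly, I would instead bound the second derivative slightly differently or just observe that the claimed inequality is weaker: we want $g(y) \le 2(x-y)^2/b$. From $g(y) \le \tfrac12 g''(\xi)(x-y)^2$ it suffices to show $g''(\xi) \le 4/b$. We have $g''(\xi) = x/\xi^2 + (1-x)/(1-\xi)^2$; using $\xi \ge b$ and $1-\xi \ge b$ gives $g''(\xi) \le x/b^2 + (1-x)/b^2 = 1/b^2$, so it would suffice that $1/b^2 \le 4/b$, i.e. $b \ge 1/4$ — which does not cover the full range. The clean fix is to also use $|x-y| \le 1 - 2b \le 1$ is not quite enough either; the simplest correct route is to keep the sharper bound $g(y) \le (x-y)^2/(2b^2)$ and then note $1/(2b^2) \le 2/b$ fails for small $b$, so the cleanest honest proof is the one giving the sharper constant, and the paper's stated form is a (loose but valid when additionally $b \le 1/2$ and one is willing to also invoke, e.g., reverse Pinsker-type bounds) consequence. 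The main obstacle is thus purely bookkeeping of constants; I would present the Taylor-remainder argument with the bound $g''(\xi) \le 1/b^2$ and conclude, flagging that this implies the stated inequality. No deep step is required — the entire content is the vanishing of the first-order term at $y = x$ together with a uniform bound on $g''$ over $[b, 1-b]$.
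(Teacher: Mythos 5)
There is a genuine gap, and you have already half-identified it yourself. The Taylor--Lagrange route with a \emph{pointwise} bound on $g''$ gives $g(y)\le \tfrac12 g''(\xi)(x-y)^2 \le \frac{(x-y)^2}{2b^2}$, and this is \emph{not} the stated inequality: for $b<1/4$ we have $\frac{1}{2b^2}>\frac{2}{b}$, so your bound is strictly weaker (larger) than the target on that range and does not imply the lemma. Your remark that the bound is ``slightly stronger than the claimed bound whenever $b\le 1/2$'' is incorrect; it is stronger only for $b\ge 1/4$. Moreover, the weakness is not just a bookkeeping slip that a cleverer bound on $g''(\xi)$ can fix: taking $\xi$ near $b$ and $x$ near $1-b$ makes $g''(\xi)=x/\xi^2+(1-x)/(1-\xi)^2 \approx (1-b)/b^2$, which is genuinely of order $1/b^2$, so no uniform bound of order $1/b$ on $g''$ is available. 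The proposal ends by conceding the constants do not close and gesturing at ``reverse Pinsker-type bounds'' without actually completing the argument.

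The paper avoids this by never taking a second derivative. It writes $\@{KL}=x\ln\frac{x}{x-\Delta}+(1-x)\ln\frac{1-x}{1-x+\Delta}$ with $\Delta=x-y$, applies $\ln(1+u)\le u$ to each term, and after the algebraic cancellation obtains $\@{KL}\le \Delta^2\bigl(\frac{1}{y}+\frac{1}{1-y}\bigr)=\frac{\Delta^2}{y(1-y)}$ (the Bernoulli $\chi^2$-divergence). Then $y\ge b$ and $1-y\ge 1-b\ge \tfrac12$ (using $b\le\tfrac12$) give $y(1-y)\ge b/2$, hence the stated $\frac{2\Delta^2}{b}$. The essential point you are missing is that the correct scale is $\frac{1}{y(1-y)}$, not $\frac{1}{\xi^2}$: the two factors in the denominator come from two different endpoints, and only one of them can be as small as $b$. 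If you insist on the Taylor route, you must use the integral form of the remainder $g(y)=\int_x^y (y-t)\,g''(t)\,\d t$ and bound the two summands of $g''$ by different monotonicity arguments (e.g., for $y>x$, use $x/t^2\le 1/t\le 1/x\le 1/b$ on the first piece and the substitution $u=1-t$ on the second), which recovers a bound of order $(x-y)^2/b$; the Lagrange form with a single worst-case $\xi$ cannot.
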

\begin{proof}[Proof of \Cref{lem:kl-bern}]
Denote $\Delta=x-y$. Then, by definition, we have
	\begin{align*}
		\@{KL}\Big(\@{Bern}(x),\@{Bern}(y)\Big)
		= & x\ln \frac{x}{y} + (1-x)\ln \frac{1-x}{1-y}\\
		= & x\ln \frac{x}{x-\Delta} + (1-x)\ln \frac{1-x}{1-x+\Delta}\\
		= & x\ln \left(1+\frac{\Delta}{x-\Delta}\right) + (1-x)\ln\left(1- \frac{\Delta}{1-x+\Delta}\right)
	\end{align*}
	Since $\ln(1+x)\leq x$ for all $x>-1$, we have
	\begin{align*}
		\@{KL}\Big(\@{Bern}(x),\@{Bern}(y)\Big)
		\leq & x\cdot \frac{\Delta}{x-\Delta} - (1-x)\cdot \frac{\Delta}{1-x+\Delta}\\
		= & \Delta\cdot\left( \frac{x}{x-\Delta} - \frac{1-x}{1-x+\Delta}\right)\\
		= & \Delta\cdot\left(\frac{\Delta}{x-\Delta} + \frac{\Delta}{1-x+\Delta}\right)\\
		\leq & \Delta^2\cdot\left(\frac{1}{y} + \frac{1}{1-y}\right)
		\leq \frac{2\Delta^2}{b}.
	\end{align*}
\end{proof}

\section{Missing Proofs}\label{sec:missing-pf}

\subsection{Supporting Lemmas}

\begin{definition}[Strong convexity]\label{def:strong-conv}
	A function $\Phi:[-1,1]\rightarrow\=R$ is $\alpha$-strongly-convex if for all $u,u'\in\=R$, we have
	\begin{align*}
		\frac{\alpha}{2}(u'-u)^2\leq \Phi(u') - \Phi(u) - \nabla\Phi(u)(u'-u).
	\end{align*}
	where $\nabla\Phi$ means the derivative of $\Phi$.
\end{definition}

\begin{lemma}\label{lem:width-lower-bound}
	For any $t\in[T]$, if $f^\star\in\+F_t$, then we have $w_t\geq\Delta$ whenever $|\+A_t|>1$.
\end{lemma}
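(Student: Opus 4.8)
The plan is to exploit the definition of the query rule together with the uniform gap assumption. Recall that $Z_t = \indic\{|\+A_t| > 1\}$, and $\+A_t = \{\pi_f(x_t) : f \in \+F_t\}$. So the hypothesis $|\+A_t| > 1$ means there exist two functions $f, f' \in \+F_t$ whose greedy arms at $x_t$ differ; call these arms $a = \pi_f(x_t)$ and $a' = \pi_{f'}(x_t)$ with $a \neq a'$. Since we are assuming $f^\star \in \+F_t$, the optimal arm $\pi_{f^\star}(x_t)$ is some element of $\+A_t$, and at least one of $a, a'$ is not equal to $\pi_{f^\star}(x_t)$. Without loss of generality say $a \neq \pi_{f^\star}(x_t)$; then by \Cref{asm:uniform-gap}, $f^\star(x_t, \pi_{f^\star}(x_t), a) \geq \Delta$, i.e.\ $f^\star(x_t, a, \pi_{f^\star}(x_t)) \leq -\Delta$.

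Next I would relate this to the width $w_t = \sup_{a,b \in \+A_t}\sup_{f_1, f_2 \in \+F_t} \big(f_1(x_t,a,b) - f_2(x_t,a,b)\big)$. The idea is to pick the pair of arms to be $(a, \pi_{f^\star}(x_t))$, both of which lie in $\+A_t$, and to pick the two functions in $\+F_t$ to be $f$ (the function witnessing that $a$ is a greedy arm) and $f^\star$. On the one hand, since $a = \pi_f(x_t)$ is the best arm under $f$, \Cref{lem:optimal-action-exist} gives $f(x_t, a, \pi_{f^\star}(x_t)) \geq 0$. On the other hand, $f^\star(x_t, a, \pi_{f^\star}(x_t)) \leq -\Delta$ from the previous paragraph. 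Hence
\[
f(x_t, a, \pi_{f^\star}(x_t)) - f^\star(x_t, a, \pi_{f^\star}(x_t)) \;\geq\; 0 - (-\Delta) \;=\; \Delta,
\]
and since both $a$ and $\pi_{f^\star}(x_t)$ are in $\+A_t$ and both $f$ and $f^\star$ are in $\+F_t$, this difference is at most $w_t$ by definition of the supremum. Therefore $w_t \geq \Delta$, which is exactly the claim.

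The argument is essentially a one-paragraph chain of inequalities once the right witnesses are identified, so I do not anticipate a serious obstacle; the only subtlety is the bookkeeping to make sure the arm $a$ and the function $f$ are chosen consistently (i.e.\ $f$ is precisely a version-space function whose greedy arm is the suboptimal $a$), and that we correctly handle the sign conventions coming from anti-symmetry of $f^\star$ and $f$. One should also note that we need $|\+A_t| > 1$ only to guarantee the existence of \emph{some} arm in $\+A_t$ different from $\pi_{f^\star}(x_t)$; everything else follows from realizability ($f^\star \in \+F_t$), the gap assumption, and the definitions of $\pi_f$ and $w_t$.
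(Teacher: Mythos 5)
Your proof is correct and is essentially the same argument as the paper's: identify a version-space function whose greedy arm $a$ differs from $\pi_{f^\star}(x_t)$, apply the uniform gap to $f^\star$ on the pair $(a, \pi_{f^\star}(x_t))$, use that $f(x_t,a,\pi_{f^\star}(x_t))\geq 0$ (or equivalently $f'(x_t,a^\star,a')\leq 0$ in the paper's ordering), and bound the difference by $w_t$. The only cosmetic difference is that the paper fixes the arm order as $(a^\star_t, a')$ while you use $(a,\pi_{f^\star}(x_t))$ and invoke anti-symmetry to flip signs; the substance is identical.
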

\begin{proof}[Proof of \cref{lem:width-lower-bound}]
	When $|\+A_t|>1$, we know there exists a function $f'\in\+F_t$ satisfying
	\begin{align*}
		a':=\pi_{f'}(x_t)\neq \pi_{f^\star}(x_t)=:a^\star_t.
	\end{align*}
	Then we have $\Delta
	\leq f^\star(x_t,a^\star_t,a')\leq f^\star(x_t,a^\star_t,a')-f'(x_t,a^\star_t,a')\leq w_t$ where the second inequality holds since $f'(x_t,a^\star_t,a')\leq0$.
\end{proof}

\begin{lemma}\label{lem:regret-bounded-by-w}
	For any $t\in[T]$ and any arm $a\in\+A_t$, we have $f^\star(x_t,\pi_{f^\star}(x_t),a)\leq w_t$.
\end{lemma}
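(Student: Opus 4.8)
The plan is to show that both $\pi_{f^\star}(x_t)$ and the arbitrary arm $a \in \+A_t$ lie in the candidate arm set $\+A_t$, and then to recognize $f^\star(x_t,\pi_{f^\star}(x_t),a)$ as a difference of two functions in the version space $\+F_t$ evaluated at a pair of arms from $\+A_t$, which is exactly what the width $w_t$ takes a supremum over. First I would recall from the discussion preceding the statement (and from \Cref{lem:pointwise-bound}, which guarantees $f^\star \in \+F_t$ for all $t$ with high probability) that $\pi_{f^\star}(x_t) \in \+A_t$, since $\+A_t = \{\pi_f(x_t) : f \in \+F_t\}$ and $f^\star \in \+F_t$. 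Likewise, any $a \in \+A_t$ is by definition $\pi_g(x_t)$ for some $g \in \+F_t$.

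Next I would write, using anti-symmetry of $f^\star$ (\Cref{asm:properties}) and the optimality of $\pi_{f^\star}(x_t)$ together with the fact that $g(x_t, \pi_g(x_t), b) \ge 0$ for all $b$ (so $g(x_t, a, a) = 0$ and in particular $g(x_t, \pi_{f^\star}(x_t), a) \le 0$, equivalently $g(x_t, a, \pi_{f^\star}(x_t)) \ge 0$... wait): more directly, since $a = \pi_g(x_t)$ we have $g(x_t, a, \pi_{f^\star}(x_t)) \ge 0$ by \Cref{lem:optimal-action-exist}, hence $g(x_t, \pi_{f^\star}(x_t), a) \le 0$ by anti-symmetry. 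Therefore
\begin{align*}
f^\star(x_t, \pi_{f^\star}(x_t), a)
\le f^\star(x_t, \pi_{f^\star}(x_t), a) - g(x_t, \pi_{f^\star}(x_t), a)
\le \sup_{a',b' \in \+A_t} \sup_{f,f' \in \+F_t} \big( f(x_t,a',b') - f'(x_t,a',b') \big) = w_t,
\end{align*}
where the last step uses that $f^\star, g \in \+F_t$ and $\pi_{f^\star}(x_t), a \in \+A_t$.

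The argument is essentially bookkeeping, so I do not expect a genuine obstacle; the one point requiring care is that the statement as written does not explicitly condition on $f^\star \in \+F_t$, so I would either add that hypothesis (as in \Cref{lem:width-lower-bound}) or invoke the high-probability event from \Cref{lem:pointwise-bound} on which it holds. A secondary subtlety is making sure the definition of $w_t$ in \Cref{alg:cb} ranges over \emph{ordered} pairs $(a',b')$ and over \emph{both} choices of which function is subtracted, so that the specific term $f^\star(x_t,\pi_{f^\star}(x_t),a) - g(x_t,\pi_{f^\star}(x_t),a)$ is genuinely dominated by the supremum; this is immediate from the algorithm's definition of $w_t$ as a double supremum.
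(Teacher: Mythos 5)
Your proof is correct and takes essentially the same route as the paper's: find $g\in\+F_t$ with $a=\pi_g(x_t)$, observe $g(x_t,\pi_{f^\star}(x_t),a)\le 0$ so that $f^\star(x_t,\pi_{f^\star}(x_t),a)\le f^\star(x_t,\pi_{f^\star}(x_t),a)-g(x_t,\pi_{f^\star}(x_t),a)$, and bound the latter by the double supremum defining $w_t$. Your side remark is also accurate: the bound implicitly requires $f^\star\in\+F_t$ (both so that $\pi_{f^\star}(x_t)\in\+A_t$ and so that $f^\star$ is admissible in the supremum over $\+F_t$), a hypothesis the paper uses tacitly and satisfies by always invoking this lemma on the high-probability event from \Cref{lem:pointwise-bound}.
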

\begin{proof}[Proof of \Cref{lem:regret-bounded-by-w}]
For any $a\in\+A_t$, by the definition of $\+A_t$, there must exists a function $f$ for which $a=\pi_{f}(x_t)$. Hence,
	\begin{align*}
		f^\star(x_t,\pi_{f^\star}(x_t),a)
		\leq
		f^\star(x_t,\pi_{f^\star}(x_t),a)-f(x_t,\pi_{f^\star}(x_t),a)\leq w_t,
	\end{align*}
	where the first inequality holds since $f(x_t,\pi_{f^\star}(x_t),a)\leq0$.
\end{proof}

The following lemma is adapted from \citet[Lemma 2]{agarwal2013selective}.
\begin{lemma}\label{lem:pointwise-bound}
	The following holds with probability at least $1-\delta$ for any $T>3$,
	\begin{align*}
	\sum_{t=1}^TZ_t\big(f^\star(x_t,a_t,b_t)-f_t(x_t,a_t,b_t)\big)^2
	\leq\frac{4\Upsilon}{\alpha}+\frac{16+24\alpha}{\alpha^2}\log\big(4\delta^{-1}\log(T)\big).
	\end{align*} 
\end{lemma}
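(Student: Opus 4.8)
The plan is to bound the sum $\sum_{t=1}^T Z_t\bigl(f^\star(x_t,a_t,b_t)-f_t(x_t,a_t,b_t)\bigr)^2$ by first relating the squared prediction error to the excess loss $\ell_\phi(f_t(x_t,a_t,b_t),y_t)-\ell_\phi(f^\star(x_t,a_t,b_t),y_t)$ incurred by the online regression oracle, and then applying a martingale concentration argument (Freedman-type, via \Cref{lem:free2}) to pass from the observed excess loss to its conditional expectation, which is exactly what the oracle regret $\Upsilon$ controls. This is the standard route used in \citet[Lemma 2]{agarwal2013selective}, adapted here to the fact that we only accumulate error on rounds where $Z_t=1$ and to the general $\alpha$-strongly-convex link loss rather than the squared loss specifically.

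The key steps, in order: (i) Fix a round $t$ with $Z_t=1$. Using that $\phi=\nabla\Phi$ and $\Phi$ is $\alpha$-strongly convex, a first-order/strong-convexity expansion of $\ell_\phi(d,y)=\Phi(d)-d(y+1)/2$ around $d=f^\star(x_t,a_t,b_t)$ gives the pointwise inequality
\[
\ell_\phi\bigl(f_t(x_t,a_t,b_t),y_t\bigr)-\ell_\phi\bigl(f^\star(x_t,a_t,b_t),y_t\bigr)
\;\ge\;
\bigl\langle\nabla_d\ell_\phi(f^\star(x_t,a_t,b_t),y_t),\,f_t-f^\star\bigr\rangle
+\frac{\alpha}{2}\bigl(f_t(x_t,a_t,b_t)-f^\star(x_t,a_t,b_t)\bigr)^2 .
\]
(ii) Take conditional expectation over $y_t$: since $\E[(y_t+1)/2\mid x_t,a_t,b_t]=\phi(f^\star(x_t,a_t,b_t))=\nabla\Phi(f^\star(x_t,a_t,b_t))$, the linear term has zero conditional expectation, so
\[
\E_t\Bigl[\ell_\phi(f_t,y_t)-\ell_\phi(f^\star,y_t)\Bigr]\;\ge\;\frac{\alpha}{2}\,\E_t\Bigl[\bigl(f_t(x_t,a_t,b_t)-f^\star(x_t,a_t,b_t)\bigr)^2\Bigr],
\]
where the right side is actually deterministic given the history since $f_t$ and $f^\star$ are fixed at round $t$. (iii) Sum over the rounds with $Z_t=1$; by \Cref{asm:oracle-regret} the left side summed is at most $\Upsilon$, yielding $\sum_t Z_t(f_t-f^\star)^2\le \tfrac{2}{\alpha}\bigl(\Upsilon+\text{(martingale slack)}\bigr)$, where the martingale slack arises from replacing the summed conditional excess losses by the actual observed excess losses fed to the oracle. (iv) Control that slack: the per-round excess loss minus its conditional mean is a bounded martingale difference sequence (boundedness follows from $|d|\le 1$ and boundedness of $\Phi,\phi$ on $[-1,1]$), and crucially its conditional variance is itself bounded by a constant times the conditional mean of the squared error — the same quantity we are trying to bound. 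Plugging the variance-aware tail bound of \Cref{lem:free2} and solving the resulting self-bounding (quadratic) inequality in $S\coloneqq\sum_t Z_t(f_t-f^\star)^2$ produces the stated bound $\tfrac{4\Upsilon}{\alpha}+\tfrac{16+24\alpha}{\alpha^2}\log(4\delta^{-1}\log T)$, with the $4\ln(T)\delta$ failure probability of \Cref{lem:free2} matching the $\log(T)$ inside the bound.

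The main obstacle is step (iv): getting the self-bounding variance estimate right. One must show that $\mathrm{Var}_t\bigl[\ell_\phi(f_t,y_t)-\ell_\phi(f^\star,y_t)\bigr]\le c\cdot\E_t\bigl[(f_t(x_t,a_t,b_t)-f^\star(x_t,a_t,b_t))^2\bigr]$ for an explicit constant $c$ depending on $\alpha$ — this is where the precise numerical constants $16$ and $24\alpha$ come from — and then carefully solve the quadratic inequality $S \le \tfrac{2\Upsilon}{\alpha} + (\text{const})\sqrt{S\log(1/\delta)}/\alpha + (\text{const})\log(1/\delta)/\alpha$ to isolate $S$, using $\sqrt{ab}\le (a+b)/2$ to absorb the cross term. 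The bookkeeping for the boundedness constant $b$ in \Cref{lem:free2} and tracking how it interacts with $\alpha$ is the fiddly part; everything else is a direct application of strong convexity and the oracle assumption. I would also need to note at the outset that on rounds with $Z_t=0$ no data is fed to the oracle and $f_{t+1}=f_t$, so the oracle-regret sum and the target sum range over exactly the same set of rounds, which is why the indicator $Z_t$ appears consistently.
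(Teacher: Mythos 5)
Your proposal is correct and follows essentially the same route as the paper's proof: the quantity $\E_t[\ell_\phi(f_t,y_t)-\ell_\phi(f^\star,y_t)]$ you lower-bound via strong convexity is exactly the Bregman divergence $D_\Phi(f_t(z_t),f^\star(z_t))$ the paper works with, your martingale difference (observed excess loss minus its conditional mean) is precisely $-\nu_t$ in the paper, and your self-bounding variance estimate, application of \Cref{lem:free2}, and AM-GM closing step all match. The remark about $Z_t$ aligning the oracle-regret sum with the target sum is also the right bookkeeping observation.
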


\begin{proof}[Proof of \Cref{lem:pointwise-bound}]
Throughout the proof, we denote $z_t:=(x_t,a_t,b_t)$ for notational simplicity. We define $D_\Phi$ as the Bregman divergence of the function $\Phi$:
\begin{align*}
	D_\Phi(u,v)=\Phi(u)-\Phi(v)-\phi(v)(u-v)
\end{align*}
where we recall that $\phi=\Phi'$ is the derivative of $\Phi$. Since $\Phi$ is $\alpha$-strong convex, we have $\alpha(u-v)^2/2\leq D_\Phi(u,v)$, and hence,
\begin{align}\label{eq:square-bregman}
	\sum_{t=1}^T Z_t\big(f^\star(z_t)-f_t(z_t)\big)^2\leq\frac{2}{\alpha}\sum_{t=1}^T Z_tD_\Phi(f_t(z_t),f^\star(z_t)).
\end{align}
Hence, it suffice to derive an upper bound for the Bregman divergence in the right hand side above. Define $\nu_t$ as below:
\begin{align*} 
	\nu_t :=&Z_t\Big[D_\Phi\left(f_t(z_t), f^\star(z_t)\right)-\left(\ell_\phi\left(f_t(z_t),y_t\right)-\ell_\phi\left(f^\star(z_t), y_t\right)\right)\Big] \\
	 =&Z_t\Big[D_\Phi\left(f_t(z_t), f^\star(z_t)\right)-\left(\Phi\left(f_t(z_t)\right)-(y_t+1)f_t(z_t)/2-\Phi\left(f^\star(z_t)\right)+(y_t+1)f^\star(z_t)/2\right)\Big] \\
	 =&Z_t\Big[\Phi\left(f_t(z_t)\right)-\Phi\left(f^\star(z_t)\right)-\phi\left(f^\star(z_t)\right)\left(f_t(z_t)-f^\star(z_t)\right)\\
    &\quad-\left(\Phi\left(f_t(z_t)\right)-(y_t+1)f_t(z_t)/2-\Phi\left(f^\star(z_t)\right)+(y_t+1)f^\star(z_t)/2\right)\Big] \\
	 =&Z_t\big(f_t(z_t)-f^\star(z_t)\big)\big((y_t+1)/2-\phi(f^\star(z_t))\big)
	\end{align*}
We note that $\E_t[(y_t+1)/2]=\phi(f^\star(z_t))$, and thus $\E_t[\nu_t]=0$, which means $\nu_t$ is a martingale difference sequence. Now we bound the value and the conditional variance of $\nu_t$ in order to derive concentration results.
\begin{enumerate}
	\item Bound the value of $\nu_t$: 
	\begin{align*}
		|\nu_t|\leq| (y_t+1)/2-\phi\left(f^\star(z_t)\right)|\cdot|f_t(z_t)-f^\star(z_t)|\leq1\cdot2 =2.
	\end{align*}
	\item Bound the conditional variance of $\nu_t$: 
	\begin{align*}
		\E_t[\nu_t^2]
		=& Z_t\E_t\left[\left( (y_t+1)/2-\phi\left(f^\star(z_t)\right)\right)^2\left(f_t(z_t)-f^\star(z_t)\right)^2\right]\\
		\leq& Z_t\E_t\left[\left(f_t(z_t)-f^\star(z_t)\right)^2\right]\\
		\leq& Z_t\E_t\left[\frac{2}{\alpha}\cdot D_\Phi(f_t(z_t),f^\star(z_t))\right]\\
		\leq& \frac{2Z_t}{\alpha}D_\Phi(f_t(z_t),f^\star(z_t))
	\end{align*}
	where for the last line we note that $x_t,g_t$ are measurable at $t$.
\end{enumerate}
Now we apply \Cref{lem:free2}, which yields for any $\delta<1/e$ and $T>3$, with probability at least $1-4\delta\log(T)$,
\begin{align*}
	\sum_{t=1}^T\nu_t
	\leq&\max\left\{
	2\sqrt{\sum_{t=1}^T\frac{2Z_t}{\alpha}D_\Phi(f_t(z_t),f^\star(z_t))},6\sqrt{\log(1/\delta)}
	\right\}\sqrt{\log(1/\delta)}\\
	\leq&
	2\sqrt{{\sum_{t=1}^T\frac{2 Z_t}{{\alpha}}D_\Phi(f_t(z_t),f^\star(z_t))}{\log(1/\delta)}}+6{\log(1/\delta)}\tag{since $\max(a,b)\leq a+b$}\\
	\leq&
	{\sum_{t=1}^T\frac{1}{2}Z_tD_\Phi(f_t(z_t),f^\star(z_t))}+{\frac{4\log(1/\delta)}{\alpha}}+6{\log(1/\delta)}\tag{AM-GM}
\end{align*}
Recall the definition of $\nu_t$, and we conclude that
\begin{align*}
	\sum_{t=1}^TZ_t{D}_{\Phi}\left(f_t(z_t), f^\star(z_t)\right)-\sum_{t=1}^TZ_t\Big(\ell_\phi\big(f_t(z_t), y_t\big)-\ell_\phi\big(f^\star(z_t), y_t\big)\Big)\leq\\\sum_{t=1}^T\frac{1}{2}Z_tD_\Phi(f_t(z_t),f^\star(z_t))+\frac{4\log(1/\delta)}{\alpha}+6{\log(1/\delta)},
\end{align*}
which implies
\begin{align*}
	\frac{1}{2}\sum_{t=1}^TZ_t {D}_{\Phi}\left(f_t(z_t), f^\star(z_t)\right)\leq\sum_{t=1}^TZ_t\Big(\ell_\phi\big(f_t(z_t), y_t\big)-\ell_\phi\big(f^\star(z_t), y_t\big)\Big)+\frac{4\log(1/\delta)}{\alpha}+6{\log(1/\delta)}.
\end{align*}
Plugging this upper bound of Bregman divergence into \eqref{eq:square-bregman}, we obtain that, with probability at least $1-4\delta\log(T)$, for any $\delta<1/e$ and $T>3$, we have
	\begin{align*}
	\sum_{t=1}^TZ_t\big(f^\star(z_t)-f_t(z_t)\big)^2
	\leq\frac{4}{\alpha}\Upsilon+\left(\frac{16}{\alpha^2}+\frac{24}{\alpha}\right)\log(\delta^{-1})=:\beta
	\end{align*}
Finally, we finish the proof by adjusting the coefficient $\delta$ and taking a union bound to obtain the desired result.
\end{proof}

The following lemma is a variant of \citet[Proposition 3]{russo2013eluder}, with the main difference being that (1) the version space is established using the function produced by the oracle instead of the least squares estimator, and (2) the extra multiplicative factor $Z_t$.
\begin{lemma}\label{lem:w-eluder}
	For \Cref{alg:cb}, it holds that
	\begin{align}\label{eq:w-eluder-eq}
		\sum_{t=1}^T Z_t \indic\left\{\sup_{f,f'\in\+F_t}f(x_t,a_t,b_t)-f'(x_t,a_t,b_t)> \epsilon\right\}\leq\left(\frac{4\beta}{\epsilon^2}+1\right)\@{dim}_E(\+F,\epsilon)
	\end{align}
	for any constant $\epsilon>0$,
\end{lemma}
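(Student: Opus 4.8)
\textbf{Proof proposal for \Cref{lem:w-eluder}.}

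The plan is to mimic the classical eluder-dimension counting argument of \citet[Proposition 3]{russo2013eluder}, but restricted to the rounds on which the algorithm actually queries (those with $Z_t = 1$), and using the version-space width $\sup_{f, f' \in \+F_t} f(x_t, a_t, b_t) - f'(x_t, a_t, b_t)$ in place of the usual confidence-ball width. Let me write $z_t = (x_t, a_t, b_t)$. First I would argue that if the width at round $t$ exceeds $\epsilon$ and $Z_t = 1$, then $z_t$ is $\epsilon$-dependent on at most $4\beta/\epsilon^2$ of its \emph{queried} predecessors $\{z_s : s < t, Z_s = 1\}$. To see this: suppose $z_t$ were $\epsilon$-dependent on a sub-block $B$ of queried predecessors of size $|B|$. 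By the definition of the version space $\+F_t$, any two functions $f, f' \in \+F_t$ satisfy $\sum_{s \le t-1} Z_s (f(z_s) - f_s(z_s))^2 \le \beta$ each, hence (triangle inequality in $\ell_2$, or just $(a-b)^2 \le 2a^2 + 2b^2$) we get $\sum_{s \in B} (f(z_s) - f'(z_s))^2 \le 4\beta$. Grouping these into blocks of consecutive $\epsilon$-dependence: if $z_t$ were $\epsilon$-dependent on $K$ disjoint such blocks, then picking the witnessing $f, f' \in \+F_t$ that realize width $> \epsilon$ at $z_t$, $\epsilon$-dependence on each block forces $\sum_{s \in \text{block}} (f(z_s) - f'(z_s))^2 > \epsilon^2$, so $4\beta \ge K \epsilon^2$, i.e. $K \le 4\beta/\epsilon^2$. (I will need to be a little careful here with the exact form of the $\epsilon$-dependence definition given in the excerpt, which uses $\sum (g(x_i) - g'(x_i)) \le \epsilon^2$ rather than the squared sum — I would double-check whether the intended reading is $\sum (g(x_i) - g'(x_i))^2$, which is surely what is meant, and proceed accordingly.)

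Second, I would run the standard pigeonhole/averaging step. Let $\tau_1 < \tau_2 < \cdots$ enumerate the rounds on which both $Z_t = 1$ and the width exceeds $\epsilon$, and let $N$ be their total number, so the left-hand side of \eqref{eq:w-eluder-eq} equals $N$. Consider the subsequence $\{z_{\tau_j}\}_{j \le N}$. A counting lemma (identical to the one in \citet{russo2013eluder}) says: in any finite sequence of elements each of which is $\epsilon$-independent of some nonempty subset of its predecessors within the sequence, if every element is $\epsilon$-dependent on at most $L$ disjoint subsequences of its predecessors, then the sequence has length at most $(L+1) \cdot \@{dim}_E(\+F, \epsilon)$. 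Applying this with $L = 4\beta/\epsilon^2$ from the previous paragraph yields $N \le (4\beta/\epsilon^2 + 1)\@{dim}_E(\+F, \epsilon)$, which is exactly the claim.

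The main obstacle, and the place I would spend the most care, is the first step: establishing that each queried round of large width is $\epsilon$-dependent on only $O(\beta/\epsilon^2)$ disjoint blocks of its queried predecessors. The subtlety relative to \citet{russo2013eluder} is that their argument uses a confidence set centered at the least-squares estimate with a radius controlled directly, whereas here the version space $\+F_t$ is defined through the \emph{online oracle's} iterates $f_s$, and the quantity $\sum_s Z_s (f(z_s) - f_s(z_s))^2 \le \beta$ is an $\ell_2$-ball around the (moving) sequence $(f_s)$ rather than a fixed point. I would handle this by noting that for any two $f, f' \in \+F_t$, the per-round discrepancy satisfies $(f(z_s) - f'(z_s))^2 \le 2(f(z_s) - f_s(z_s))^2 + 2(f_s(z_s) - f'(z_s))^2$, so that $\sum_{s < t} Z_s (f(z_s) - f'(z_s))^2 \le 4\beta$ — the $\ell_2$ diameter of the version space on the queried data is at most $2\sqrt{\beta}$. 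The factor $Z_s$ inside is exactly what restricts all the bookkeeping to queried rounds, so the counting should be done over the subsequence of queried rounds only; once that is set up, the rest is the verbatim eluder argument. I would also note that the width $w_t$ defined in \Cref{alg:cb} is a supremum over $a, b \in \+A_t$, whereas here the indicator is about the specific queried pair $(a_t, b_t)$, so the bound applies a fortiori to the event in the lemma (and this distinction does not affect the proof).
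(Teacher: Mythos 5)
Your proposal is correct and follows essentially the same route as the paper's proof: restrict attention to the subsequence of queried rounds, use $(a-b)^2 \le 2a^2 + 2b^2$ (applied to $f - f_s$ and $f_s - f'$) to bound the version-space $\ell_2$-diameter on that subsequence by $4\beta$, invoke the contrapositive of $\epsilon$-dependence on the witnessing pair $f, f'$ to bound the number of disjoint dependent blocks by $4\beta/\epsilon^2$, and then run the \citet{russo2013eluder} pigeonhole counting argument. The caveats you flag (the typo in the $\epsilon$-dependence definition, the moving center $(f_s)$, and the distinction between $w_t$ and the width at the particular queried pair $(a_t, b_t)$) are all handled in the paper exactly as you anticipate.
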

\begin{proof}[Proof of \Cref{lem:w-eluder}]
We first define a subsequence consisting only of the elements for which we made a query in that round. Specifically, we define $((x_{i_1}, a_{i_1}, b_{i_1}), (x_{i_2}, a_{i_2}, b_{i_2}), \dots, (x_{i_k}, a_{i_k}, b_{i_k}))$ where $1\leq i_1 < i_2 < \dots < i_k\leq T$ and $(x_t, a_t, b_t)$ belongs to the subsequence if and only if $Z_t=1$. We further simplify the notation by defining $z_j:=(x_{i_j},a_{i_j},b_{i_j})$ and $f(z_j):=f(x_{i_j},a_{i_j},b_{i_j})$. Then we note that the left-hand side of \eqref{eq:w-eluder-eq} is equivalent to
	\begin{align}\label{eq:equal1}
		\sum_{j=1}^k  \indic\left\{\sup_{f,f'\in\+F_j}f(z_j)-f'(z_j)> \epsilon\right\},
	\end{align}
	and the version space in \Cref{alg:cb} is equal to
	\begin{align}\label{eq:equal2}
		\+F_j=\left\{f\in\+F:\sum_{s=1}^{j-1} \Big(f(z_s)-f_t(z_s)\Big)^2\leq \beta\right\}.
	\end{align}
	
	Hence, it suffice to establish the lower bound for \eqref{eq:equal1} under the version space of \eqref{eq:equal2}. To that end, we make one more simplicication in notation: we denote 
\begin{align*}
	w'_j:=\sup_{f,f'\in\+F_j}f(z_j)-f'(z_j)
\end{align*}

We begin by showing that if $w'_j>\epsilon$ for some $j\in[k]$, then $z_j$ is $\epsilon$-dependent on at most $4\beta/\epsilon^2$ disjoint subsequence of its predecessors. To see this, we note that when $w'_j>\epsilon$, there must exist two function $f,f'\in\+F_j$ such that $f(z_j)-f'(z_j)>\epsilon$. If $z_j$ is $\epsilon$-dependent on a subsequence $(z_{i_1},z_{i_2},\dots,z_{i_n})$ of its predecessors, we must have
\begin{align*}
\sum_{s=1}^{n} \big(f(z_{i_s})-f'(z_{i_s})\big)^2>\epsilon^2.
\end{align*}
Hence, if $z_j$ is $\epsilon$-dependent on $l$ disjoint subsequences, we have 
\begin{align}\label{eq:eluder1}
	\sum_{s=1}^{j-1} \big(f(z_s)-f'(z_s)\big)^2
	>
	l\epsilon^2.
\end{align}
For the left-hand side above, we also have
\begin{align*}
	\sum_{s=1}^{j-1} \big(f(z_s)-f'(z_s)\big)^2
	\leq
	2\sum_{s=1}^{j-1} \big(f(z_s)-f_t(z_s)\big)^2
	+
	2\sum_{s=1}^{j-1} \big(f_t(z_s)-f'(z_s)\big)^2
	\leq4\beta\numberthis\label{eq:eluder2}
\end{align*}
where the first inequality holds since $(a+b)^2\leq2(a^2+b^2)$ for any $a,b$, and the second inequality holds by \eqref{eq:equal2}. Combining \eqref{eq:eluder1} and \eqref{eq:eluder2}, we get that $l\leq4\beta/\epsilon^2$.
	
	Next, we show that for any sequence $(z'_1,\dots,z'_\tau)$, there is at least one element that is $\epsilon$-dependent on at least $\tau/d-1$ disjoint subsequence of its predecessors, where $d:=\@{dim}_E(\+F,\epsilon)$. To show this, let $m$ be the integer satisfying $md+1\leq \tau\leq md+d$. We will construct $m$ disjoint subsequences, $B_1,\dots,B_m$. At the beginning, let $B_i=(z'_i)$ for $i\in[m]$. If $z'_{m+1}$ is $\epsilon$-dependent on each subsequence $B_1,\dots,B_m$, then we are done. Otherwise, we select a subsequence $B_i$ which $z'_{m+1}$ is $\epsilon$-independent of and append $z'_{m+1}$ to $B_i$. We repeat this process for all elements with indices $j>m+1$ until either $z'_j$ is $\epsilon$-dependent on each subsequence or $j=\tau$. For the latter, we have $\sum_{i=1}^m |B_i|\geq md$, and since each element of a subsequence $B_i$ is $\epsilon$-independent of its predecesors, we must have $|B_i|=d$ for all $i$. Then, $z_\tau$ must be $\epsilon$-dependent on each subsequence by the definition of eluder dimension.
	
	Finally, let's take the sequence $(z'_1,\dots z'_\tau)$ to be the subsequence of $(z_1,\dots,z_k)$ consisting of elements $z_j$ for which $w_j'>\epsilon$. As we have established, we have (1) each $z'_j$ is $\epsilon$-dependent on at most $4\beta/\epsilon^2$ disjoint subsequences, and (2) some $z'_j$ is $\epsilon$-dependent on at least $\tau/d-1$ disjoint subsequences. Therefore, we must have $\tau/d-1\leq 4\beta/\epsilon^2$, implying that $\tau\leq(4\beta/\epsilon^2+1)d$.
\end{proof}

The following lemma is adopted from \citet[Lemma 3]{saha2022efficient}.
\begin{lemma}\label{lem:igw}
	For any function $f\in\+F$ and any context $x\in\+X$, the following convex program of $p\in\Delta(\+A)$ is always feasible:
	\begin{align*}
		\forall a\in\+A:
		\sum_{b}f(x,a,b)p(b)+\frac{2}{\gamma p(a)}\leq\frac{5A}{\gamma}.
	\end{align*}
	Furthermore, any solution $p$ satisfies:
	\begin{align*}
		\E_{a\sim p}\Big[f^\star(x,\pi_{f^\star}(x),a)\Big]
		\leq 
		\frac{\gamma}{4}\E_{a,b\sim p}\Big[\big(f(x,a,b)-f^\star(x,a,b)\big)^2\Big]+\frac{5A}{\gamma}
	\end{align*}
	whenever $\gamma\geq 2A$.
\end{lemma}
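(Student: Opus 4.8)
The plan is to reduce this statement to the simpler inverse-gap-weighting lemma already available, namely \Cref{lem:igw-general}, by exploiting the transitivity and anti-symmetry structure of $\+F$ (\Cref{asm:properties}). The first step is to fix the function $f$ and context $x$, let $\^a := \pi_f(x)$ be the greedy arm under $f$ at $x$ (which exists by \Cref{lem:optimal-action-exist}), and similarly let $a^\star := \pi_{f^\star}(x)$. I would like to define a surrogate one-dimensional ``score'' vector $\^y \in [0,1]^A$ by $\^y(a) := \tfrac{1}{2}\bigl(1 - f(x,\^a,a)\bigr)$, so that $\^y(\^a)$ is maximal (since $f(x,\^a,a)\ge 0$ for all $a$) and $\^y(\^a) - \^y(a) = \tfrac{1}{2} f(x,\^a,a)$. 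The key observation making this work is that, by anti-symmetry, $f(x,a,b) = f(x,\^a,b) - f(x,\^a,a) = 2(\^y(a) - \^y(b))$ — wait, this identity requires that $f(x,a,b) = f(x,\^a,b)-f(x,\^a,a)$, which is \emph{not} implied by transitivity alone. So instead I would work directly: define $g(a) := f(x,\^a,a)/2 \in[0,1]$ (nonnegativity from $\^a$ being greedy, upper bound from the range of $f$), and note $g(\^a)=0$ is the minimum; set $\^y(a) := g(\^a)-g(a)$... the cleanest route is to directly mimic the IGW argument rather than literally invoke \Cref{lem:igw-general}.

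Concretely, for feasibility: I would plug in the proposed IGW distribution $p(a) = \tfrac{1}{A + \gamma(g(a) - g(\^a))}$ for $a\ne\^a$ (noting $g(a)-g(\^a)\ge 0$), with $p(\^a)$ the remainder, and verify by direct computation — exactly as in \citet{saha2022efficient} or \citet{foster2020beyond} — that $\sum_a p(a)\le 1$ (using $\gamma\ge 2A$ to control the remainder mass) and that each constraint $\sum_b f(x,a,b)p(b) + 2/(\gamma p(a)) \le 5A/\gamma$ holds; the factor $5$ (rather than $4$) absorbs the discrepancy between the true pairwise values $f(x,a,b)$ and the ``difference form'' $g(b)-g(a)$, which differ by a transitivity slack term that I would bound using $f(x,a,b)\in[-1,1]$ and Assumption~\ref{asm:properties}. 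For the regret bound: I would split $\E_{a\sim p}[f^\star(x,a^\star,a)]$ by relating $f^\star$ to $f$ via $f^\star(x,a^\star,a) \le f^\star(x,a^\star,a) - f(x,a^\star,a) + f(x,a^\star,a)$, bound $f(x,a^\star,a) \le f(x,\^a,a) = 2g(a)$ times appropriate terms, apply the IGW inequality from \Cref{lem:igw-general} to the surrogate scores to get $\E_{a\sim p}[2g(a)] \le \tfrac{\gamma}{4}\E_{a,b\sim p}[(f(x,a,b)-f^\star(x,a,b))^2] + \tfrac{A}{\gamma}$-type control, and collect the leftover first-order error terms $f^\star - f$ into the squared-error term via AM-GM (a $2ab \le \tfrac{\gamma}{2}a^2 \cdot \tfrac{2}{\gamma} + \ldots$ split), which is where the constants $\tfrac{\gamma}{4}$ and $\tfrac{5A}{\gamma}$ on the right-hand side come from.

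The main obstacle I anticipate is that $f(x,a,b)$ is \emph{not} literally a difference of a potential function $g(a)-g(b)$ — only the ``reward difference'' special case (\Cref{lem:ex-r-r}) has that structure — so the clean IGW algebra doesn't apply verbatim. Everything hinges on showing that transitivity plus anti-symmetry give enough control on the ``non-potential'' part of $f$; specifically I expect to need an inequality of the form: if $f(x,\^a,a)\ge 0$ and $f(x,\^a,b)\ge 0$ then $f(x,a,b) \le $ [something like $f(x,\^a,b)$ when $f(x,\^a,a)$ is the smaller], obtained by case analysis on the sign of $f(x,a,b)$ together with transitivity. Getting the bookkeeping in this step right — and in particular making sure the slack is absorbed by the generous constants $5A/\gamma$ (feasibility) and $\tfrac{\gamma}{4},\,\tfrac{5A}{\gamma}$ (regret) rather than blowing them up — is the delicate part; the rest is the standard inverse-gap-weighting calculation that I would carry over from \citet{saha2022efficient,foster2020beyond}.
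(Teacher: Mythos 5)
Your proposal has two genuine gaps, and the underlying difficulty is exactly the one you flag mid-stream: $f$ is not a difference of potentials, so any route that commits to an explicit IGW distribution built from a one-dimensional score $g(a)=f(x,\^a,a)/2$ is misaligned with the general setting. The paper does not prove this lemma itself but imports it from \citet[Lemma~3]{saha2022efficient}; the argument there establishes both claims \emph{without} ever representing $f$ as a difference.

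For feasibility, you propose to plug in $p(a)=1/(A+\gamma(g(a)-g(\^a)))$ and absorb the discrepancy between $f(x,a,b)$ and $2(g(b)-g(a))$ into the slack $5A/\gamma$. This cannot work: the discrepancy $\sum_b\bigl(f(x,a,b)-2(g(b)-g(a))\bigr)p(b)$ is $\Theta(1)$ in general (transitivity constrains only signs, not magnitudes), while $5A/\gamma\to 0$ as $\gamma$ grows, so the constraint would be violated for large $\gamma$. The argument that does work is a minimax/duality one: consider
\[
\min_{p\in\Delta(\+A)}\max_{q\in\Delta(\+A)}\ \E_{a\sim q,\,b\sim p}\big[f(x,a,b)\big]+\E_{a\sim q}\Big[\tfrac{2}{\gamma p(a)}\Big],
\]
swap min and max (Sion's theorem), and for any $q$ take $p=q$; anti-symmetry makes the bilinear term vanish and the barrier term evaluates to at most $2A/\gamma$. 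No explicit $p$ is ever constructed.

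For the regret bound, your chain requires $f(x,a^\star,a)\le f(x,\^a,a)$, which is false under Assumption~\ref{asm:properties}. Counterexample with three arms: let $f(1,2)=0.01$, $f(1,3)=0.02$, $f(2,3)=0.5$ (anti-symmetric, transitive with order $1\succ 2\succ 3$), and choose $f^\star$ so that $a^\star:=\pi_{f^\star}(x)=2$ while $\^a=\pi_f(x)=1$; then with $a=3$ we get $f(x,a^\star,a)=0.5>0.02=f(x,\^a,a)$. Moreover the lemma asserts the bound for \emph{every} feasible $p$, so invoking \Cref{lem:igw-general} (which is specific to a constructed $p$) would not suffice in any case. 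The correct route bypasses $\^a$ entirely: decompose $f^\star(x,a^\star,a)=\bigl[f^\star(x,a^\star,a)-f(x,a^\star,a)\bigr]+f(x,a^\star,a)$. The second term, in expectation over $a\sim p$, equals $\sum_b f(x,a^\star,b)p(b)\le 5A/\gamma-2/(\gamma p(a^\star))$ by the feasibility constraint at $a=a^\star$. The first term is rewritten as $\tfrac{1}{p(a^\star)}\E_{a,b\sim p}\bigl[\indic\{b=a^\star\}\,(f^\star(x,b,a)-f(x,b,a))\bigr]$, bounded by Cauchy--Schwarz to $\tfrac{1}{\sqrt{p(a^\star)}}\sqrt{\E_{a,b\sim p}[(f^\star(x,a,b)-f(x,a,b))^2]}$, and then split by AM--GM into $\tfrac{\gamma}{4}\E_{a,b\sim p}[(f^\star-f)^2]+\tfrac{1}{\gamma p(a^\star)}$. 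Adding the two pieces cancels the $1/(\gamma p(a^\star))$ contributions favorably and gives the stated bound; neither transitivity nor any potential structure is used in this step.
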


\begin{lemma}\label{lem:igw-r-version}
	Assume that for each $f\in\+F$, there exists an associated function $r:\+X\times\+A\rightarrow[0,1]$ such that $f(x,a,b)=r(x,a)-r(x,b)$ for any $x\in\+X$ and $a,b\in\+A$. In this case, for any context $x\in\+X$, if we define $p$ as
	\begin{align*}
		p(a)=
		\begin{cases}
			\frac{1}{A+\gamma\big(r(x,\pi_f(x))-r(x,a)\big)} & a\neq\pi_f(x)\\
			1-\sum_{a\neq\pi_f(x)}p(a) & a=\pi_f(x)
		\end{cases},
	\end{align*}
	then we have
	\begin{align*}
		\E_{a\sim p}\Big[f^\star(x,\pi_{f^\star}(x),a)\Big]
		\leq
		\gamma\E_{a,b\sim p}\Big[\Big(f(x,a,b)-f^\star(x,a,b)\Big)^2\Big]+\frac{A}{\gamma}
	\end{align*}
\end{lemma}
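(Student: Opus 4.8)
The plan is to recognize the distribution $p$ as an inverse-gap-weighting (IGW) distribution and reuse the IGW analysis of \Cref{lem:igw-general}. Since $f(x,a,b)=r(x,a)-r(x,b)$, the greedy arm satisfies $\pi_f(x)=\argmax_{a\in\+A}r(x,a)$ (cf.\ \Cref{lem:optimal-action-exist}), so the $p$ in the statement is precisely the IGW distribution of \Cref{lem:igw-general} with predicted scores $\^y(a)\coloneqq r(x,a)$. Applying \Cref{lem:igw-general} with comparator $y^\star(a)\coloneqq r^\star(x,a)\in[0,1]$ — so that $a^\star=\argmax_a r^\star(x,a)=\pi_{f^\star}(x)$ and $y^\star(a^\star)-y^\star(a)=f^\star(x,\pi_{f^\star}(x),a)$ — already gives
\begin{align*}
\E_{a\sim p}\big[f^\star(x,\pi_{f^\star}(x),a)\big]\;\le\;\gamma\,\E_{a\sim p}\big[(r(x,a)-r^\star(x,a))^2\big]+\frac{A}{\gamma}.
\end{align*}
So the only remaining work is to replace the single-sample pointwise error $\E_{a\sim p}[(r(x,a)-r^\star(x,a))^2]$ by the pairwise error $\E_{a,b\sim p}[(f(x,a,b)-f^\star(x,a,b))^2]$ appearing in the claimed bound (which is also the quantity the online regression oracle controls).

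Writing $g(a)\coloneqq r(x,a)-r^\star(x,a)$ we have $f(x,a,b)-f^\star(x,a,b)=g(a)-g(b)$, so $\E_{a,b\sim p}[(f-f^\star)^2]=2\,\mathrm{Var}_p(g)$ while $\E_{a\sim p}[g(a)^2]=\mathrm{Var}_p(g)+(\E_p g)^2$; these are not directly comparable in general, so rather than invoke \Cref{lem:igw-general} verbatim I would re-run the IGW argument keeping the pair structure explicit. Using the reward representations and the optimality of $\pi_f(x)$ under $r$ (i.e.\ $r(x,\pi_{f^\star}(x))\le r(x,\pi_f(x))$), a telescoping insertion of the reward values $r(x,\pi_f(x))$ and $r(x,a)$ yields the pointwise bound
\begin{align*}
f^\star(x,\pi_{f^\star}(x),a)\;\le\;f(x,\pi_f(x),a)+\big(f(x,a,\pi_{f^\star}(x))-f^\star(x,a,\pi_{f^\star}(x))\big).
\end{align*}
Averaging over $a\sim p$: the first term is at most $A/\gamma$ (by \Cref{lem:igw-general} with $\^y=y^\star=r(x,\cdot)$, or directly by $u/(A+\gamma u)\le 1/\gamma$ for $u\ge 0$), and the second term is a prediction error evaluated at the \emph{fixed} comparator action $b=\pi_{f^\star}(x)$.

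The main obstacle is this fixed-action error term. Since the predicted gap $\delta_0\coloneqq r(x,\pi_f(x))-r(x,\pi_{f^\star}(x))\in[0,1]$, we only get $p(\pi_{f^\star}(x))\ge 1/(A+\gamma)$, and bounding $\E_{a\sim p}[(f(x,a,\pi_{f^\star}(x))-f^\star(x,a,\pi_{f^\star}(x)))^2]\le(A+\gamma)\,\E_{a,b\sim p}[(f-f^\star)^2]$ followed by AM–GM leaves a stray additive constant of order $1$, which would be fatal after summing over the $T$ rounds. The fix is a self-bounding argument: $\pi_{f^\star}(x)$ can have large predicted gap only when $f$ and $f^\star$ disagree about it, namely $\delta_0\le f(x,\pi_f(x),\pi_{f^\star}(x))-f^\star(x,\pi_f(x),\pi_{f^\star}(x))$ because $f^\star(x,\pi_{f^\star}(x),\pi_f(x))\ge 0$; the right-hand side is in turn bounded (via $p(\pi_f(x))\ge 1/A$) by a constant multiple of $\sqrt{\E_{a,b\sim p}[(f-f^\star)^2]}$. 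Feeding this estimate of $\delta_0$ back into $p(\pi_{f^\star}(x))=1/(A+\gamma\delta_0)$ and invoking a suitably weighted AM–GM lets the whole error term be absorbed into $\gamma\,\E_{a,b\sim p}[(f-f^\star)^2]+O(A/\gamma)$ with no residual constant; combined with the $A/\gamma$ from the first term this gives the claimed bound (the precise constants $\gamma$ and $A/\gamma$ come from optimizing the AM–GM weights).
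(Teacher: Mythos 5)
Your opening moves are correct and coincide with how the argument naturally begins: $p$ is the IGW distribution of \Cref{lem:igw-general} with predicted scores $\^y(a)=r(x,a)$, and you correctly identify that applying \Cref{lem:igw-general} directly with $y^\star(a)=r^\star(x,a)$ yields the single-point error $\E_{a\sim p}\bigl[(r(x,a)-r^\star(x,a))^2\bigr]$, which is not in general comparable to the pairwise error $\E_{a,b\sim p}\bigl[(f-f^\star)^2\bigr]$ appearing in the statement (the former exceeds half of the latter by $\bigl(\E_p[r-r^\star]\bigr)^2$, which can dominate). The issue is the route you take from there.

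The paper's proof is a one-line exploitation of shift-invariance in \Cref{lem:igw-general}: both the IGW distribution $p$ and the quantity $y^\star(a^\star)-y^\star(a)$ depend only on \emph{differences}, so $\^y$ and $y^\star$ can each be shifted by a (possibly different) constant so long as both stay in $[0,1]$. Fix any $b\in\+A$ and take $\^y(a)=\tfrac12\bigl(r(x,a)-r(x,b)+1\bigr)$, $y^\star(a)=\tfrac12\bigl(r^\star(x,a)-r^\star(x,b)+1\bigr)$. This leaves $p$ unchanged (with the IGW parameter rewritten as $2\gamma$), leaves $y^\star(a^\star)-y^\star(a)=\tfrac12 f^\star(x,\pi_{f^\star}(x),a)$, and turns the error into $\^y(a)-y^\star(a)=\tfrac12\bigl(f(x,a,b)-f^\star(x,a,b)\bigr)$. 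Invoking \Cref{lem:igw-general} with parameter $2\gamma$ and rescaling yields
\begin{align*}
\E_{a\sim p}\bigl[f^\star(x,\pi_{f^\star}(x),a)\bigr]
\le\gamma\,\E_{a\sim p}\bigl[(f(x,a,b)-f^\star(x,a,b))^2\bigr]+\frac{A}{\gamma}
\end{align*}
for every fixed $b$; averaging over $b\sim p$ gives the lemma with exact constants.

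Your telescoping-plus-self-bounding route, by contrast, does not close, and I believe the final sentence of your sketch is false as stated. Write $E\coloneqq\E_{a,b\sim p}[(f-f^\star)^2]$ and $\delta_0\coloneqq r(x,\pi_f(x))-r(x,\pi_{f^\star}(x))$, and consider the regime $\gamma\delta_0>A$. Your self-bound gives $\delta_0^2\le AE(A+\gamma\delta_0)\le 2A\gamma E\,\delta_0$, so $\delta_0\le 2A\gamma E$; then the conditioning step gives $\E_{a\sim p}\bigl[(f(x,a,\pi_{f^\star}(x))-f^\star(x,a,\pi_{f^\star}(x)))^2\bigr]\le E\cdot(A+\gamma\delta_0)\le 2\gamma\delta_0 E\le 4A\gamma^2E^2$, so the fixed-action cross-term is at most $2\sqrt A\,\gamma E$. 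That is a factor of $\sqrt A$ above the $\gamma E$ budget, and no choice of AM--GM weights removes it: the change of measure from $E$ to the error at $b=\pi_{f^\star}(x)$ and the self-bound on $\delta_0$ each cost a factor of $A$, and they compound under the square root. Your diagnosis of the obstacle was right; the correct fix is the shift-and-average argument above rather than absorbing the fixed-action term by AM--GM.
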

\begin{proof}[Proof of \cref{lem:igw-r-version}]
	Fix any $b\in\+A$. Then, the distribution $p$ can be rewritten as
	\begin{align*}
		p(a)=
		\begin{cases}
			\left(
			A+2\gamma
			\left(
			\frac{r(x,\pi_f(x))-r(x,b)+1}{2}
			-
			\frac{r(x,a)-r(x,b)+1}{2}
			\right)
			\right)^{-1} 
			& a\neq\pi_f(x)\\
			1-\sum_{a\neq\pi_f(x)}p(a) 
			& a=\pi_f(x)
		\end{cases}.
	\end{align*}
	Therefore, denoting $f^\star(x,a,b)=r^\star(x,a)-r^\star(x,b)$ for some function $r^\star$, we have
	\begin{align*}
		\E_{a\sim p}\Big[f^\star(x,\pi_{f^\star}(x),a)\Big]
		=&\E_{a\sim p}\Big[r^\star(x,\pi_{f^\star}(x))-r^\star(x,a)\Big]\\
		=&2\E_{a\sim p}\left[\frac{r^\star(x,\pi_{f^\star}(x))-r^\star(x,b)+1}{2}-\frac{r^\star(x,a)-r^\star(x,b)+1}{2}\right]\\
		\leq&
		2\cdot2\gamma\E_{a\sim p}\left[\left(\frac{r(x,a)-r(x,b)+1}{2}-\frac{r^\star(x,a)-r^\star(x,b)+1}{2}\right)^2\right]+\frac{A}{\gamma}\\
		=&
		\gamma\E_{a\sim p}\Big[\Big(f(x,a,b)-f^\star(x,a,b)\Big)^2\Big]+\frac{A}{\gamma}
	\end{align*}
	where for the inequality above we invoked \Cref{lem:igw-general} with $\^y(a)=(r(x,a)-r(x,b)+1)/2$ and $y^\star(a)=(r^\star(x,a)-r^\star(x,b)+1)/2$. We note that the above holds for any $b\in\+A$. Hence, we complete the proof by sampling $b\sim p$.
\end{proof}

\begin{lemma}\label{lem:sum-zw}
	Assume $f^\star\in\+F_t$ for all $t\in[T]$. Suppose there exists some $t'\in[T]$ such that $\lambda_t=0$ for all $t\leq t'$. Then we have 
	\begin{align*}
	\sum_{t=1}^{t'} Z_tw_t\leq
	56 A^2\beta\cdot\frac{\@{dim}_E\left(\+F,\Delta\right)}{\Delta}\cdot\log(2/(\delta\Delta))
	\end{align*}
\end{lemma}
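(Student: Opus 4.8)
The plan is to control $\sum_{t=1}^{t'} Z_t w_t$ by a peeling (layer-cake) argument over the magnitude of $w_t$, using the eluder-dimension bound from \Cref{lem:w-eluder} on each layer, and then combining with the lower bound $w_t \ge \Delta$ from \Cref{lem:width-lower-bound}. First I would observe that whenever $Z_t = 1$ we have $|\+A_t| > 1$, so by \Cref{lem:width-lower-bound} (which applies since $f^\star \in \+F_t$ by hypothesis) we get $w_t \ge \Delta$; also trivially $w_t \le 2$ since all functions in $\+F$ are $[-1,1]$-valued. Moreover, by the definition of $w_t$ as $\sup_{a,b \in \+A_t}\sup_{f,f' \in \+F_t} f(x_t,a,b) - f'(x_t,a,b)$, if $w_t > \epsilon$ then in particular the ``unrestricted-arm'' width $\sup_{f,f' \in \+F_t} f(x_t,a_t,b_t) - f'(x_t,a_t,b_t)$ is what appears in \Cref{lem:w-eluder} — here I need to be slightly careful, because $w_t$ is a supremum over $a,b \in \+A_t$ whereas \Cref{lem:w-eluder} bounds the count using the width evaluated at the actually played pair $(a_t,b_t)$. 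The clean way around this is to note that $(a_t,b_t)$ is sampled from $p_t$ supported on $\+A_t$, so the played pair does lie in $\+A_t \times \+A_t$; but since $w_t$ is a sup over \emph{all} of $\+A_t^2$, being $>\epsilon$ at the sup need not mean it is $>\epsilon$ at $(a_t,b_t)$. I would handle this either by (i) appealing to the fact that in the $\lambda_t = 0$ regime $p_t$ is uniform on $\+A_t$ and arguing via a slightly more careful version of \Cref{lem:w-eluder} applied to the sup-width, or (ii) more simply, noting that the version-space width at the sup-maximizing pair is still governed by eluder dimension — indeed \Cref{lem:w-eluder}'s proof only uses the structure of the version space $\+F_t$ and works verbatim for the sup over any fixed arm pair, so $\sum_t Z_t \indic\{w_t > \epsilon\} \le (4\beta/\epsilon^2 + 1)\dim_E(\+F,\epsilon)$. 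Let me proceed assuming this monotone-in-$\epsilon$ count bound $N(\epsilon) := \sum_{t\le t'} Z_t \indic\{w_t > \epsilon\} \le (4\beta/\epsilon^2+1)\dim_E(\+F,\epsilon)$ holds.

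Given $N(\epsilon)$, I would peel dyadically. Define $\epsilon_j = 2^{-j} \cdot 2$ for $j = 0, 1, \dots, J$ where $J = \lceil \log_2(2/\Delta) \rceil$, so that $\epsilon_J \le \Delta \le \epsilon_{J-1}$. Since every summand $Z_t w_t$ with $Z_t = 1$ satisfies $\Delta \le w_t \le 2$, I can write
\begin{align*}
\sum_{t=1}^{t'} Z_t w_t \le \sum_{j=1}^{J} \epsilon_{j-1} \cdot \#\{t \le t' : Z_t = 1,\ \epsilon_j < w_t \le \epsilon_{j-1}\} \le \sum_{j=1}^{J} \epsilon_{j-1} \cdot N(\epsilon_j).
\end{align*}
Plugging in $N(\epsilon_j) \le (4\beta/\epsilon_j^2 + 1)\dim_E(\+F,\epsilon_j) \le (4\beta/\epsilon_j^2 + 1)\dim_E(\+F,\Delta)$ (using monotonicity of eluder dimension in its scale argument, since $\epsilon_j \ge \Delta/2$ roughly — I'd be a touch careful that $\epsilon_J$ could dip just below $\Delta$, but $\dim_E(\+F,\epsilon_J) \le \dim_E(\+F,\cdot)$ at a slightly smaller scale still works, or I truncate the sum at the largest $j$ with $\epsilon_j \ge \Delta$ and absorb the leftover via $w_t \le \epsilon_{j^\ast}$). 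Then $\epsilon_{j-1} \cdot 4\beta/\epsilon_j^2 = 8\beta/\epsilon_j = 8\beta \cdot 2^{j-1}$, and summing the geometric series $\sum_{j=1}^J 2^{j-1} \le 2^J \le 4/\Delta$ gives a contribution of order $\beta \dim_E(\+F,\Delta)/\Delta$; the ``$+1$'' terms contribute $\sum_j \epsilon_{j-1}\dim_E(\+F,\Delta) = O(\dim_E(\+F,\Delta))$ since $\sum \epsilon_{j-1}$ is a convergent geometric series bounded by $4$. Altogether this yields $\sum_{t\le t'} Z_t w_t = O(\beta \dim_E(\+F,\Delta)/\Delta \cdot \log(1/\Delta))$, where the $\log(1/\Delta)$ is the number of layers $J$. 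Tracking the $A$ and $\delta$ dependence: the factor $A^2$ and the $\log(2/(\delta\Delta))$ in the claimed bound come from being generous with constants — actually I suspect the paper's constant $56 A^2$ is loose and arises because they bound $N(\epsilon)$ by a version of \Cref{lem:w-eluder} stated with the action space inflated to $\+A^2$ (consistent with the remark that the query forms pairs), and the $\log(2/(\delta\Delta))$ folds in both the $J = O(\log(1/\Delta))$ layers and a high-probability hedge; I would just carry constants conservatively and confirm the final numerical constant matches.

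The main obstacle I anticipate is the mismatch between $w_t$ (a supremum of the version-space width over all arm pairs in $\+A_t \times \+A_t$) and the quantity actually bounded by \Cref{lem:w-eluder} (the width at the played pair, or at a fixed pair). Resolving this cleanly is the crux: one needs either a strengthened eluder-type counting lemma that bounds $\sum_t Z_t \indic\{\text{sup-width} > \epsilon\}$ directly — which should follow because for each $t$ with sup-width $> \epsilon$ one can \emph{name} the maximizing pair $(\tilde a_t, \tilde b_t) \in \+A_t^2$ and run the eluder argument on the sequence $(x_t, \tilde a_t, \tilde b_t)$, noting that the version space constraint $\sum_s Z_s(f(x_s,a_s,b_s) - f_s(x_s,a_s,b_s))^2 \le \beta$ is on the \emph{played} points but the $\epsilon$-dependence dichotomy can still be invoked on the $\tilde{}$-sequence provided we're careful that dependence is defined w.r.t.\ the function class, not the data; this is exactly the subtlety \Cref{lem:w-eluder}'s proof finesses via the version-space definition, and I'd need to check it transfers. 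The rest — the dyadic peeling and the geometric-series bookkeeping — is routine. Everything else (applying \Cref{lem:width-lower-bound}, monotonicity of $\dim_E$) is immediate from results already in the excerpt.
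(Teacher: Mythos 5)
Your high-level plan — lower-bound $w_t$ by $\Delta$ via \Cref{lem:width-lower-bound}, peel the sum $\sum_t Z_t w_t$ by level of $w_t$, count each level via an eluder-type lemma, and sum — is exactly the paper's structure, and you correctly identify the crux: $w_t$ is a supremum of the version-space width over all of $\+A_t\times\+A_t$, while \Cref{lem:w-eluder} counts rounds where the width \emph{at the played pair} $(x_t,a_t,b_t)$ exceeds $\epsilon$. However, the resolution you favor, option (ii) (naming the sup-maximizing pair $(\tilde a_t,\tilde b_t)$ and running the eluder argument on the $\tilde{}$-sequence), is \emph{not} what the paper does and does not go through as stated. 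The combinatorial core of \Cref{lem:w-eluder} constructs disjoint subsequences $B_1,\dots,B_m$ out of a \emph{single} sequence of points, at each of which both the $\epsilon$-independence test and the version-space constraint are evaluated. If you swap in $\tilde z_j$ as the ``test'' point while the version-space constraint $\sum_s Z_s(f(z_s)-f_s(z_s))^2\le\beta$ lives on the played $z_s$, the counting argument breaks: the ``at most $4\beta/\epsilon^2$ disjoint dependent subsequences'' step still works for $\tilde z_j$ against the $z$-predecessors, but the complementary claim — that some element is $\epsilon$-dependent on at least $\tau/d-1$ disjoint subsequences — is a statement about a single sequence and does not transfer to a mismatched test/data pair without a new lemma. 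Your proposed bound $N(\epsilon)=\sum_t Z_t\indic\{w_t>\epsilon\}\le(4\beta/\epsilon^2+1)\@{dim}_E(\+F,\epsilon)$, with no $A$-dependence, is therefore not established by anything in the paper; indeed the paper does not claim it.

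What the paper actually does is your option (i), carried out concretely. Since $\lambda_t=0$ for $t\le t'$, the sampling distribution $p_t$ is uniform on $\+A_t$, so for any fixed pair $(a,b)\in\+A_t^2$ one has $p_t(a)p_t(b)\ge 1/A^2$. Hence $\sup_{a,b\in\+A_t}\indic\{\cdot\}\le\sum_{a,b}\indic\{\cdot\}\le A^2\,\E_{a,b\sim p_t}\indic\{\cdot\}$. Freedman's inequality (\Cref{lem:freedman}) then converts this expectation back into the empirical indicator at the actually played $(a_t,b_t)$, at a cost of an additive $O(\log(1/\delta))$, after which \Cref{lem:w-eluder} applies verbatim. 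This is the true source of the $A^2$ factor (not an ``inflated action space'' heuristic) and of the $\log(1/\delta)$ piece inside $\log(2/(\delta\Delta))$. The remaining difference — your dyadic peeling versus the paper's linear binning of $[\Delta,1]$ with width $1/m$ and $m=2/\Delta$, summed via an integral comparison — is cosmetic bookkeeping and both routes yield the same $\log(1/\Delta)$ factor. In short: right skeleton, right diagnosis of the obstacle, wrong fix; replace the $\tilde{}$-sequence idea with the uniform-sampling $\to$ expectation $\to$ Freedman $\to$ \Cref{lem:w-eluder} chain and the proof closes.
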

with probability at least $1-\delta$.
\begin{proof}
Since $f^\star\in\+F_t$, we always have $\pi_{f^\star}(x_t)\in\+A_t$ for all $t\in[T]$. Hence, whenever $Z_t$ is zero, we have $\+A_t=\{\pi_{f^\star}(x_t)\}$ and thus we do not incur any regret. Hence, we know $Z_tw_t$ is either 0 or at least $\Delta$ by \Cref{lem:width-lower-bound}. Let us fix an integer $m>1/\Delta$, whose value will be specified later. We divide the interval $[\Delta,1]$ into bins of width $1/m$ and conduct a refined study of the sum of $Z_t w_t$:
\begin{align*}
	\sum_{t=1}^{t'} Z_tw_t
	\leq&\sum_{t=1}^{t'} \sum_{j=0}^{\left(1-\Delta\right)m-1}Z_tw_t\cdot\indic\left\{Z_tw_t\in\left[\Delta+\frac{j}{m},\,\Delta+\frac{j+1}{m}\right]\right\}\\
	\leq&\sum_{j=0}^{\left(1-\Delta\right)m-1}\left(\Delta+\frac{j+1}{m}\right)\sum_{t=1}^{t'} Z_t\indic\left\{w_t\geq\Delta+\frac{j}{m}\right\}\\
	=&\sum_{j=0}^{\left(1-\Delta\right)m-1}\left(\Delta+\frac{j+1}{m}\right)\sum_{t=1}^{t'} Z_t\indic\left\{\sup_{a,b\in\+A_t}\sup_{f,f'\in\+F_t}f(x_t,a,b)-f'(x_t,a,b)\geq\Delta+\frac{j}{m}\right\}\\
	=&\sum_{j=0}^{\left(1-\Delta\right)m-1}\left(\Delta+\frac{j+1}{m}\right)\sum_{t=1}^{t'} Z_t\sup_{a,b\in\+A_t}\indic\left\{\sup_{f,f'\in\+F_t}f(x_t,a,b)-f'(x_t,a,b)\geq\Delta+\frac{j}{m}\right\}\\
	\leq&\sum_{j=0}^{\left(1-\Delta\right)m-1}\left(\Delta+\frac{j+1}{m}\right)\sum_{t=1}^{t'} Z_t\sum_{a,b}\indic\left\{\sup_{f,f'\in\+F_t}f(x_t,a,b)-f'(x_t,a,b)\geq\left(\Delta+\frac{j}{m}\right)\right\}\\
	\leq&\sum_{j=0}^{\left(1-\Delta\right)m-1}\left(\Delta+\frac{j+1}{m}\right)A^2 \underbrace{\sum_{t=1}^{t'} Z_t \E_{a,b\sim p_t}\indic\left\{\sup_{f,f'\in\+F_t}f(x_t,a,b)-f'(x_t,a,b)\geq\left(\Delta+\frac{j}{m}\right)\right\}}_{(*)}
\end{align*}
where in the third inequality we replace the supremum over $a,b$ by the summation over $a,b$, and in the last inequality we further replace it by the expectation. Here recall that $p_t(a)$ is uniform when $\lambda_t=0$, leading to the extra $A^2$ factor. To deal with $(*)$, we first apply \Cref{lem:freedman} to recover the empirical $a_t$ and $b_t$, and then apply \Cref{lem:w-eluder} to get an upper bound via the eluder dimension:
\begin{align*}
	(*)\leq&2\sum_{t=1}^{t'} Z_t \indic\left\{\sup_{f,f'\in\+F_t}f(x_t,a_t,b_t)-f'(x_t,a_t,b_t)\geq\left(\Delta+\frac{j}{m}\right)\right\}+8\log(\delta^{-1})\\
	\leq&2\left(\frac{4\beta}{\left(\Delta+\frac{j}{m}\right)^2}+1\right)\@{dim}_E\left(\+F;\Delta\right)+8\log(\delta^{-1})\\
	\leq&\frac{10\beta}{\left(\Delta+\frac{j}{m}\right)^2}\cdot\@{dim}_E\left(\+F;\Delta\right)+8\log(\delta^{-1})
\end{align*}
with probability at least $1-\delta$. Plugging $(*)$ back, we obtain
\begin{align*}
	\sum_{t=1}^{t'} Z_tw_t
	\leq&\sum_{j=0}^{\left(1-\Delta\right)m-1}\left(\Delta+\frac{j+1}{m}\right)\cdot\frac{10A^2\beta}{\left(\Delta+\frac{j}{m}\right)^2}\cdot\@{dim}_E\left(\+F;\Delta\right)+8mA^2\log(\delta^{-1})\\
	=&10A^2\beta\cdot \@{dim}_E\left(\+F,\Delta\right)\sum_{j=0}^{\left(1-\Delta\right)m-1}\frac{\Delta+\frac{j+1}{m}}{\left(\Delta+\frac{j}{m}\right)^2}+8mA^2\log(\delta^{-1})\\
	\leq&10A^2\beta\cdot \@{dim}_E\left(\+F,\Delta\right)\left(\frac{\Delta+1/m}{\Delta^2}+\sum_{j=1}^{\left(1-\Delta\right)m-1}\frac{2}{\Delta+\frac{j}{m}}\right)+8mA^2\log(\delta^{-1})\\
	\leq&10A^2\beta\cdot \@{dim}_E\left(\+F,\Delta\right)\sum_{j=0}^{\left(1-\Delta\right)m-1}\frac{2}{\Delta+\frac{j}{m}}+8mA^2\log(\delta^{-1})\\
	\leq&20A^2\beta\cdot \@{dim}_E\left(\+F,\Delta\right)\sum_{j=0}^{\left(1-\Delta\right)m-1}\int_{j-1}^j\frac{1}{\Delta+\frac{x}{m}} \d x+8mA^2\log(\delta^{-1})\\
	=&20A^2\beta\cdot \@{dim}_E\left(\+F,\Delta\right)\int_{-1}^{(1-\Delta)m-1}\frac{1}{\Delta+\frac{x}{m}} \d x+8mA^2\log(\delta^{-1})\\
	=&20A^2\beta\cdot \@{dim}_E\left(\+F,\Delta\right)\cdot m\log\left(\frac{1}{\Delta-m^{-1}}\right)+8mA^2\log(\delta^{-1})
\end{align*}
where for the second inequality, we use the fact that $(j+1)/m\leq 2j/m$ for any $j\geq 1$; for the third inequality, we assume $m>1/\Delta$. Setting $m=2/\Delta$, we arrive at
\begin{align*}
	\sum_{t=1}^{t'} Z_tw_t
	\leq&
	40A^2\beta\cdot\frac{\@{dim}_E\left(\+F,\Delta\right)}{\Delta}\cdot\log(2/\Delta)+16A^2\log(\delta^{-1})/\Delta\\
	\leq&
	56 A^2\beta\cdot\frac{\@{dim}_E\left(\+F,\Delta\right)}{\Delta}\cdot\log(2/(\delta\Delta)),
\end{align*}
which completes the proof.
\end{proof}

\begin{lemma}\label{lem:lambda-all-0}
Whenever 
$$
56A^2\beta\cdot\@{dim}_E\left(\+F,\Delta\right)\cdot\log(2/(\delta\Delta))/\Delta<\sqrt{AT/\beta},
$$
we have $\lambda_1=\lambda_2=\dots=\lambda_T=0$ with probability at least $1-\delta$.
\end{lemma}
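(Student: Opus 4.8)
The plan is to argue by a minimal-counterexample (equivalently, induction on $t$). First I would record two structural facts about the quantity $S_t := \sum_{s=1}^{t-1} Z_s w_s$ that sits inside the indicator defining $\lambda_t$: it is non-negative and non-decreasing in $t$, since each increment $Z_s w_s \ge 0$, and $S_1 = 0$, so $\lambda_1 = \indic\{0 \ge \sqrt{AT/\beta}\} = 0$ deterministically. Consequently $\lambda_t$ is itself non-decreasing in $t$, and to prove the lemma it suffices to rule out the existence of a smallest index $t^\star \in \{2,\dots,T\}$ with $\lambda_{t^\star} = 1$.

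Next I would fix the probabilistic conditioning. Two events need to hold: (i) $f^\star \in \+F_t$ for every $t \in [T]$, which follows from \Cref{lem:pointwise-bound} together with the choice of $\beta$ in \Cref{alg:cb}; and (ii) the high-probability conclusion of \Cref{lem:sum-zw}. Applying each of these sub-results at confidence level $\delta/2$ and taking a union bound shows that (i) and (ii) hold simultaneously with probability at least $1-\delta$ (the rescaling only perturbs absolute constants and the logarithmic factors already present in $\beta$ and in the bound of \Cref{lem:sum-zw}); I work on this event from here on.

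Now suppose for contradiction that the minimal crossing time $t^\star$ exists. By minimality, $\lambda_t = 0$ for all $t \le t^\star - 1$, which is exactly the hypothesis needed to invoke \Cref{lem:sum-zw} with $t' = t^\star - 1$ (its other hypothesis, $f^\star \in \+F_t$ for all $t$, is event (i)). This gives
\[
S_{t^\star} \;=\; \sum_{s=1}^{t^\star - 1} Z_s w_s \;\le\; 56 A^2 \beta \cdot \frac{\@{dim}_E(\+F,\Delta)}{\Delta} \cdot \log\!\big(2/(\delta\Delta)\big).
\]
But the standing hypothesis of the lemma says precisely that this right-hand side is strictly smaller than $\sqrt{AT/\beta}$, hence $S_{t^\star} < \sqrt{AT/\beta}$, which contradicts $\lambda_{t^\star} = 1$, i.e. $S_{t^\star} \ge \sqrt{AT/\beta}$. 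Therefore no such $t^\star$ exists, and $\lambda_1 = \dots = \lambda_T = 0$ on the conditioned event, completing the proof.

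The mathematical content is essentially all inside \Cref{lem:sum-zw}; the only points that need a little care are (a) ensuring the hypothesis ``$\lambda_t = 0$ for all $t \le t'$'' of that lemma is legitimately available — which is exactly what the minimal-counterexample framing provides, applied at $t' = t^\star - 1$ — and (b) folding the failure probability of $\{f^\star \in \+F_t \text{ for all } t\}$ together with the failure probability internal to \Cref{lem:sum-zw} into a single $1-\delta$ statement. Neither is a genuine obstacle, so I would expect the proof to be quite short.
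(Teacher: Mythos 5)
Your proposal is correct and takes essentially the same route as the paper: both argue by contradiction via the first index $t^\star$ (or $t'$) at which $\lambda$ becomes $1$, invoke \Cref{lem:sum-zw} on the preceding prefix where $\lambda\equiv 0$, and derive a contradiction with the lemma's hypothesis. Your write-up is somewhat more explicit about the monotonicity of $S_t$ (hence of $\lambda_t$), the base case $\lambda_1=0$, and the union bound combining the event $\{f^\star\in\+F_t\ \forall t\}$ with the internal failure probability of \Cref{lem:sum-zw} — bookkeeping the paper leaves implicit — but the mathematical content is identical.
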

\begin{proof}[Proof of \Cref{lem:lambda-all-0}]
	We prove it via contradiction. Assume the inequality holds but there exists $t'$ for which $\lambda_{t'}=1$. Without loss of generality, we assume that $\lambda_t=0$ for all $t<t'$, namely that $t'$ is the first time that $\lambda_t$ is 1. Then by definition of $\lambda_{t'}$, we have
	\begin{align*}
		\sum_{s=1}^{t'-1} Z_sw_s\geq\sqrt{AT/\beta}.
	\end{align*}
	On the other hand, by \Cref{lem:sum-zw}, we have
	\begin{align*}
		\sum_{s=1}^{t'-1} Z_s w_s\leq
	56A^2\beta\cdot\frac{\@{dim}_E\left(\+F,\Delta\right)}{\Delta}\cdot\log(2/(\delta\Delta)).
	\end{align*}
	The combination of the above two inequalities contradicts with the conditions.
\end{proof}

\subsection{Proof of Lemma~\ref{lem:optimal-action-exist}}
\begin{proof}[Proof of \Cref{lem:optimal-action-exist}]
	We prove it via contradiction. If no such arm exists, meaning that for any arm $a$, there exists an arm $b$ such that $f^\star(x,a,b)<0$. Then we can find a sequence of arms $(a_1,a_2,\dots,a_k)$ such that $f^\star(x,a_i,a_{i+1})<0$ for any $i=1,\dots,k-1$ and $f^\star(x,a_k,a_1)<0$, which contradicts with the transitivity (\Cref{asm:properties}).
\end{proof}

\subsection{Proof of Theorem~\ref{thm:cb-regret}}\label{sec:pf-thm-cb-regret}

We begin by showing the worst-case regret upper bound.

\begin{lemma}[Worst-case regret upper bound]\label{lem:worst-case-regret-ub}
	For \Cref{alg:cb}, assume $f^\star\in\+F_t$ for all $t\in[T]$. Then, we have
	\begin{align*}
		\@{Regret}^{\@{CB}}_T\leq68\sqrt{AT\beta}\cdot\log(4\delta^{-1})
	\end{align*}
	with probability at least $1-\delta$.
\end{lemma}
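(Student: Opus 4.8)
The plan is to split the horizon according to the value of $\lambda_t$. Since $\sum_{s=1}^{t-1} Z_s w_s$ is non-decreasing in $t$, there is a (random) time $\tau$ such that $\lambda_t = 0$ for all $t \le \tau$ and $\lambda_t = 1$ for all $t > \tau$. I would write $\@{Regret}^{\@{CB}}_T = \sum_{t\le\tau}(\cdots) + \sum_{t>\tau}(\cdots)$ and bound each piece by $O(\sqrt{AT\beta})$ separately. Throughout I use that $f^\star \in \+F_t$ (the hypothesis of the lemma), so $\pi_{f^\star}(x_t) \in \+A_t$ and hence on rounds where $Z_t = 0$ we have $\+A_t = \{\pi_{f^\star}(x_t)\}$ and the instantaneous regret is zero; only rounds with $Z_t = 1$ contribute.

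\emph{Phase 1 ($\lambda_t = 0$).} Here $p_t$ is uniform on $\+A_t$, and by \Cref{lem:regret-bounded-by-w} the instantaneous regret $f^\star(x_t,\pi_{f^\star}(x_t),a_t) + f^\star(x_t,\pi_{f^\star}(x_t),b_t)$ is at most $2 w_t$ (one $w_t$ for each of $a_t,b_t \in \+A_t$). So the phase-1 regret is at most $2\sum_{t\le\tau} Z_t w_t$. But $\lambda_\tau = 0$ means by definition that $\sum_{s=1}^{\tau-1} Z_s w_s < \sqrt{AT/\beta}$, and the last term $Z_\tau w_\tau \le 1$ (in fact $\le \sqrt{AT/\beta}$ for $T$ large), so $\sum_{t\le\tau} Z_t w_t = O(\sqrt{AT/\beta})$; multiplying by $2\beta$... actually multiplying by $2$ gives $O(\sqrt{AT/\beta}) = O(\sqrt{AT\beta})$ since $\beta \ge 1$. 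So phase 1 contributes $O(\sqrt{AT\beta})$.

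\emph{Phase 2 ($\lambda_t = 1$).} Here $\gamma_t = \sqrt{AT/\beta}$ and $p_t$ solves the convex program on Line~\ref{line:compute-p}, so by \Cref{lem:igw} the expected instantaneous regret satisfies $\E_{a\sim p_t}[f^\star(x_t,\pi_{f^\star}(x_t),a)] \le \tfrac{\gamma_t}{4}\E_{a,b\sim p_t}[(f_t(x_t,a,b)-f^\star(x_t,a,b))^2] + \tfrac{5A}{\gamma_t}$, provided $\gamma_t \ge 2A$ (which holds for $T$ large enough relative to $\beta$; one should note this side condition). Summing the $\tfrac{5A}{\gamma_t}$ term over at most $T$ rounds gives $5AT/\gamma_t = 5\sqrt{AT\beta}$. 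For the squared-error term, I would pass from the expectation under $p_t$ to the realized $(a_t,b_t,y_t)$ using a Freedman-type bound (\Cref{lem:freedman} applied to the adapted sequence $Z_t(f_t(x_t,a_t,b_t)-f^\star(x_t,a_t,b_t))^2$, which is bounded by $4$), which costs an additive $O(\log\delta^{-1})$; then \Cref{lem:pointwise-bound} controls $\sum_t Z_t (f_t(x_t,a_t,b_t)-f^\star(x_t,a_t,b_t))^2 \le \beta$. Hence the squared-error contribution is $\tfrac{\gamma_t}{4}\cdot O(\beta) = O(\sqrt{AT\beta})$. Similarly I must convert the expected regret $\E_{a\sim p_t}[\cdots]$ back to the realized regret $f^\star(x_t,\pi_{f^\star}(x_t),a_t)+f^\star(x_t,\pi_{f^\star}(x_t),b_t)$ via another Freedman/Azuma step (the summands lie in $[0,2]$), again at additive cost $O(\sqrt{T\log\delta^{-1}})$ which is absorbed. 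Combining, phase 2 contributes $O(\sqrt{AT\beta}\cdot\log\delta^{-1})$.

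Adding the two phases and tracking the constants carefully gives the claimed $68\sqrt{AT\beta}\log(4\delta^{-1})$, after a union bound over the two high-probability events (the Freedman bounds and the instantiation of \Cref{lem:pointwise-bound}). The main obstacle I expect is the bookkeeping in phase 2: one has to be careful about the order of the martingale conversions (expected-to-realized for the regret, and expected-to-realized for the squared error) so that at each step the relevant quantities are measurable, and one has to verify the side condition $\gamma_t \ge 2A$ needed by \Cref{lem:igw} — this presumably requires $T \gtrsim A\beta$, and for smaller $T$ the bound $\sqrt{AT\beta} \gtrsim T$ is anyway trivial since per-round regret is $O(1)$, so that edge case should be handled with a one-line remark.
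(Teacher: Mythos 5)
Your proposal is correct and follows essentially the same route as the paper's proof: split by the first time $\lambda_t$ flips to~$1$, bound phase~1 via $\sum Z_t w_t \le \sqrt{AT/\beta} + O(1)$ using Lemma~\ref{lem:regret-bounded-by-w} and the $\lambda_t$ threshold, and bound phase~2 via Lemma~\ref{lem:igw} together with a Freedman step and Lemma~\ref{lem:pointwise-bound}. One small point in your favor: you correctly flag the side condition $\gamma_t \ge 2A$ needed by Lemma~\ref{lem:igw} (and that the small-$T$ regime $T \lesssim A\beta$ is covered by the trivial bound), which the paper's write-up does not explicitly address.
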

\begin{proof}[Proof of \Cref{lem:worst-case-regret-ub}]

We recall that the regret is defined as
\begin{align*}
		\@{Regret}^{\@{CB}}_T=\sum_{t=1}^{T}\big(
		f^\star(x_t,\pi_{f^\star}(x_t),a_t)+
				f^\star(x_t,\pi_{f^\star}(x_t),b_t)\big).
\end{align*}
Since $a_t$ and $b_t$ are always drawn independently from the same distribution in \Cref{alg:cb}, we only need to consider the regret of the $a_t$ part in the following proof for brevity --- multiplying the result by two would yield the overall regret.

	We first observe the definition of $\lambda_t$ in \Cref{alg:cb}: the left term $\sum_{s=1}^{t-1}Z_s w_s$ in the indicator is non-decreasing in $t$ while the right term remains constant. This means that there exists a particular time step $t'\in[T]$ dividing the time horizon into two phases: $\lambda_t=0$ for all $t\leq t'$ and $\lambda_t=1$ for all $t>t'$. Now, we proceed to examine these two phases individually.
	
	For all rounds before or on $t'$, we can compute the expected partial regret as
	\begin{align*}
		\sum_{t=1}^{t'}\E_{a\sim p_t}\big[f^\star(x_t,\pi_{f^\star}(x_t),a)\big]
		=
		\sum_{t=1}^{t'}Z_t\E_{a\sim p_t}\big[f^\star(x_t,\pi_{f^\star}(x_t),a)\big]
		\leq&\sum_{t=1}^{t'}Z_tw_t\leq\sqrt{AT\beta},\numberthis\label{eq:worst-regret-1}
	\end{align*}
	where the equality holds since we have $\+A_t=\{\pi_{f^\star}(x_t)\}$ whenever $Z_t=0$ under the condition that $f^\star\in\+F_t$, and thus we don't incur regret in this case. The first inequality is \Cref{lem:regret-bounded-by-w}, and the second inequality holds by the definition of $\lambda_t$ and the condition that $\lambda_{t}=0$.
	
	On the other hand, for all rounds after $t'$, we have 
	\begin{align*}
		&\sum_{t=t'+1}^{T}\E_{a\sim p_t}\big[f^\star(x_t,\pi_{f^\star}(x_t),a)\big]\\
		=&\sum_{t=t'+1}^{T}Z_t\E_{a\sim p_t}\big[f^\star(x_t,\pi_{f^\star}(x_t),a)\big]\\
		\leq&\sum_{t=t'+1}^{T}Z_t\left(\frac{5A}{\gamma_t}+\frac{\gamma_t}{4}\E_{a,b\sim p_t}\Big[\big(f^\star(x_t,a,b)-f_t(x_t,a,b)\big)^2\Big]\right)\\
		=&\sum_{t=t'+1}^{T}Z_t\left(\frac{5A}{\sqrt{AT/\beta}}+\frac{\sqrt{AT/\beta}}{4}\E_{a,b\sim p_t}\Big[\big(f^\star(x_t,a,b)-f_t(x_t,a,b)\big)^2\Big]\right)\\
		\leq&5\sqrt{AT\beta}+\frac{\sqrt{AT/\beta}}{4}\sum_{t=t'+1}^{T} Z_t \E_{a,b\sim p_t}\Big[\big(f^\star(x_t,a,b)-f_t(x_t,a,b)\big)^2\Big]\\
		\leq&5\sqrt{AT\beta}+\frac{\sqrt{AT/\beta}}{2}\sum_{t=t'+1}^{T} Z_t \big(f^\star(x_t,a_t,b_t)-f_t(x_t,a_t,b_t)\big)^2+8\sqrt{AT/\beta}\cdot\log(4\delta^{-1})\\
		\leq&5\sqrt{AT\beta}+\frac{\sqrt{AT\beta}}{2}+8\sqrt{AT/\beta}\cdot\log(4\delta^{-1}).\numberthis\label{eq:worst-regret-2}
	\end{align*}
	where the first inequality holds by \Cref{lem:igw} (or \Cref{lem:igw-r-version} for specific function classes), the second equality is by the definition of $\gamma_t$, the third inequality is by \Cref{lem:freedman}, and the fourth inequality holds by \Cref{lem:pointwise-bound}.
	
	Putting the two parts, \eqref{eq:worst-regret-1} and \eqref{eq:worst-regret-2}, together, we arrive at
	\begin{align*}
		\sum_{t=1}^{T}\E_{a\sim p_t}\big[f^\star(x_t,\pi_{f^\star}(x_t),a)\big]
		\leq7\sqrt{AT\beta}+8\sqrt{AT/\beta}\cdot\log(4\delta^{-1})
		\leq15\sqrt{AT\beta}\cdot\log(4\delta^{-1}).
	\end{align*}
	Now we apply \Cref{lem:freedman} again. The following holds with probability at least $1-\delta/2$,
	\begin{align*}
		\sum_{t=1}^{T}f^\star(x_t,\pi_{f^\star}(x_t),a_t)
		\leq
		2\sum_{t=1}^{T}\E_{a\sim p_t}\big[f^\star(x_t,\pi_{f^\star}(x_t),a)\big]
		+4\log(4\delta^{-1})
		\leq
		34\sqrt{AT\beta}\cdot\log(4\delta^{-1}).
	\end{align*}
	The above concludes the regret of the $a_t$ part. The regret of the $b_t$ can be shown in the same way. Adding them together, we conclude that
	\begin{align*}
		\@{Regret}^{\@{CB}}_T=\sum_{t=1}^{T}\big(
		f^\star(x_t,\pi_{f^\star}(x_t),a_t)+
				f^\star(x_t,\pi_{f^\star}(x_t),b_t)\big)
		\leq68\sqrt{AT\beta}\cdot\log(4\delta^{-1}).
	\end{align*}
\end{proof}

\begin{lemma}[Instance-dependent regret upper bound]\label{lem:ins-depend-regret-ub}
	For \Cref{alg:cb}, assume $f^\star\in\+F_t$ for all $t\in[T]$. Then, we have
	\begin{align*}
		\@{Regret}^{\@{CB}}_T\leq
		3808 A^2\beta^2\cdot\frac{\@{dim}_E\left(\+F,\Delta\right)}{\Delta}\cdot\log^2(4/(\delta\Delta)) 
	\end{align*}
	with probability at least $1-\delta$.
\end{lemma}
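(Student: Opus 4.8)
The plan is a two-case argument pivoting on the threshold $\sqrt{AT/\beta}$ that controls the indicator $\lambda_t$. First I would record a deterministic reduction of the regret to the cumulative width: since $f^\star\in\+F_t$ forces $\pi_{f^\star}(x_t)\in\+A_t$, on every round with $Z_t=0$ we have $\+A_t=\{\pi_{f^\star}(x_t)\}$ and incur no regret, whereas on every round with $Z_t=1$ both actions $a_t,b_t$ lie in $\+A_t$, so \Cref{lem:regret-bounded-by-w} gives $f^\star(x_t,\pi_{f^\star}(x_t),a_t)+f^\star(x_t,\pi_{f^\star}(x_t),b_t)\le 2w_t$. Summing, $\@{Regret}^{\@{CB}}_T\le 2\sum_{t=1}^T Z_tw_t$ with certainty, so it suffices either to bound $\sum_t Z_tw_t$ directly or to fall back on the worst-case rate.

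For the benign case, suppose $56A^2\beta\,\@{dim}_E(\+F,\Delta)\log(2/(\delta\Delta))/\Delta<\sqrt{AT/\beta}$. Then I would apply \Cref{lem:lambda-all-0} to conclude $\lambda_1=\dots=\lambda_T=0$ (so the IGW phase never starts), which lets me invoke \Cref{lem:sum-zw} with $t'=T$ to get $\sum_{t=1}^T Z_tw_t\le 56A^2\beta\,\@{dim}_E(\+F,\Delta)\log(2/(\delta\Delta))/\Delta$; combined with the reduction this gives $\@{Regret}^{\@{CB}}_T\le 112A^2\beta\,\@{dim}_E(\+F,\Delta)\log(2/(\delta\Delta))/\Delta$, which (using $\beta\ge 1$ and $\log(4/(\delta\Delta))\ge 1$) is well below the claimed bound. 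For the complementary non-benign case, the failure of that inequality reads, after multiplying through by $\beta$, as $\sqrt{AT\beta}\le 56A^2\beta^2\,\@{dim}_E(\+F,\Delta)\log(2/(\delta\Delta))/\Delta$, i.e. $T$ is already small relative to the instance-dependent quantity; then I would quote the worst-case bound \Cref{lem:worst-case-regret-ub}, $\@{Regret}^{\@{CB}}_T\le 68\sqrt{AT\beta}\log(4\delta^{-1})$, and substitute to land at $68\cdot 56\,A^2\beta^2\,\@{dim}_E(\+F,\Delta)\log(2/(\delta\Delta))\log(4\delta^{-1})/\Delta$, which, using $\Delta\le 1$ to absorb both logarithms into $\log(4/(\delta\Delta))$, is at most $3808\,A^2\beta^2\,\@{dim}_E(\+F,\Delta)\log^2(4/(\delta\Delta))/\Delta$. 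A union bound over the $O(1)$ many failure events (with the usual rescaling of $\delta$) then yields the statement with probability at least $1-\delta$; note the constant $3808=68\cdot 56$ is exactly what the non-benign branch produces.

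The step I expect to be the real, if modest, obstacle is arranging the dichotomy so the two sub-bounds genuinely cover all cases: the key observation is that the hypothesis of \Cref{lem:sum-zw} — that $\lambda_t=0$ for all $t$ up to some $t'$ — holds \emph{globally} exactly when the instance-dependent eluder/gap quantity falls below $\sqrt{AT/\beta}$, which is why that threshold is the correct pivot rather than, say, the value of $\lambda_T$ observed along the run. I would also be careful that the benign-case bound carries only one power of $\beta$ while the fallback carries $\beta^2$, so the final statement must be phrased with $\beta^2$; and that the probabilistic steps (\Cref{lem:lambda-all-0}, \Cref{lem:sum-zw}, \Cref{lem:worst-case-regret-ub}) each already operate under the assumed event $\{f^\star\in\+F_t\ \forall t\}$ of the lemma, so no extra high-probability overhead beyond the union bound is incurred.
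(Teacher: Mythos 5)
Your proposal matches the paper's proof essentially line for line: the same pivot on whether $56\,A^2\beta\,\@{dim}_E(\+F,\Delta)\log(2/(\delta\Delta))/\Delta$ falls below $\sqrt{AT/\beta}$, the same use of \Cref{lem:lambda-all-0} and \Cref{lem:sum-zw} (with $t'=T$) in the benign case, the same fallback to \Cref{lem:worst-case-regret-ub} with the substitution $\sqrt{AT\beta}\le 56A^2\beta^2\,\@{dim}_E(\+F,\Delta)\log(2/(\delta\Delta))/\Delta$, and the same constant $3808=68\cdot 56$.

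One small imprecision in your framing: the opening claim that $\@{Regret}^{\@{CB}}_T\le 2\sum_t Z_t w_t$ holds ``with certainty'' is an overstatement. When $\lambda_t=1$, the IGW distribution $p_t$ from Line~\ref{line:compute-p} is not constrained to be supported on $\+A_t$, so the sampled $a_t,b_t$ need not lie in $\+A_t$, and \Cref{lem:regret-bounded-by-w} (which requires $a\in\+A_t$) does not apply to those rounds. The reduction is therefore only valid when $\lambda_t=0$ for all $t$ --- which is precisely what \Cref{lem:lambda-all-0} gives you in the benign case, the only place you invoke it. Since the non-benign case is handled entirely by the worst-case bound, the argument goes through; but you should scope the width-reduction to the benign case rather than state it as an unconditional fact.
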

\begin{proof}[Proof of \Cref{lem:ins-depend-regret-ub}]
	We consider two cases. First, when 
	\begin{equation}\label{eq:regret-cases1}
	56 A^2\beta\cdot\frac{\@{dim}_E\left(\+F,\Delta\right)}{\Delta}\cdot\log(2/(\delta\Delta))<\sqrt{AT/\beta},
	\end{equation}
	we invoke \Cref{lem:lambda-all-0} and get that $\lambda_t=0$ for all $t\in[T]$. Hence, we have
	\begin{align*}
		\@{Regret}^{\@{CB}}_T
		=&\sum_{t=1}^{T}\big(
		f^\star(x_t,\pi_{f^\star}(x_t),a_t)+
				f^\star(x_t,\pi_{f^\star}(x_t),b_t)\big)\\
		\leq&2\sum_{t=1}^T Z_t w_t\\
		\leq&112A^2\beta\cdot\frac{\@{dim}_E\left(\+F,\Delta\right)}{\Delta}\cdot\log(2/(\delta\Delta))\\
		\leq&3808 A^2\beta^2\cdot\frac{\@{dim}_E\left(\+F,\Delta\right)}{\Delta}\cdot\log^2(4/(\delta\Delta))
	\end{align*}
	where the first inequality is by \Cref{lem:regret-bounded-by-w} and the fact that we incur no regret when $Z_t=0$ since $f^\star\in\+F_t$. The second inequality is by \Cref{lem:sum-zw}.
	
	On the other hand, when the contrary of \eqref{eq:regret-cases1} holds, i.e., 
	\begin{equation}\label{eq:regret-cases2}
	56 A^2\beta\cdot\frac{\@{dim}_E\left(\+F,\Delta\right)}{\Delta}\cdot\log(2/(\delta\Delta))\geq\sqrt{AT/\beta},
	\end{equation}
	applying \Cref{lem:worst-case-regret-ub}, we have
	\begin{align*}
		\@{Regret}^{\@{CB}}_T
		\leq&68\sqrt{AT\beta}\cdot\log(4\delta^{-1})\\
		=&68\beta\cdot\log(4\delta^{-1})\cdot\sqrt{AT/\beta}\\
		\leq&68\beta\cdot\log(4\delta^{-1})\cdot
	56A^2\beta\cdot\frac{\@{dim}_E\left(\+F,\Delta\right)}{\Delta}\cdot\log(2/(\delta\Delta))\\
		\leq& 3808 A^2\beta^2\cdot\frac{\@{dim}_E\left(\+F,\Delta\right)}{\Delta}\cdot\log^2(4/(\delta\Delta))
	\end{align*}
	where we apply the condition \eqref{eq:regret-cases2} in the second inequality.
\end{proof}

\begin{lemma}[Query complexity]\label{lem:query-bound}
	For \Cref{alg:cb}, assume $f^\star\in\+F_t$ for all $t\in[T]$. Then, we have
		\begin{align*}
			\@{Queries}^{\@{CB}}_T\leq \min\left\{T,\, 3136 A^3 \beta^3 \frac{\@{dim}^2_E\left(\+F,\Delta\right)}{\Delta^2}\cdot\log^2(2/(\delta\Delta))\right\}
		\end{align*}
		with probability at least $1-\delta$.
	\end{lemma}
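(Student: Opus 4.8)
The plan is to reuse the two-case decomposition from the proof of \Cref{lem:ins-depend-regret-ub}, together with the elementary observation that a query is made at round $t$ exactly when $|\+A_t|>1$, which by \Cref{lem:width-lower-bound} forces $w_t\ge\Delta$ on the event $\{f^\star\in\+F_t\}$. Hence $Z_t\le Z_tw_t/\Delta$ pointwise, so $\@{Queries}_T^{\@{CB}}\le\frac1\Delta\sum_{t=1}^T Z_tw_t$, and it remains only to control $\sum_{t=1}^T Z_tw_t$. The bound $\@{Queries}_T^{\@{CB}}=\sum_{t=1}^T Z_t\le T$ is immediate and accounts for the first term of the minimum, so all the work goes into the instance-dependent term.

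First I would condition on the probability-at-least-$(1-\delta)$ event underlying \Cref{lem:sum-zw} (the same event used in \Cref{lem:lambda-all-0}), and then split on whether $56A^2\beta\,\@{dim}_E(\+F,\Delta)\log(2/(\delta\Delta))/\Delta<\sqrt{AT/\beta}$ holds. If it does, \Cref{lem:lambda-all-0} gives $\lambda_1=\dots=\lambda_T=0$, so \Cref{lem:sum-zw} applies with $t'=T$ and yields $\sum_{t=1}^T Z_tw_t\le 56A^2\beta\,\@{dim}_E(\+F,\Delta)\log(2/(\delta\Delta))/\Delta$; dividing by $\Delta$ gives $\@{Queries}_T^{\@{CB}}\le 56A^2\beta\,\@{dim}_E(\+F,\Delta)\log(2/(\delta\Delta))/\Delta^2$, which is dominated by the claimed $3136A^3\beta^3\@{dim}^2_E(\+F,\Delta)\log^2(2/(\delta\Delta))/\Delta^2$ since the factors $A$, $\beta$, $\@{dim}_E(\+F,\Delta)$ and the logarithmic term each exceed a constant (absorbing absolute constants if needed). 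If instead $56A^2\beta\,\@{dim}_E(\+F,\Delta)\log(2/(\delta\Delta))/\Delta\ge\sqrt{AT/\beta}$, then squaring and rearranging gives $T\le 3136\,A^3\beta^3\,\@{dim}^2_E(\+F,\Delta)\log^2(2/(\delta\Delta))/\Delta^2$ (using $56^2=3136$), and $\@{Queries}_T^{\@{CB}}\le T$ closes this case. Taking the minimum with the trivial $T$ bound yields the statement.

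The main obstacle is not really internal to this lemma: all the heavy machinery — the eluder-dimension counting of \Cref{lem:w-eluder} combined with the Freedman-type concentration used inside \Cref{lem:sum-zw} to pass between the population width $w_t$ and the realized pair $(a_t,b_t)$ sampled from the \emph{uniform} $p_t$ — has already been established. The only points needing care are the reduction $\@{Queries}_T^{\@{CB}}\le\sum_t Z_tw_t/\Delta$ via \Cref{lem:width-lower-bound}, the bookkeeping that ensures both cases live on one $1-\delta$ event so the failure probability stays $\delta$, and checking that the constants collapse to $3136=56^2$.
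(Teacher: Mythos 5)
Your proof is correct and follows the paper's overall two-case structure; the only real deviation is in the first case. There, the paper directly counts $\sum_t Z_t\indic\{w_t\geq\Delta\}$, replaces the supremum over $a,b$ by a sum, passes to the uniform distribution $p_t$ (the $A^2$ factor), and applies Freedman plus \Cref{lem:w-eluder} — avoiding the binning argument since the indicator threshold is a single value $\Delta$. You instead bound $\@{Queries}^{\@{CB}}_T\leq\frac{1}{\Delta}\sum_t Z_tw_t$ and reuse \Cref{lem:sum-zw} as a black box. This is a valid shortcut and lives on the same Freedman event, but it does pick up a superfluous $\log(2/(\delta\Delta))$ factor from the binning inside \Cref{lem:sum-zw}; that extra slack is harmless here because the stated bound already absorbs $\log^2(2/(\delta\Delta))$. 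Your case 2 (squaring the complementary inequality to conclude $T\leq 3136 A^3\beta^3\@{dim}^2_E(\+F,\Delta)\log^2(2/(\delta\Delta))/\Delta^2$) is identical to the paper's, and the coarse domination you invoke to collapse the case-1 bound into the final $3136\,A^3\beta^3$ expression is fine since $A\geq1$, $\@{dim}_E\geq 1$, and $\beta\geq 16/\alpha\geq 16$.
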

	\begin{proof}[Proof of \Cref{lem:query-bound}]
	
	We consider two cases. First, when
	\begin{align}\label{eq:query-case}
	56 A^2\beta\cdot\frac{\@{dim}_E\left(\+F,\Delta\right)}{\Delta}\cdot\log(2/(\delta\Delta))<\sqrt{AT/\beta}	
	\end{align}
	we can invoke \Cref{lem:lambda-all-0} and get that $\lambda_t=0$ for all $t\in[T]$. Hence,
	\begin{align*}
		\@{Queries}^{\@{CB}}_T
		=&\sum_{t=1}^T Z_t\\
		=&\sum_{t=1}^T Z_t\indic\{w_t\geq \Delta\}\\
		=&\sum_{t=1}^TZ_t\sup_{a,b\in\+A_t}\indic\left\{\sup_{f,f'\in\+F_t}f(x_t,a,b)-f'(x_t,a,b)\geq\Delta\right\}\\
		\leq&\sum_{t=1}^TZ_t\sum_{a,b}\indic\left\{\sup_{f,f'\in\+F_t}f(x_t,a,b)-f'(x_t,a,b)\geq\Delta\right\}\\
		\leq&A^2\underbrace{\sum_{t=1}^TZ_t \E_{a,b\sim p_t}\indic\left\{\sup_{f,f'\in\+F_t}f(x_t,a,b)-f'(x_t,a,b)\geq\Delta\right\}}_{(*)}
	\end{align*}
	where the second equality is by \Cref{lem:width-lower-bound}, the second inequality holds as $p_t(a)$ is uniform for any $a,b$ when $\lambda_t=0$. We apply \Cref{lem:freedman} and \Cref{lem:w-eluder} to $(*)$ and obtain
	\begin{align*}
		(*)\leq& 2\sum_{t=1}^TZ_t \indic\left\{\sup_{f,f'\in\+F_t}f(x_t,a_t,b_t)-f'(x_t,a_t,b_t)\geq\Delta\right\}+8\log(\delta^{-1})\\
		\leq& 2\left(\frac{4\beta}{\Delta^2}+1\right)\@{dim}_E(\+F;\Delta)+8\log(\delta^{-1})\\
		\leq& \frac{10\beta}{\Delta^2}\cdot\@{dim}_E(\+F;\Delta)+8\log(\delta^{-1}).
	\end{align*}
	Plugging this back, we obtain
	\begin{align*}
		\@{Queries}^{\@{CB}}_T
		\leq&\frac{10A^2\beta}{\Delta^2}\cdot\@{dim}_E(\+F;\Delta)+8A^2\log(\delta^{-1})\\
		\leq&3136 A^3 \beta^3 \frac{\@{dim}^2_E\left(\+F,\Delta\right)}{\Delta^2}\cdot\log^2(2/(\delta\Delta)).
	\end{align*}
	
	On the other hand, when the contrary of \eqref{eq:query-case} holds, i.e., 
	\begin{align*}
	56 A^2\beta\cdot\frac{\@{dim}_E\left(\+F,\Delta\right)}{\Delta}\cdot\log(2/(\delta\Delta))\geq\sqrt{AT/\beta}.
	\end{align*}
	Squaring both sides, we obtain
	\begin{align*}
	3136 A^4 \beta^2 \frac{\@{dim}^2_E\left(\+F,\Delta\right)}{\Delta^2}\cdot\log^2(2/(\delta\Delta))
	\geq AT/\beta
	\end{align*}
	which leads to
	\begin{equation*}
	T
	\leq 3136 A^3 \beta^3 \frac{\@{dim}^2_E\left(\+F,\Delta\right)}{\Delta^2}\cdot\log^2(2/(\delta\Delta)).
	\end{equation*}
	We note that we always have $\@{Queries}^{\@{CB}}_T\leq T$, and thus,
	\begin{equation*}
	\@{Queries}^{\@{CB}}_T\leq
	T
	\leq 3136 A^3 \beta^3 \frac{\@{dim}^2_E\left(\+F,\Delta\right)}{\Delta^2}\cdot\log^2(2/(\delta\Delta)).
	\end{equation*}
	Hence, we complete the proof.
	\end{proof}
	
	Having established the aforementioned lemmas, we are now able to advance towards the proof of Theorem 1.
	
	\begin{proof}[Proof of Theorem~\ref{thm:cb-regret}]
	By \Cref{lem:pointwise-bound} and the construction of version spaces $\+F_t$ in \Cref{alg:cb}, we have $f^\star\in\+F_t$ for all $t\in[T]$ with probability at least $1-\delta$. Then, the rest of the proof follows from \Cref{lem:worst-case-regret-ub,lem:ins-depend-regret-ub,lem:query-bound}.
	\end{proof}

\subsection{Proof of Theorem~\ref{thm:lower-bound}}

In this section, we will prove the following theorem, which is stronger than \Cref{thm:lower-bound}.
\begin{theorem}[Lower bounds]\label{thm:lower-bound-stronger}
The following two claims hold:
\begin{enumerate}
	\item[(1)] for any algorithm, there exists an instance that leads to $\@{Regret}^{\@{CB}}_T=\Omega(\sqrt{AT})$;
	\item[(2)] for any algorithm achieving a worse-case expected regret upper bound in the form of $\E[\@{Regret}^{\@{CB}}_T]= O(\sqrt{A}\cdot T^{1-\beta})$ for some $\beta>0$, there exists an instance with gap $\Delta=\sqrt{A}\cdot T^{-\beta}$ that results in $\E[\@{Regret}^{\@{CB}}_T]=\Omega(A/\Delta)=\Omega(\sqrt{A}\cdot T^{\beta})$ and $\E[\@{Queries}^{\@{CB}}_T]=\Omega(A/\Delta^2)=\Omega(T^{2\beta})$.
\end{enumerate}
\end{theorem}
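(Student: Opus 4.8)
The plan is to derive both claims from the classical multi-armed bandit lower bounds, by embedding near-uniform MAB instances into the contextual dueling setting: take a single fixed context $x_0$, the reward-difference preference model $f^\star(x_0,a,b)=r(a)-r(b)$ for $r\colon\+A\to[0,1]$ (which satisfies \Cref{asm:properties}), the squared-loss link $\phi(d)=(d+1)/2$ of \Cref{ex:sq-loss}, and a function class $\+F$ containing all the reward functions used below. Under this model, querying a pair $(a,b)$ returns a $\mathrm{Bern}((r(a)-r(b)+1)/2)$ bit, the instantaneous dueling regret at round $t$ is the sum of the reward gaps of $a_t$ and $b_t$, and---crucially---the learner obtains information about $r$ only through the at most $T$ comparison bits. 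Whenever an arm's reward is perturbed by $\Delta=o(1)$, the relevant Bernoulli parameters stay inside $[1/4,3/4]$, so \Cref{lem:kl-bern} bounds the change of the feedback law caused by a single query involving that arm by $O(\Delta^2)$ in KL; this is the one ingredient needed for the standard divergence-decomposition / change-of-measure argument.

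For claim (1) I use the family $\nu_j$, $j\in[A]$, with $r_j(j)=1/2+\Delta$ and $r_j(a)=1/2$ otherwise, plus the reference $\nu_0\equiv 1/2$. Writing $N_j\coloneqq\sum_t(\indic\{a_t=j\}+\indic\{b_t=j\})$ (so $\sum_j N_j=2T$ always), the regret on $\nu_j$ equals $\Delta(2T-N_j)$, hence $\E_{\nu_j}[\mathrm{Regret}^{\mathrm{CB}}_T]=\Delta(2T-\E_{\nu_j}[N_j])$. Divergence decomposition bounds $\mathrm{KL}(\mathbb{P}_{\nu_0},\mathbb{P}_{\nu_j})$ by $O(\Delta^2)\E_{\nu_0}[N_j]$ (only queries comparing arm $j$ with a different arm contribute), so Pinsker gives $\E_{\nu_j}[N_j]-\E_{\nu_0}[N_j]\le O(\Delta T)\sqrt{\E_{\nu_0}[N_j]}$; summing over $j$ and applying Cauchy--Schwarz with $\sum_j\E_{\nu_0}[N_j]=2T$ yields $\sum_j\E_{\nu_j}[N_j]\le 2T+O(\Delta T\sqrt{AT})$. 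Averaging the regret over $j$ then gives $\tfrac1A\sum_j\E_{\nu_j}[\mathrm{Regret}^{\mathrm{CB}}_T]\ge \Delta(2T-2T/A-O(\Delta T\sqrt{T/A}))$, and choosing $\Delta\asymp\sqrt{A/T}$ makes the right-hand side $\Omega(\sqrt{AT})$; some $\nu_j$ therefore witnesses $\mathrm{Regret}^{\mathrm{CB}}_T=\Omega(\sqrt{AT})$.

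For claim (2) I fix $\Delta=K\sqrt{A}\,T^{-\beta}$, where $K$ is a constant chosen large (say $K\ge 4C$) relative to the constant $C$ hidden in the hypothesis $\E[\mathrm{Regret}^{\mathrm{CB}}_T]\le C\sqrt A\,T^{1-\beta}$; this costs only constant factors, harmless for an $\Omega$-bound, and note $\Delta T=K\sqrt A\,T^{1-\beta}$ dominates the regret bound. I compare $\nu_1$ (arm $1$ at $1/2+\Delta$, the rest at $1/2$) with, for each $j\ne 1$, the instance $\tilde\nu_j$ obtained from $\nu_1$ by additionally raising arm $j$ to $1/2+2\Delta$, so $\tilde\nu_j$ has optimal arm $j$ with gap $\Delta$ and differs from $\nu_1$ only in the reward of arm $j$. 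Since the regret on $\tilde\nu_j$ is at least $\Delta(2T-\E_{\tilde\nu_j}[N_j])$, the hypothesis gives $\E_{\tilde\nu_j}[N_j]\ge 2T-(C/K)T\ge 7T/4$; and since the regret on $\nu_1$ equals $\Delta\sum_{j\ne1}\E_{\nu_1}[N_j]$, it gives $\E_{\nu_1}[N_j]\le (C/K)T\le T/4$ for every $j\ne1$. Hence the event $\{N_j\ge T\}$ has probability $\ge 3/4$ under $\tilde\nu_j$ and $\le 1/4$ under $\nu_1$, so $\mathrm{TV}(\mathbb{P}_{\nu_1},\mathbb{P}_{\tilde\nu_j})\ge 1/2$. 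As $\nu_1$ and $\tilde\nu_j$ differ only in arm $j$---whose reward reaches the learner only through queries comparing $j$ with a different arm---divergence decomposition with \Cref{lem:kl-bern} gives $\mathrm{KL}(\mathbb{P}_{\nu_1},\mathbb{P}_{\tilde\nu_j})\le O(\Delta^2)\,Q_j$, where $Q_j$ is the expected number, under $\nu_1$, of queried rounds in which exactly one of the two proposed arms equals $j$; with Pinsker this forces $Q_j=\Omega(1/\Delta^2)$ for each of the $A-1$ arms $j\ne1$. Each such query plays arm $j$ exactly once, so $Q_j\le\E_{\nu_1}[N_j]$, and therefore on the single gap-$\Delta$ instance $\nu_1$ we get $\E_{\nu_1}[\mathrm{Queries}^{\mathrm{CB}}_T]\ge \tfrac12\sum_j Q_j=\Omega(A/\Delta^2)$ and $\E_{\nu_1}[\mathrm{Regret}^{\mathrm{CB}}_T]=\Delta\sum_{j\ne1}\E_{\nu_1}[N_j]\ge\Delta\sum_{j\ne1}Q_j=\Omega(A/\Delta)$.

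The main obstacle is the coupling in claim (2): the worst-case-regret hypothesis must be used in two opposite directions at once---to guarantee both that $\nu_1$ almost never plays a wrong arm and that each $\tilde\nu_j$ almost always plays arm $j$---and the ensuing change of measure must account only for the genuinely informative comparisons (those pitting arm $j$ against a different arm), so that the query lower bound is anchored to a gap-$\Delta$ instance rather than to the null instance; arranging that the regret and the query bound both hold on the very same instance $\nu_1$ is exactly where this care is needed. Beyond that, the remaining points are routine: checking that all Bernoulli parameters stay bounded away from $0$ and $1$ so that \Cref{lem:kl-bern} applies with an absolute constant, and the standard Pinsker / divergence-decomposition bookkeeping.
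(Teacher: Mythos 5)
Your proof is correct and follows the same high-level blueprint as the paper---a change-of-measure argument with the KL divergence bounded via the Bernoulli KL estimate (the paper's \Cref{lem:kl-bern}) multiplied by the number of informative queries---but the execution differs in a few genuine ways. The paper first establishes an abstract lower bound for multi-armed bandits with active queries (\Cref{lem:mab-tradeoff}, via divergence decomposition and a Bretagnolle--Huber step on the event $\{n_{a^\dagger}>T/2\}$), then transfers it to the dueling setting by a generic reduction in which a simulated MAB learner plays $a_t$ and $b_t$ on consecutive rounds and synthesizes a comparison bit; for claim (1) it simply cites the known $\Omega(\sqrt{AT})$ MAB minimax bound and pushes it through the same reduction. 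You instead work directly inside the contextual dueling model with a single context and the squared-loss link, explicitly build the near-uniform family for claim (1), and for claim (2) replace Bretagnolle--Huber with a total-variation/Pinsker argument on the event $\{N_j\ge T\}$, applying the worst-case-regret hypothesis to both $\nu_1$ and each $\tilde\nu_j$ to pin down the probability of this event under the two measures. A nice feature of your version is that it avoids the reduction step and the bookkeeping about how dueling queries map to MAB pulls, and it makes transparent that the regret and query lower bounds are realized on the same gap-$\Delta$ instance $\nu_1$; the paper's route instead yields a cleaner reusable lemma (\Cref{lem:mab-tradeoff}) with an extra $\log(\Delta/(4CT^{-\beta}))$ factor that the theorem statement doesn't ultimately need. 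Both routes give the claimed $\Omega(A/\Delta)$ and $\Omega(A/\Delta^2)$ up to the harmless constant in the choice of $\Delta$.
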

We observe that \Cref{thm:lower-bound} can be considered as a corollary of the above theorem when setting $\beta=1/2$.

In what follows, we will first demonstrate lower bounds in the setting of \textit{multi-armed bandits (MAB) with active queries} and subsequently establish a reduction from it to contextual dueling bandits in order to achieve these lower bounds.  We start by formally defining the setting of MAB with active queries below. 

\textbf{Multi-armed bandits with active queries.}
We consider a scenario where there exist $A$ arms. Each arm $a$ is assumed to yield a binary reward (0 or 1), which is sampled from a Bernoulli distribution $\text{Bern}(\-r_a)$, where $\-r_a$ denotes the mean reward associated with arm $a$.The arm with the highest mean reward is denoted by $a^\star\coloneqq\argmax_a \-r_a$. Let $\Delta_a\coloneqq \-r_{a^\star}-\-r_{a}$ denote the gap of arm $a\in[A]$. The interaction proceeds as follows: at each round $t\in[T]$, we need to pull an arm but can choose whether to receive the reward signal (denote this choice by $Z_t$). The objective is to minimize two quantities: the regret and the number of queries,
\begin{equation}\label{eq:mab-regret-query}
	\@{Regret}_T=\sum_{t=1}^T \Delta_{a_t},\quad\@{Queries}_T=\sum_{t=1}^T Z_t.
\end{equation}

Towards the lower bounds, we will start with a bound on the KL divergence over distributions of runs under two different bandits. This result is a variant of standard results which can be found in many bandit literature (e.g., \citet{lattimore2020bandit}).

\begin{lemma}\label{lem:kl-decompose}
Let $I_1$ and $I_2$ be two instances of MAB. We define $p_1$ and $p_2$ as their respective distributions over the outcomes of all pulled arms and reward signals when a query is made. Concretely, $p_1$ and $p_2$ are measuring the probability of outcomes (denoted by $O$) in the following form:
	\begin{align*}
		O=\big(Z_1,a_1,(r_1),\dots,Z_T,a_T,(r_T)\big)
	\end{align*}
	where the reward $r_t$ is included only when $Z_t=1$, and we added parentheses above to indicate this point. We denote $\Pr_1$ (resp. $\Pr_2$) as the reward distribution of $I_1$ (resp. $I_2$).  We define $\-n_a=\sum_{t=1}^T Z_t \indic\{a_t=a\}$ as the number of times arm $a$ is pulled when making a query. Then, given any algorithm $\#A$, the Kullback–Leibler divergence between $p_1$ and $p_2$ can be decomposed in the following way
	\begin{align*}
		\@{KL}(p_1,p_2)&=\sum_{a=1}^A \E_{p_1}[\-n_{a}] \cdot \@{KL}\big(\Pr_1(r\given a),\Pr_2(r\given a)\big).
	\end{align*}
\end{lemma}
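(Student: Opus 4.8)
The plan is to invoke the chain rule (the divergence-decomposition identity) for the Kullback--Leibler divergence along the natural filtration generated by the outcome sequence, and then observe that all terms coming from the learner's own choices cancel, leaving only the reward terms. Concretely, write the outcome as $O=(O_1,\dots,O_T)$ where $O_t$ collects the pair $(Z_t,a_t)$ together with $r_t$ when $Z_t=1$, and let $O_{<t}$ denote the prefix. Since $\Pr_1$ and $\Pr_2$ put positive mass on both Bernoulli outcomes, $p_1$ and $p_2$ are mutually absolutely continuous on this (variable-length) outcome space, and the chain rule for relative entropy gives
\begin{align*}
\KL(p_1,p_2)=\sum_{t=1}^T \E_{O_{<t}\sim p_1}\Big[\KL\big(p_1(O_t\given O_{<t}),\,p_2(O_t\given O_{<t})\big)\Big].
\end{align*}

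First I would decompose the per-step conditional law $p_i(O_t\given O_{<t})$ into three pieces: the law of $Z_t$ given $O_{<t}$; the law of $a_t$ given $(O_{<t},Z_t)$; and, on the event $Z_t=1$, the law of $r_t$ given $(O_{<t},Z_t=1,a_t)$. The first two pieces are determined entirely by the fixed algorithm $\#A$ (and its internal randomness, which is independent of which bandit instance is in play), hence are identical under $p_1$ and $p_2$; by the chain rule applied within step $t$ they contribute zero to the divergence. The third piece, present only when $Z_t=1$, is exactly the reward distribution of the pulled arm, $p_i(r_t\given O_{<t},Z_t=1,a_t)=\Pr_i(r\given a_t)$. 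Therefore each summand reduces to $\E_{p_1}\big[\indic\{Z_t=1\}\,\KL(\Pr_1(r\given a_t),\Pr_2(r\given a_t))\big]$.

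Finally I would re-index by arm: writing $\indic\{Z_t=1\}=Z_t$ and $Z_t=\sum_{a=1}^A Z_t\indic\{a_t=a\}$, and using that on $\{a_t=a\}$ the inner divergence is the deterministic constant $\KL(\Pr_1(r\given a),\Pr_2(r\given a))$, linearity of expectation yields
\begin{align*}
\KL(p_1,p_2)=\sum_{a=1}^A\Big(\sum_{t=1}^T\E_{p_1}\big[Z_t\indic\{a_t=a\}\big]\Big)\KL\big(\Pr_1(r\given a),\Pr_2(r\given a)\big)=\sum_{a=1}^A\E_{p_1}[\bar n_a]\,\KL\big(\Pr_1(r\given a),\Pr_2(r\given a)\big),
\end{align*}
by the definition $\bar n_a=\sum_{t=1}^T Z_t\indic\{a_t=a\}$, which is the claim.

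The main obstacle is purely bookkeeping: one must set up the measure-theoretic framework so that the chain rule is legitimate despite the outcome sequence having data-dependent length (the presence or absence of $r_t$), and one must verify that $Z_t$ and $a_t$ are measurable with respect to $O_{<t}$ together with the algorithm's randomness, so that their conditional laws genuinely coincide across the two instances. Once that is pinned down the cancellation is immediate; no inequality or estimate is needed, only the exact chain rule for relative entropy together with linearity of expectation.
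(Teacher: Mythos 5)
Your proof is correct and takes essentially the same route as the paper's: you invoke the chain rule for relative entropy along the natural filtration, whereas the paper writes out the product-form densities and computes the log-ratio directly before taking expectations, but both arguments hinge on the identical observation that the algorithm's conditional laws for $(Z_t,a_t)$ cancel and only the reward terms (present when $Z_t=1$) survive, followed by the same re-indexing over arms.
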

\begin{proof}[Proof of \Cref{lem:kl-decompose}]
	We define the conditional distribution
	\begin{align*}
		\overline\Pr_1(r_t\given Z_t,a_t)
		\begin{cases}
			\Pr_1(r_t\given a_t) & \text{if }Z_t=1\\
			1 & \text{if }Z_t=0
		\end{cases},
	\end{align*}
	and similarly for $\overline\Pr_2$. Additionally, we denote $\Pr_{\#A}$ as the probability associated with algorithm $\#A$. Then, for any outcome $O$, we have
	\begin{align*}
		p_1(O)=\prod_{t=1}^T \Pr_{\#A}\big(Z_t,a_t\given Z_1,a_1,(r_1),\dots,Z_{t-1},a_{t-1},(r_{t-1})\big) \overline\Pr_1(r_t\given Z_t,a_t),
	\end{align*}
	and we can write $p_2(O)$ in a similar manner. Hence,
	\begin{align*}
		\@{KL}(p_1,p_2)&=\E_{O\sim p_1}\left[
		\log\left(
		\frac{\prod_{t=1}^T \Pr_{\#A}\big(Z_t,a_t\given Z_1,a_1,(r_1),\dots,Z_{t-1},a_{t-1},(r_{t-1})\big) \overline\Pr_1(r_t\given Z_t,a_t)}{\prod_{t=1}^T \Pr_{\#A}\big(Z_t,a_t\given Z_1,a_1,(r_1),\dots,Z_{t-1},a_{t-1},(r_{t-1})\big) \overline\Pr_2(r_t\given Z_t,a_t)}
		\right)
		\right]\\
		&=\E_{O\sim p_1}\left[
		\sum_{t=1}^T\log\left(
		\frac{\overline\Pr_1(r_t\given Z_t,a_t)}{\overline\Pr_2(r_t\given Z_t,a_t)}
		\right)
		\right]\\
		&=\E_{O\sim p_1}\left[
		\sum_{t=1}^T Z_t\log\left(
		\frac{\Pr_1(r_t\given a_t)}{\Pr_2(r_t\given a_t)}
		\right)
		\right]\\
		&=\E_{O\sim p_1}\left[
		\sum_{t=1}^T Z_t\E_{r_t\sim \Pr_1(\cdot\given a_t)}\left[\log\left(
		\frac{\Pr_1(r_t\given a_t)}{\Pr_2(r_t\given a_t)}
		\right)
		\right]
		\right]\\
		&=\E_{O\sim p_1}\left[
		\sum_{t=1}^T Z_t \cdot \@{KL}\big(\Pr_1(\cdot\given a_t),\Pr_2(\cdot\given a_t)\big)
		\right]\\
		&=\sum_{a=1}^A \E_{O\sim p_1}[\-n_{a}] \cdot \@{KL}\big(\Pr_1(\cdot\given a_t),\Pr_2(\cdot\given a_t)\big)
	\end{align*}
	where the third equality holds by the definition of $\overline\Pr_1$ and $\overline\Pr_2$.
\end{proof}

The following lemma establishes lower bounds for MAB with active queries. It presents a trade-off between the regret and the number of queries.

\begin{lemma}\label{lem:mab-tradeoff}
	Let $\+I$ denote the set of all MAB instances. Assume \Alg{} is an algorithm that achieves the following worst-case regret upper bound for some $C$ and $\beta$:
	\begin{equation*}
		\E\big[\@{Regret}_T\big]\leq C T^{1-\beta},
	\end{equation*}
	for all $I\in\+I$. 
	Then, for any MAB instance $I\in\+I$, the regret and the number of queries made by algorithm \Alg{} are lower bounded:
	\begin{equation*}
		\E\big[\@{Regret}_T\big]\geq 
		\sum_{a\neq a^\star}\frac{\zeta}{\Delta_{a}}\log\left(\frac{\Delta_{a}}{4 C T^{-\beta}}\right)
		,\quad
		\E\big[\@{Queries}_T\big]\geq 
		\sum_{a\neq a^\star}\frac{\zeta}{\Delta^2_{a}}\log\left(\frac{\Delta_{a}}{4 C T^{-\beta}}\right)
	\end{equation*}
	where the coefficient $\zeta=\min_a\min\{\-r_a,1-\-r_a\}$ depends on the instance $I$.
\end{lemma}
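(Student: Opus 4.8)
The plan is to fix an arbitrary instance $I$, and for each suboptimal arm $a$ build an ``alternative'' instance $I'_a$ in which $a$ becomes the unique best arm; then apply a change-of-measure argument using the KL decomposition from \Cref{lem:kl-decompose} together with the high-probability Bretagnolle--Huber (or Pinsker-type) inequality to force the algorithm to pull $a$ — and crucially to \emph{query} on $a$ — sufficiently often on at least one of the two instances, and finally invoke the assumed worst-case regret bound $\E[\@{Regret}_T]\le CT^{1-\beta}$ to pin the argument down on $I$ itself. Concretely, to construct $I'_a$ I would shift only the mean of arm $a$ upward from $\bar r_a$ to something slightly above $\bar r_{a^\star}$ — say to $\bar r_{a^\star} + \Delta_a$ (equivalently increasing it by $2\Delta_a$) — keeping all other arms' rewards identical, so that by \Cref{lem:kl-decompose} we get $\@{KL}(p_1,p_2) = \E_{p_1}[\bar n_a]\cdot \@{KL}(\@{Bern}(\bar r_a),\@{Bern}(\bar r_a+2\Delta_a))$, and by \Cref{lem:kl-bern} this is at most $\E_{p_1}[\bar n_a]\cdot \frac{8\Delta_a^2}{\zeta}$ where $\zeta = \min_a\min\{\bar r_a, 1-\bar r_a\}$ controls the Bernoulli parameters away from the boundary on both instances (shrinking the shift by a constant if needed so $\bar r_a + 2\Delta_a \le 1-\zeta/2$).

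Next I would set up the testing argument. Let $E_a$ be the event that arm $a$ is pulled at least $T/2$ times over the horizon. On instance $I$, if $\E[\bar n_a]$ — the expected number of \emph{queried} pulls of $a$ — were too small, then the two run-distributions $p_1$ (under $I$) and $p_2$ (under $I'_a$) would be close in KL, hence close in total variation, hence the algorithm could not distinguish them; but under $I'_a$ the arm $a$ is optimal with gap $\Delta_a$ over $a^\star$, so \emph{not} pulling $a$ at least $T/2$ times costs regret $\gtrsim \Delta_a T$ there, contradicting the assumed bound $CT^{1-\beta}$ as soon as $T$ is moderately large. Quantitatively: by the high-probability pinsker/Bretagnolle--Huber inequality, $p_1(E_a) + p_2(E_a^c) \ge \tfrac12 \exp(-\@{KL}(p_1,p_2))$; on $I$ (where $a$ is suboptimal) a large $p_1(E_a)$ forces $\E_I[\@{Regret}_T] \ge \tfrac{T}{2}\Delta_a\, p_1(E_a)$, while on $I'_a$ a large $p_2(E_a^c)$ forces $\E_{I'_a}[\@{Regret}_T]\ge \tfrac{T}{2}\Delta_a\, p_2(E_a^c)$; since both are bounded by $CT^{1-\beta}$, we get $\tfrac12\exp(-\@{KL}(p_1,p_2)) \le \tfrac{4C T^{-\beta}}{\Delta_a}$, i.e. $\@{KL}(p_1,p_2) \ge \log\!\big(\tfrac{\Delta_a}{4CT^{-\beta}}\big)$ (absorbing the factor $2$ into constants / assuming the right side is $\ge 1$, and noting that when $\Delta_a \le 4CT^{-\beta}$ the claimed bound is vacuous). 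Combining with $\@{KL}(p_1,p_2)\le \E_I[\bar n_a]\cdot 8\Delta_a^2/\zeta$ yields $\E_I[\bar n_a] \ge \tfrac{\zeta}{8\Delta_a^2}\log\!\big(\tfrac{\Delta_a}{4CT^{-\beta}}\big)$, and then $\E_I[\@{Queries}_T] = \sum_a \E_I[\bar n_a] \ge \sum_{a\ne a^\star} \tfrac{\zeta}{\Delta_a^2}\log(\cdots)$ after renaming the absolute constant to $\zeta$ as in the statement (or, more honestly, with an explicit universal constant which the paper has folded into the notation); the regret lower bound $\E_I[\@{Regret}_T] \ge \sum_{a\ne a^\star}\Delta_a \E_I[\bar n_a'] \ge \sum_{a\ne a^\star}\tfrac{\zeta}{\Delta_a}\log(\cdots)$ follows from the same chain — here one additionally needs that the \emph{total} number of pulls of $a$ (queried or not) is at least $\bar n_a$, which is immediate, and that pulling $a$ at all incurs regret $\Delta_a$.

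The main obstacle I anticipate is the circularity/self-consistency of the change-of-measure step: the lower bound on $\E_I[\bar n_a]$ (queried pulls on the \emph{original} instance) is what we want, but the contradiction is most naturally derived on the alternative instance $I'_a$, so I must be careful that the event $E_a$ and the regret accounting are phrased so that the \emph{same} event simultaneously (i) under $I$ certifies large regret or large query count, and (ii) under $I'_a$ certifies large regret, with the KL bound linking the two probabilities — and that the KL bound only ever involves $\E_{p_1}[\bar n_a]$, i.e. counts on $I$, which is exactly what \Cref{lem:kl-decompose} delivers since $p_1$ is the reference measure there. A secondary technical point is ensuring the alternative instance stays a valid MAB instance (rewards in $[0,1]$, gaps positive) and that $\zeta$ genuinely lower-bounds $\min\{x,1-x\}$ for every Bernoulli parameter appearing in \emph{both} $I$ and $I'_a$, which constrains how much we may shift $\bar r_a$; choosing the shift to be exactly $2\Delta_a$ and assuming without loss of generality $\bar r_{a^\star}\le 1-\zeta$ (else rescale) handles this cleanly. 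Everything else is routine Bernoulli KL algebra.
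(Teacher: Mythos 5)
Your proposal follows essentially the same route as the paper's proof: the same alternative instance $I'_a$ with $\bar r_a \mapsto \bar r_a + 2\Delta_a$, the same event $\{n_a > T/2\}$, the same change-of-measure argument coupling the two regret bounds via a Bretagnolle--Huber-type inequality, and the same application of \Cref{lem:kl-decompose} and \Cref{lem:kl-bern} to relate $\@{KL}(p_1,p_2)$ to $\E_{p_1}[\bar n_a]$. The only differences are cosmetic constant tracking (you correctly get $8\Delta_a^2/\zeta$ from \Cref{lem:kl-bern} and use the $\tfrac12\exp(-\@{KL})$ form of Bretagnolle--Huber, whereas the paper passes through a slightly looser chain to get $\exp(-\tfrac12\@{KL})$), and you are also a bit more careful than the paper in flagging that the shifted instance must remain a valid MAB with parameters bounded away from $\{0,1\}$ by $\zeta$.
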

\begin{proof}[Proof of \Cref{lem:mab-tradeoff}]
For any MAB instance $I$ and any arm $a^\dagger$, we define a corresponding MAB instance $I'$ as follows. Denote $\-r$ and $\-r'$ as the mean reward of $I$ and $I'$, respectively. For $I'$, we set the mean reward $\-r'_a=\-r_a$ for any $a\neq a^\dagger$ and $\-r'_{a^\dagger}=\-r_{a^\dagger}+2\Delta_{a^\dagger}$. Consequently, the optimal arm of $I'$ is $a^\dagger$ with margin $\Delta_{a^\dagger}$. Let $n_a$ denote the number of times that arm $a$ is pulled. We define the event
	\begin{align*}
		E=\{n_{a^\dagger}> T/2\}.
	\end{align*}
	Then, we have
	\begin{align*}
		\E_p\big[\@{Regret}_T\big]\geq \frac{T\Delta_{a^\dagger}}{2}\cdot p(E)
		,\quad
		\E_{p'}\big[\@{Regret}_T\big]\geq \frac{T\Delta_{a^\dagger}}{2}\cdot p'(E^\complement).
	\end{align*}
	Hence,
	\begin{align*}
		2CT^{1-\beta}
		\geq&\E_p\big[\@{Regret}_T\big]+\E_{p'}\big[\@{Regret}_T\big]\\
		\geq&\frac{T\Delta_{a^\dagger}}{2} \big(p(E)+p'(E^\complement)\big)\\
		=&\frac{T\Delta_{a^\dagger}}{2} \Big(1-\big(p'(E)-p(E)\big)\Big)\\
		\geq&\frac{T\Delta_{a^\dagger}}{2} \Big(1-\@{TV}\big(p,p'\big)\Big)\\
		\geq&\frac{T\Delta_{a^\dagger}}{2} \Big(1-\sqrt{1-\exp\big(-\@{KL}(p,p')\big)}\Big)\\
		\geq&\frac{T\Delta_{a^\dagger}}{2} \exp\left(-\frac{1}{2}\cdot\@{KL}(p,p')\right).
	\end{align*}
	By \Cref{lem:kl-decompose}, we have
	\begin{align*}
		\@{KL}(p,p')
		=&\sum_{a=1}^A\E_{p}[\-n_a]\cdot\@{KL}\big(\Pr(r\given a),\Pr'(r\given a)\big)\\
		=&\E_{p}[\-n_{a^\dagger}]\cdot\@{KL}\big(\Pr(r\given a^\dagger),\Pr'(r\given a^\dagger)\big)\\
		\leq&\E_{p}[\-n_{a^\dagger}]\cdot\Delta^2_{a^\dagger} \cdot 2/\zeta
	\end{align*}
	where  the last inequality is by \Cref{lem:kl-bern}. Putting the above two inequality together, we arrive at
	\begin{align*}
		\E_{p}[\-n_{a^\dagger}]\geq\frac{\zeta}{\Delta^2_{a^\dagger}}\log\left(\frac{\Delta_{a^\dagger}}{4 C T^{-\beta}}\right).
	\end{align*}
	This establishes a query lower bound for arm $a^\dagger$. Consequently, we have
	\begin{align*}
		\E[\@{Regret}_T]
		\geq 
		\sum_{a\neq a^\star} \E_{p}[\-n_{a}]\cdot\Delta_a
		\geq\sum_{a\neq a^\star}\frac{\zeta}{\Delta_{a}}\log\left(\frac{\Delta_{a}}{4 C T^{-\beta}}\right),
	\end{align*}
	and similarly,
	\begin{align*}
		\E[\@{Queries}_T]
		\geq 
		\sum_{a\neq a^\star} \E_{p}[\-n_{a}]
		\geq\sum_{a\neq a^\star}\frac{\zeta}{\Delta_{a}^2}\log\left(\frac{\Delta_{a}}{4 C T^{-\beta}}\right).
	\end{align*}
\end{proof}

Now we can proceed with the proof of \Cref{thm:lower-bound-stronger}.

\begin{proof}[Proof of \Cref{thm:lower-bound-stronger}]
	We provide a reduction from the multi-armed bandits with active queries to the contextual dueling bandits. Our desired lower bound for the contextual dueling bandit setting thus follows from the above lower bound for Multi-Armed Bandits (MABs). Let \Alg{} denote any algorithm for contextual dueling bandits. 
	
	\textbf{Reduction.}
	Since we focus on the multi-armed bandit where no context is involved, we just ignore the notation of context everywhere for brevity. We will start from an MAB instance, and then simulate a binary feedback and feed it to a dueling bandit algorithm \Alg{} which is used to solve the original MAB instance. Particularly, consider the MAB instance with A-many actions each with an expected reward denoted as $\bar r_a$. 

    At the beginning of iteration $t$ in the MAB instance, the learner calls the dueling algorithm \Alg{} to generate two actions $a_t$ and $b_t$. The learner plays $a_t$ at iteration $t$ to receive a reward $y_{a_t}$; the learner then moves to iteration $t+1$ to play $b_t$, and receives reward $y_{b_t}$. At the end of iteration $t+1$, the learner simulates a binary feedback by setting $o = 1$ if $y_{a_t} > y_{b_t}$; $o=-1$ if $y_{a_t} < y_{b_t}$; $o$ being $1$ or $-1$ uniform randomly if $y_{a_t}= y_{b_t}$. Then, the learner sends $(a_t, b_t, o)$ to the dueling algorithm \Alg{} to query for two actions which will be played at iterations $t+2$ and $t+3$, respectively.

	From the dueling algorithm \Alg{}'s perspective, given two actions $a$ and $b$, we can verify that the probability of seeing label 1 is $(\-r_a-\-r_b+1)/2$. So we can just specify the link function to be $\phi(d)=(d+1)/2$. As we verified earlier, the corresponding $\Phi$ is strongly convex~(\Cref{ex:sq-loss}). Moreover, since $f^\star(a,b) = \bar r_{a} - \bar r_{b}$, if we define the gap of the MAB instance as $\-\Delta\coloneqq\min_{a\neq a^\star}(\-r_{a^\star}-\-r_a)$ where $a^\star\coloneqq\argmax_i\-r_i$, then we have $\-\Delta=\Delta$ in this reduction where $\Delta$ is the definition of the gap in the dueling setting. We further note that the regret of the MAB instance is 
	$$
	\sum_{t=1}^T (\-r_{a^\star} -\-r_{a_t})+\sum_{t=1}^T (\-r_{a^\star} -\-r_{b_t}),
	$$
	which, by our definition of $f^\star$, is equivalent to the preference-based regret that occurred to the dueling algorithm \Alg{}. The number of queries is clearly equivalent as well. Thus, the regret and the query complexity of the dueling algorithm \Alg{} can be directly translated to the regret and the query complexity of the MAB instance. 
	
	Now, we are ready to prove the two claims in our statement.
	
	\textbf{Proof of the first claim.}
	We refer the reader to \citet[Theorem 15.2]{lattimore2020bandit} for a proof of the minimax regret lower bound of $\Omega(\sqrt{AT})$ for the MAB. Through the reduction outlined above, that lower bound naturally extends to the dueling bandits setting, yielding $\@{Regret}^{\@{CB}}_T\geq\Omega(\sqrt{AT})$ (otherwise, via the above reduction, we would have achieved an approach that breaks the lower bound of MAB).

	\textbf{Proof of the second claim.}
We choose an arbitrary MAB for which $\zeta=\min_a\min\{\-r_a,1-\-r_a\}>0.2$ and the gaps of all arms are equal to $\Delta$. Invoking~\Cref{lem:mab-tradeoff}, we have
	\begin{align*}
		\E\big[\@{Regret}_T\big]&\geq 
		\frac{0.2(A-1)}{\Delta}\log\left(\frac{\Delta}{4CT^{-\beta}}\right)\geq \Omega\left(\frac{A}{\Delta} \right),\\
		\E\big[\@{Queries}_T\big]&\geq 
		\frac{0.2(A-1)}{\Delta^2}\log\left(\frac{\Delta}{4CT^{-\beta}}\right)\geq\Omega\left(\frac{A}{\Delta^2}\right).
	\end{align*}
	We further choose $\Delta=40CT^{-\beta}$ and $C=\sqrt{A}$, leading to
	\begin{align*}
		\E\big[\@{Regret}_T\big]
		&\geq 
		\frac{0.2(A-1)}{40\sqrt{A}}\cdot T^{\beta}
		=
		\Omega\left(\sqrt{A}\cdot T^\beta\right)
            ,\\
		\E\big[\@{Queries}_T\big]
		&\geq 
		\frac{0.2(A-1)}{1600A}\cdot T^{2\beta}
		=\Omega\left(T^{2\beta}\right).
	\end{align*}
	
	Via the reduction we have shown above, these lower bounds naturally extend to the contextual dueling bandit setting, thereby completing the proof.
\end{proof}

\subsubsection{Alternative Lower Bounds Conditioning on the Limit of Regret}\label{sec:lb-2}

In this section, we establish an analogue of \Cref{thm:lower-bound-stronger} but under a different condition. We first introduce the concept of \textit{diminishing regret}.

\begin{definition}
	We say that an algorithm guarantees a diminishing regret if for all contextual dueling bandit instances and $p>0$, it holds that $$\lim_{T\rightarrow\infty}\frac{\E[\@{Regret}^{\@{CB}}_T]}{T^p}=0.$$
\end{definition}

The lower bounds under the assumption of diminishing regret guarantees are stated as follows.

\begin{theorem}[Lower bounds]\label{thm:lower-bound2}
The following two claims hold:
\begin{enumerate}
	\item[(1)] for any algorithm, there exists an instance that leads to $\@{Regret}^{\@{CB}}_T\geq\Omega(\sqrt{AT})$;
	\item[(2)] for any gap $\Delta$ and any algorithm achieving diminishing regret, there exists an instance with gap $\Delta$ that results in $\E[\@{Regret}^{\@{CB}}_T]\geq\Omega(A/\Delta)$ and $\E[\@{Queries}^{\@{CB}}_T]\geq\Omega(A/\Delta^2)$ for sufficiently large $T$.
\end{enumerate}
\end{theorem}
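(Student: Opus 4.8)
The plan is to follow the same route as \Cref{thm:lower-bound-stronger}: reduce multi-armed bandits (MAB) with active queries to contextual dueling bandits through the reduction used in that proof (play $a_t$, then $b_t$, form the binary comparison $o$ from $\operatorname{sign}(y_{a_t}-y_{b_t})$ with ties split uniformly, and hand $(a_t,b_t,o)$ to the dueling learner with link $\phi(d)=(d+1)/2$), so that dueling regret and query counts coincide with those of the induced MAB algorithm. In particular, if the dueling algorithm has \emph{diminishing} regret on every instance, then the induced MAB algorithm has diminishing regret on every MAB instance (the time horizon only changes by a factor of two, which is absorbed). Claim (1) is then immediate, exactly as in \Cref{thm:lower-bound-stronger}(1): the minimax bound $\Omega(\sqrt{AT})$ for MAB \citep[Theorem 15.2]{lattimore2020bandit} transfers verbatim.

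For claim (2) I would re-run the argument of \Cref{lem:mab-tradeoff}, but replacing the hypothesis ``$\E[\mathrm{Regret}_T]\le CT^{1-\beta}$'' by the diminishing-regret property. Fix the target gap $\Delta$ and choose, once and for all, a base MAB instance $I$ in which every suboptimal arm has gap exactly $\Delta$ and $\zeta:=\min_a\min\{\bar r_a,1-\bar r_a\}>0.2$. For each suboptimal arm $a^\dagger$, form $I'$ by raising $\bar r_{a^\dagger}$ by $2\Delta$, so that $a^\dagger$ becomes optimal in $I'$ and every suboptimal arm of $I'$ has gap $\ge\Delta$. With the event $E=\{n_{a^\dagger}>T/2\}$, the same chain as in \Cref{lem:mab-tradeoff} --- the two regret lower bounds $\E_p[\mathrm{Regret}_T]\ge\tfrac{T\Delta}{2}p(E)$ and $\E_{p'}[\mathrm{Regret}_T]\ge\tfrac{T\Delta}{2}p'(E^\complement)$, the bound $\mathrm{TV}(p,p')\le\sqrt{1-e^{-\mathrm{KL}(p,p')}}$, and the KL decomposition of \Cref{lem:kl-decompose} combined with \Cref{lem:kl-bern} --- gives, for an absolute constant $c>0$ and any valid upper bound $U(T)$ on $\max\{\E_p[\mathrm{Regret}_T],\E_{p'}[\mathrm{Regret}_T]\}$,
\[
\E_p[\bar n_{a^\dagger}] \;\ge\; \frac{c\,\zeta}{\Delta^2}\,\log\!\Big(\frac{T\Delta}{4\,U(T)}\Big).
\]

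Now invoke diminishing regret with exponent $1/2$: applied to each of the finitely many instances $I$ and $I'$ (one per $a^\dagger$), it yields a threshold $T_0$ --- depending on $\Delta$, the instances, and the algorithm, but not on $T$ --- beyond which $\max\{\E_p[\mathrm{Regret}_T],\E_{p'}[\mathrm{Regret}_T]\}\le\sqrt T$; taking $U(T)=\sqrt T$ makes the right-hand side above equal to $\tfrac{c\zeta}{\Delta^2}\log(\sqrt T\,\Delta/4)$, which grows without bound. Hence, enlarging $T$ past $T_0$ and past the point where this logarithm exceeds $1$, we obtain $\E_p[\bar n_{a^\dagger}]\ge c\zeta/\Delta^2$ simultaneously for all $A-1$ suboptimal arms. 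Summing over them and using $\zeta>0.2$ gives, on the fixed instance $I$, $\E[\mathrm{Queries}_T]\ge\sum_{a\ne a^\star}\E_p[\bar n_a]=\Omega(A/\Delta^2)$ and $\E[\mathrm{Regret}_T]\ge\Delta\sum_{a\ne a^\star}\E_p[\bar n_a]=\Omega(A/\Delta)$; transferring through the reduction yields claim (2) for contextual dueling bandits.

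The one place that needs care --- and the main (modest) obstacle --- is the quantifier bookkeeping. Unlike the polynomial hypothesis of \Cref{thm:lower-bound-stronger}, diminishing regret only supplies, for each fixed instance, an upper bound $o(T^{1/2})$ that holds \emph{eventually}, so one must (i) apply it to each of the finitely many instances $I,I'$ and take the largest resulting threshold, and (ii) then inflate $T$ further so the logarithmic lower bound on $\E_p[\bar n_{a^\dagger}]$ clears the constant $1/(c\zeta)$. Both steps are harmless because the conclusion is only asserted ``for sufficiently large $T$'' and the base instance $I$ is chosen independently of $T$; no analytic ingredient beyond \Cref{lem:kl-decompose,lem:kl-bern} and the MAB-to-dueling reduction is needed.
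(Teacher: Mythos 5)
Your proposal is correct and follows essentially the same route as the paper: both establish claim (1) via the MAB-to-dueling reduction from \Cref{thm:lower-bound-stronger}, and both establish claim (2) by re-running the \Cref{lem:mab-tradeoff} change-of-measure argument (\Cref{lem:kl-decompose} plus \Cref{lem:kl-bern}) under the diminishing-regret hypothesis, then choosing an instance with uniform gap $\Delta$ and $\zeta>0.2$. The only packaging difference is that the paper isolates the key step as \Cref{lem:mab-tradeoff2} (which divides by $\log T$ and takes $\liminf$, exhibiting the stronger $\Omega(\log T/\Delta)$ asymptotic), whereas you specialize the diminishing-regret hypothesis to $p=1/2$, take $U(T)=\sqrt{T}$, and only extract the constant bound — both suffice for the theorem as stated.
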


We should highlight that the condition of diminishing regret (\Cref{thm:lower-bound2}) and the worst-case regret upper bounds (\Cref{thm:lower-bound,thm:lower-bound-stronger}) are not comparable in general. However, \Cref{thm:lower-bound2} is also applicable to our algorithm (\Cref{alg:cb}) since our algorithm possesses an instance-dependent regret upper bound that is clearly diminishing.

To prove \Cref{thm:lower-bound2}, we first show the following lemma, which is a variant of \Cref{lem:mab-tradeoff}.
\begin{lemma}\label{lem:mab-tradeoff2}
	Let $\+I$ denote the set of all MAB instances. Assume \Alg{} is an algorithm that achieves diminishing regret for all MAB instances in $\+I$, i.e., for any $I\in\+I$ and $p>0$, it holds that
	\begin{equation*}
		\lim_{T\rightarrow\infty}\frac{\E[\@{Regret}_T]}{T^p}=0.
	\end{equation*}
	 Then, for any MAB instance $I\in\+I$, the regret and the number of queries made by algorithm \Alg{} are lower bounded in the following manner:
	\begin{equation*}
		\mathop{\lim\inf}_{T\rightarrow\infty}\frac{\E\big[\@{Regret}_T\big]}{\log T}\geq\sum_{a\neq a^\star}\frac{\zeta}{\Delta_{a}}
		,\quad
		\mathop{\lim\inf}_{T\rightarrow\infty}\frac{\E\big[\@{Queries}_T\big]}{\log T}\geq 
		\sum_{a\neq a^\star}\frac{\zeta}{\Delta^2_{a}}
	\end{equation*}
	where the coefficient $\zeta\coloneqq\min_a\min\{\-r_a,1-\-r_a\}$ depends on the instance $I$. Recall that $\@{Regret}_T$ and $\@{Queries}_T$ are defined in \eqref{eq:mab-regret-query}.
\end{lemma}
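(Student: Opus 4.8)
The plan is to run the classical Lai--Robbins change-of-measure argument, organised so that Lemma~\ref{lem:kl-decompose} carries the weight: the KL divergence between two MAB instances only ``charges'' the \emph{queried} pulls of the single arm whose reward law differs, and that queried-pull count is exactly the $\E_p[\bar n_{a^\dagger}]$ we want to bound from below. Fix a suboptimal arm $a^\dagger\neq a^\star$ and a small $\epsilon>0$ (we may assume $\bar r_{a^\star}<1$, as otherwise $\zeta=0$ and there is nothing to prove), and let $I'$ be the instance identical to $I$ except that $\bar r'_{a^\dagger}\coloneqq\bar r_{a^\star}+\epsilon$. Then $a^\dagger$ is the unique best arm of $I'$, every other arm of $I'$ has a strictly positive gap, and $I$ and $I'$ differ only in the reward law of $a^\dagger$, with shift $\bar r'_{a^\dagger}-\bar r_{a^\dagger}=\Delta_{a^\dagger}+\epsilon$. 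Letting $p,p'$ be the induced laws over outcomes $O$ as in Lemma~\ref{lem:kl-decompose}, all per-arm terms but one vanish, so $\@{KL}(p,p')=\E_p[\bar n_{a^\dagger}]\cdot\@{KL}\big(\@{Bern}(\bar r_{a^\dagger}),\@{Bern}(\bar r'_{a^\dagger})\big)$.

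The steps I would carry out, in order, are: (i) Use the diminishing-regret hypothesis on both instances. On $I$, $\Delta_{a^\dagger}\E_p[n_{a^\dagger}]\le\E_p[\@{Regret}_T]=o(T^q)$ for every $q>0$; on $I'$, each suboptimal arm $a\neq a^\dagger$ has $\E_{p'}[n_a]=o(T^q)$, and hence $\E_{p'}[T-n_{a^\dagger}]=\sum_{a\neq a^\dagger}\E_{p'}[n_a]=o(T^q)$. (ii) Take the outcome-measurable event $E\coloneqq\{n_{a^\dagger}<T/2\}$; Markov's inequality then gives $p(E)\ge 1-o(T^{q-1})$ and $p'(E)\le o(T^{q-1})$. (iii) Apply the data-processing inequality to the deterministic map $O\mapsto\indic\{O\in E\}$, followed by the elementary estimate $\@{KL}(\@{Bern}(a),\@{Bern}(b))\ge a\log(1/b)-\log 2$, to deduce
\[
\@{KL}(p,p')\ \ge\ \@{KL}\big(\@{Bern}(p(E)),\@{Bern}(p'(E))\big)\ \ge\ (1-o(1))\log\tfrac{1}{p'(E)}-\log 2\ \ge\ (1-q-o(1))\log T .
\]
(iv) Substitute this into the KL-decomposition identity, divide by $\log T$, take $\liminf_{T\to\infty}$, and then let $q\to 0$ and $\epsilon\to 0$; by continuity the denominator tends to $\@{KL}(\@{Bern}(\bar r_{a^\dagger}),\@{Bern}(\bar r_{a^\star}))$, which Lemma~\ref{lem:kl-bern} bounds in terms of $\Delta_{a^\dagger}^2/\zeta$, so $\liminf_{T\to\infty}\E_p[\bar n_{a^\dagger}]/\log T\ge\zeta/\Delta_{a^\dagger}^2$. (v) Since $\bar n_a\le n_a$ for every arm, $\@{Queries}_T=\sum_a\bar n_a\ge\sum_{a\neq a^\star}\bar n_a$ and $\@{Regret}_T=\sum_a\Delta_a n_a\ge\sum_{a\neq a^\star}\Delta_a\bar n_a$; taking expectations, dividing by $\log T$, and summing the per-arm bound over $a^\dagger\neq a^\star$ (using superadditivity of $\liminf$) yields the two claimed inequalities.

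The step I expect to be the main obstacle is reconciling the two pull counts: Lemma~\ref{lem:kl-decompose} only ever exposes $\E_p[\bar n_{a^\dagger}]$, the number of \emph{queried} pulls of $a^\dagger$, whereas the distinguishing event $E$ has to be phrased in terms of the \emph{total} pull count $n_{a^\dagger}$, which is what a low-regret algorithm actually keeps small; one must then interchange the three limits in the correct order ($T\to\infty$, then $q\to 0$, then $\epsilon\to 0$) to turn the merely subpolynomial regret guarantee into the sharp $(1-o(1))\log T$ lower bound on $\@{KL}(p,p')$. The rest is routine bookkeeping, essentially the asymptotic analogue of the finite-$T$ argument in the proof of Lemma~\ref{lem:mab-tradeoff}; and Theorem~\ref{thm:lower-bound2} then follows by combining this lemma with the standard minimax $\Omega(\sqrt{AT})$ bound and the MAB-to-contextual-dueling reduction already set up in the proof of Theorem~\ref{thm:lower-bound-stronger}.
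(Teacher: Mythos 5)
Your proposal is correct, and it takes a genuinely different route from the paper. Both proofs are change-of-measure arguments that fix a suboptimal arm $a^\dagger$, build a perturbed instance $I'$ in which $a^\dagger$ becomes optimal, invoke Lemma~\ref{lem:kl-decompose} to collapse $\@{KL}(p,p')$ to $\E_p[\bar n_{a^\dagger}]\cdot\@{KL}\big(\@{Bern}(\bar r_{a^\dagger}),\@{Bern}(\bar r'_{a^\dagger})\big)$, and use Markov on the total pull counts $n_{a^\dagger}$ and $T-n_{a^\dagger}$ (combined with the diminishing-regret hypothesis applied to both $I$ and $I'$) to make the event $E$ have probability near $1$ under $p$ and near $0$ under $p'$; the reconciliation of $\bar n_a$ with $n_a$ via $\bar n_a\le n_a$ at the very end is also shared. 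Where you diverge is in the two technical choices: the paper perturbs the reward of $a^\dagger$ upward by $2\Delta_{a^\dagger}$ and gets its $\Omega(\log T)$ lower bound on $\@{KL}(p,p')$ by passing through total variation and a Bretagnolle--Huber-type inequality of the form $p(E)+p'(E^\complement)\ge \exp(-\@{KL}(p,p')/2)$, then taking a single $T\to\infty$ limit; you instead perturb to $\bar r_{a^\star}+\epsilon$, lower-bound $\@{KL}(p,p')$ directly via the data-processing inequality followed by $\@{KL}(\@{Bern}(a),\@{Bern}(b))\ge a\log(1/b)-\log 2$, and take the iterated limit $T\to\infty$, then $q\to 0$, then $\epsilon\to 0$. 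Your version is closer in spirit to the Lai--Robbins / Garivier--Kaufmann instance-dependent lower bound and is actually tighter on constants (it produces the genuine $\@{KL}(\@{Bern}(\bar r_{a^\dagger}),\@{Bern}(\bar r_{a^\star}))$ in the denominator before converting to a gap, and the shifted mean $\bar r_{a^\star}+\epsilon$ is automatically a valid Bernoulli parameter, whereas $\bar r_{a^\dagger}+2\Delta_{a^\dagger}=2\bar r_{a^\star}-\bar r_{a^\dagger}$ can exceed $1$); the price is the triple-limit bookkeeping you already flagged, which the paper's larger perturbation avoids. The one thing to watch in your write-up is that the $o(\cdot)$ constants hidden in step (ii) depend on both $\Delta_{a^\dagger}$ and $\epsilon$, so you must genuinely send $T\to\infty$ for fixed $\epsilon$ before sending $\epsilon\to 0$---which is exactly the order you specify, so the argument goes through.
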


\begin{proof}[Proof of \Cref{lem:mab-tradeoff2}]
The proof is similar to \Cref{lem:mab-tradeoff}. 
For any MAB instance $I\in\+I$ and any arm $a^\dagger$, we define a corresponding MAB instance $I'$ as follows. Denote $\-r$ and $\-r'$ as the mean reward of $I$ and $I'$, respectively. For $I'$, we set the mean reward $\-r'_a=\-r_a$ for any $a\neq a^\dagger$ and $\-r'_{a^\dagger}=\-r_{a^\dagger}+2\Delta_{a^\dagger}$. Consequently, the optimal arm of $I'$ is $a^\dagger$ with margin $\Delta_{a^\dagger}$. Let $n_a$ denote the number of times that arm $a$ is pulled. We define the event
	\begin{align*}
		E=\{n_{a^\dagger}> T/2\}.
	\end{align*}
	Let $p$ and $p'$ denote the probability of $I$ and $I'$, respectively. Then, we have
	\begin{align*}
		\E_p\big[\@{Regret}_T\big]\geq \frac{T\Delta_{a^\dagger}}{2}\cdot p(E)
		,\quad
		\E_{p'}\big[\@{Regret}_T\big]\geq \frac{T\Delta_{a^\dagger}}{2}\cdot p'(E^\complement)
	\end{align*}
	where $E^\complement$ means the complement of event $E$. Hence,
	\begin{align*}
		\E_p\big[\@{Regret}_T\big]+\E_{p'}\big[\@{Regret}_T\big]
		\geq&\frac{T\Delta_{a^\dagger}}{2} \big(p(E)+p'(E^\complement)\big)\\
		=&\frac{T\Delta_{a^\dagger}}{2} \Big(1-\big(p'(E)-p(E)\big)\Big)\\
		\geq&\frac{T\Delta_{a^\dagger}}{2} \Big(1-\@{TV}\big(p,p'\big)\Big)\\
		\geq&\frac{T\Delta_{a^\dagger}}{2} \Big(1-\sqrt{1-\exp\big(-\@{KL}(p,p')\big)}\Big)\\
		\geq&\frac{T\Delta_{a^\dagger}}{2} \exp\left(-\frac{1}{2}\cdot\@{KL}(p,p')\right).
	\end{align*}
	Here $\@{TV}$ denotes the total variation distance. By \Cref{lem:kl-decompose}, we have
	\begin{align*}
		\@{KL}(p,p')
		=&\sum_{a=1}^A\E_{p}[\-n_a]\cdot\@{KL}\big(\Pr(r\given a),\Pr'(r\given a)\big)\\
		=&\E_{p}[\-n_{a^\dagger}]\cdot\@{KL}\big(\Pr(r\given a^\dagger),\Pr'(r\given a^\dagger)\big)\\
		\leq&\E_{p}[\-n_{a^\dagger}]\cdot\Delta^2_{a^\dagger} \cdot 2/\zeta
	\end{align*}
	where  the last inequality is by \Cref{lem:kl-bern}. Putting it all together, we arrive at
	\begin{align*}
		\E_{p}[\-n_{a^\dagger}]\geq\frac{\zeta}{\Delta^2_{a^\dagger}}\log\left(\frac{T\Delta_{a^\dagger}}{2\Big(\E_p\big[\@{Regret}_T\big]+\E_{p'}\big[\@{Regret}_T\big]\Big)}\right).
	\end{align*}
	Taking the limit on both sides yields
	\begin{align*}
		\mathop{\lim\inf}_{T\rightarrow\infty}
		\frac{\E_{p}[\-n_{a^\dagger}]}{\log T}
		\geq&\mathop{\lim\inf}_{T\rightarrow\infty}\frac{\zeta}{\Delta^2_{a^\dagger}}\cdot\frac{\log\left(\frac{T\Delta_{a^\dagger}}{2\Big(\E_p\big[\@{Regret}_T\big]+\E_{p'}\big[\@{Regret}_T\big]\Big)}\right)}{\log T}\\
		=&\mathop{\lim\inf}_{T\rightarrow\infty}\frac{\zeta}{\Delta^2_{a^\dagger}}\cdot\left(1+\underbrace{\frac{\log(\Delta_{a^\dagger}/2)}{\log T}}_{\rm(i)}-\underbrace{\frac{\log\Big(\E_p\big[\@{Regret}_T\big]+\E_{p'}\big[\@{Regret}_T\big]\Big)}{\log T}}_{\rm(ii)}\right).
	\end{align*}
	Here the limit of $\rm(i)$ is clearly 0. For the limit of $\rm(ii)$, we note that by the definition of diminishing regret, for any $C>0$, there exists a $T'$ such that $\E[\@{Regret}_{T}]/T^p\leq C$ for any $T>T'$. This implies
	\begin{align*}
		\frac{\log\Big(\E_p\big[\@{Regret}_T\big]+\E_{p'}\big[\@{Regret}_T\big]\Big)}{\log T}
		\leq
		\frac{\log\Big( 2CT^p\Big)}{\log T}=\frac{\log(2C)}{\log T}+p
	\end{align*}
	for any $p>0$. Therefore, the limit of $\rm(ii)$ is also 0. Plugging these back, we obtain
	\begin{align*}
		\mathop{\lim\inf}_{T\rightarrow\infty}
		\frac{\E_{p}[\-n_{a^\dagger}]}{\log T}\geq\frac{\zeta}{\Delta^2_{a^\dagger}}.
	\end{align*}
	This establishes a query lower bound for arm $a^\dagger$. Consequently, we have
	\begin{align*}
		\mathop{\lim\inf}_{T\rightarrow\infty}
		\frac{\E[\@{Regret}_T]}{\log T}
		\geq 
		\mathop{\lim\inf}_{T\rightarrow\infty}
		\sum_{a\neq a^\star} \frac{\E_{p}[\-n_{a}]\cdot\Delta_a}{\log T}
		\geq\sum_{a\neq a^\star}\frac{\zeta}{\Delta_{a}},
	\end{align*}
	and similarly,
	\begin{align*}
		\mathop{\lim\inf}_{T\rightarrow\infty}
		\frac{\E[\@{Queries}_T]}{\log T}
		\geq 
		\mathop{\lim\inf}_{T\rightarrow\infty}
		\sum_{a\neq a^\star} \frac{\E_{p}[\-n_{a}]}{\log T}
		\geq\sum_{a\neq a^\star}\frac{\zeta}{\Delta_{a}^2}.
	\end{align*}
\end{proof}

Now, we proceed with the proof of \Cref{thm:lower-bound2}.

\begin{proof}[Proof of \Cref{thm:lower-bound2}]
The proof of the first claim is the same as \Cref{thm:lower-bound-stronger}, so we will omit it here. Let us now focus on the proof of the second claim. By \Cref{lem:mab-tradeoff2}, for any algorithm achieving diminishing regret, the following is true for any MAB instance:
	\begin{equation*}
		\mathop{\lim\inf}_{T\rightarrow\infty}\frac{\E\big[\@{Regret}_T\big]}{\log T}\geq 
		\sum_{a\neq a^\star}\frac{\zeta}{\Delta_{a}},\quad
		\mathop{\lim\inf}_{T\rightarrow\infty}\frac{\E\big[\@{Queries}_T\big]}{\log T}\geq 
		\sum_{a\neq a^\star}\frac{\zeta}{\Delta^2_{a}}.
	\end{equation*}
	We choose an arbitrary MAB for which $\zeta\geq 0.2$ and the gaps of all suboptimal arms are equal to $\Delta$. Then, for this instance, we have
	\begin{equation*}
		\mathop{\lim\inf}_{T\rightarrow\infty}\frac{\E\big[\@{Regret}_T\big]}{\log T}\geq 
		\frac{0.2(A-1)}{\Delta},\quad
		\mathop{\lim\inf}_{T\rightarrow\infty}\frac{\E\big[\@{Queries}_T\big]}{\log T}\geq 
		\frac{0.2(A-1)}{\Delta^2}.
	\end{equation*}
	By the definition of limit, when $T$ is large enough (exceeding a certain threshold), we have
	\begin{equation*}
		\frac{\E\big[\@{Regret}_T\big]}{\log T}\geq 
		\frac{0.1(A-1)}{\Delta},\quad
		\frac{\E\big[\@{Queries}_T\big]}{\log T}\geq 
		\frac{0.1(A-1)}{\Delta^2}.
	\end{equation*}
	Via the reduction we have shown in the proof of \Cref{thm:lower-bound-stronger}, these lower bounds naturally extend to the contextual dueling bandit setting, thereby completing the proof.
\end{proof}

\subsection{Proof of Theorem~\ref{thm:cb-general-regret}}\label{sec:pf-thm-cb-general-regret}
\begin{proof}[Proof of \Cref{thm:cb-general-regret}]

We establish the bounds for regret and the number of queries, consecutively. First, we set an arbitrary gap threshold $\epsilon > 0$. Since our algorithm is independent of $\epsilon$, we can later choose any $\epsilon$ that minimizes the upper bounds.

\textbf{Proof of regret.} 
We start with the regret upper bound. By definition, we have
\begin{align*}
	\@{Regret}^{\@{CB}}_T=\sum_{t=1}^{T}\big(
	f^\star(x_t,\pi_{f^\star}(x_t),a_t)+
				f^\star(x_t,\pi_{f^\star}(x_t),b_t)\big)	.
\end{align*}
Since $a_t$ and $b_t$ are always drawn independently from the same distribution in \Cref{alg:cb}, we only need to consider the regret of the $a_t$ part in the following proof for brevity --- multiplying the result by two would yield the overall regret.

The worst-case regret upper bound presented in  \Cref{lem:worst-case-regret-ub} doesn't reply on the gap assumption and thus remains applicable in this setting. Hence, we only need to prove the instance-dependent regret upper bound. To that end, we first need an analogue of \Cref{lem:lambda-all-0}.

\begin{lemma}\label{lem:lambda-all-0-general}
	Fix any $\epsilon > 0$. Whenever 
	$$
	2T_\epsilon + 56A^2\beta\cdot\frac{\@{dim}_E\left(\+F,\epsilon\right)}{\epsilon}\cdot\log(2/(\delta\epsilon))
	<
	\sqrt{AT/\beta},
	$$
	we have $\lambda_1=\lambda_2=\dots=\lambda_T=0$ with probability at least $1-\delta$.
\end{lemma}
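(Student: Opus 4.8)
The plan is to mirror the proof of \Cref{lem:lambda-all-0} at the top level — a contradiction argument — so the only real work is to establish the gap-free analogue of \Cref{lem:sum-zw}. Specifically, I would show that if $f^\star\in\+F_t$ for all $t$ and $\lambda_t=0$ for all $t\le t'$, then with probability at least $1-\delta$,
\begin{align*}
	\sum_{t=1}^{t'} Z_t w_t \;\le\; 2T_\epsilon + 56A^2\beta\cdot\frac{\@{dim}_E(\+F,\epsilon)}{\epsilon}\cdot\log(2/(\delta\epsilon)).
\end{align*}
Granting this, suppose for contradiction that some $t'$ is the first index with $\lambda_{t'}=1$. Then $\lambda_t=0$ for all $t\le t'-1$, so the displayed bound applied to the prefix up to $t'-1$ gives $\sum_{s=1}^{t'-1}Z_sw_s \le 2T_\epsilon + 56A^2\beta\cdot\@{dim}_E(\+F,\epsilon)\cdot\log(2/(\delta\epsilon))/\epsilon < \sqrt{AT/\beta}$ by the hypothesis of the lemma; but the definition of $\lambda_{t'}$ forces $\sum_{s=1}^{t'-1}Z_sw_s\ge\sqrt{AT/\beta}$, a contradiction. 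Hence $\lambda_t=0$ for all $t\in[T]$. The event $\{f^\star\in\+F_t\ \forall t\}$ holds with probability $1-\delta$ by \Cref{lem:pointwise-bound}, and a union bound with the concentration event used below keeps the total failure probability at $O(\delta)$, exactly as in \Cref{lem:lambda-all-0}.

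To prove the displayed bound, I would split the sum over rounds according to whether $\@{Gap}(x_t)\le\epsilon$. On rounds with $\@{Gap}(x_t)\le\epsilon$ I simply bound $w_t\le 2$ (every $f\in\+F$ takes values in $[-1,1]$), so these contribute at most $2T_\epsilon$. On rounds with $\@{Gap}(x_t)>\epsilon$, I would prove a localized version of \Cref{lem:width-lower-bound}: when $Z_t=1$ we have $|\+A_t|>1$, hence there is $f'\in\+F_t$ with $a':=\pi_{f'}(x_t)\ne\pi_{f^\star}(x_t)$, and since $f'(x_t,\pi_{f^\star}(x_t),a')\le0$,
\begin{align*}
	\epsilon<\@{Gap}(x_t)\le f^\star(x_t,\pi_{f^\star}(x_t),a')\le f^\star(x_t,\pi_{f^\star}(x_t),a')-f'(x_t,\pi_{f^\star}(x_t),a')\le w_t.
\end{align*}
Thus on the large-gap rounds $Z_tw_t$ is either $0$ or at least $\epsilon$ — precisely the structural fact exploited in the binning argument inside the proof of \Cref{lem:sum-zw}. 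I would then re-run that argument with $\epsilon$ playing the role of $\Delta$: partition $[\epsilon,1]$ into bins of width $1/m$, use uniformity of $p_t$ under $\lambda_t=0$ to pay an $A^2$ factor and pass to $\E_{a,b\sim p_t}$, apply \Cref{lem:freedman} to return to the realized $(a_t,b_t)$, apply \Cref{lem:w-eluder} at each bin threshold (which is $\ge\epsilon$, so monotonicity of the eluder dimension gives $\@{dim}_E(\+F,\cdot)\le\@{dim}_E(\+F,\epsilon)$), sum the resulting series, and finally set $m=2/\epsilon$. This bounds the large-gap contribution by $56A^2\beta\cdot\@{dim}_E(\+F,\epsilon)\cdot\log(2/(\delta\epsilon))/\epsilon$, and adding the two pieces yields the display.

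The main obstacle is purely bookkeeping: checking that every appearance of $\Delta$ in the computation inside \Cref{lem:sum-zw} was used either as a lower bound on the contributing widths $w_t$ or as the eluder resolution, both roles now supplied by $\epsilon$, and making sure the small-gap rounds are cleanly quarantined so that they never enter the eluder-dimension count. No new inequality is needed beyond those already established in the excerpt.
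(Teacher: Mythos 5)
Your proposal is correct and follows essentially the same route as the paper: split the sum $\sum Z_t w_t$ by whether $\@{Gap}(x_t)\le\epsilon$, bound the small-gap contribution trivially by $2T_\epsilon$, and re-run the binning-plus-eluder argument of \Cref{lem:sum-zw} with $\epsilon$ replacing $\Delta$ on the large-gap rounds, feeding the result into the contradiction scheme of \Cref{lem:lambda-all-0}. You spell out the localized width lower bound (the paper simply cites \Cref{lem:sum-zw} without noting that it was stated under the uniform-gap assumption), which is the one bookkeeping point genuinely worth making explicit.
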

\begin{proof}[Proof of \Cref{lem:lambda-all-0-general}]
	The proof is similar to \Cref{lem:lambda-all-0} and is via contradiction. Assume the inequality holds but there exists $t'$ for which $\lambda_{t'}=1$. Without loss of generality, we assume that $\lambda_t=0$ for all $t<t'$, namely that $t'$ is the first time that $\lambda_t$ is 1. Then by definition of $\lambda_{t'}$, we have
	\begin{align*}
		\sum_{s=1}^{t'-1} Z_sw_s\geq\sqrt{AT/\beta}.
	\end{align*}
	On the other hand, we have
	\begin{align*}
	\sum_{s=1}^{t'-1} Z_s w_s
	= &
	\sum_{s=1}^{t'-1} \indic\{\@{Gap}(x_t)\leq\epsilon\} Z_s w_s + 	\sum_{s=1}^{t'-1} \indic\{\@{Gap}(x_t)>\epsilon\} Z_s w_s\\
	\leq &
	2T_\epsilon + 56A^2\beta\cdot\frac{\@{dim}_E\left(\+F,\epsilon\right)}{\epsilon}\cdot\log(2/(\delta\epsilon))
	\end{align*}
	where the inequality is by \Cref{lem:sum-zw}. The above two inequalities contradicts with the conditions.
\end{proof}
Towards an instance-dependent regret upper bound, we adapt the proof of \Cref{lem:ins-depend-regret-ub} to this setting. We consider two cases. First, when 
	\begin{equation}\label{eq:regret-cases1-general}
	2T_\epsilon + 56A^2\beta\cdot\frac{\@{dim}_E\left(\+F,\epsilon\right)}{\epsilon}\cdot\log(2/(\delta\epsilon))
	<
	\sqrt{AT/\beta},
	\end{equation}
	we invoke \Cref{lem:lambda-all-0-general} and get that $\lambda_t=0$ for all $t\in[T]$. Hence, we have
	\begin{align*}
		\@{Regret}^{\@{CB}}_T
		=&\sum_{t=1}^{T}\big(
		f^\star(x_t,\pi_{f^\star}(x_t),a_t)+
				f^\star(x_t,\pi_{f^\star}(x_t),b_t)\big)\\
		\leq&2\sum_{t=1}^T  \indic\{\@{Gap}(x_t)\leq\epsilon\} Z_t w_t + 2 \sum_{t=1}^T  \indic\{\@{Gap}(x_t)>\epsilon\} Z_t w_t\\
		\leq& 4 T_\epsilon + 112A^2\beta\cdot\frac{\@{dim}_E\left(\+F,\epsilon\right)}{\epsilon}\cdot\log(2/(\delta\epsilon))\\
		\leq& 136\beta\cdot\log(4\delta^{-1})\cdot T_\epsilon + 3808 A^2\beta^2\cdot\frac{\@{dim}_E\left(\+F,\epsilon\right)}{\epsilon}\cdot\log^2(4/(\delta\epsilon))
	\end{align*}
	where the first inequality is by \Cref{lem:regret-bounded-by-w} and the fact that we incur no regret when $Z_t=0$ since $f^\star\in\+F_t$. The second inequality is by \Cref{lem:sum-zw}.
	
	On the other hand, when the contrary of \eqref{eq:regret-cases1-general} holds, i.e., 
	\begin{equation}\label{eq:regret-cases2-general}
	2T_\epsilon + 56A^2\beta\cdot\frac{\@{dim}_E\left(\+F,\epsilon\right)}{\epsilon}\cdot\log(2/(\delta\epsilon))
	\geq
	\sqrt{AT/\beta},
	\end{equation}
	applying \Cref{lem:worst-case-regret-ub}, we have
	\begin{align*}
		\@{Regret}^{\@{CB}}_T
		\leq&68\sqrt{AT\beta}\cdot\log(4\delta^{-1})\\
		=&68\beta\cdot\log(4\delta^{-1})\cdot\sqrt{AT/\beta}\\
		\leq&68\beta\cdot\log(4\delta^{-1})\cdot
	\left(2 T_\epsilon + 56A^2\beta\cdot\frac{\@{dim}_E\left(\+F,\epsilon\right)}{\epsilon}\cdot\log(2/(\delta\epsilon))\right)\\
		\leq& 136\beta\cdot\log(4\delta^{-1})\cdot T_\epsilon + 3808 A^2\beta^2\cdot\frac{\@{dim}_E\left(\+F,\epsilon\right)}{\epsilon}\cdot\log^2(4/(\delta\epsilon))
	\end{align*}
	where we apply the condition \eqref{eq:regret-cases2-general} in the second inequality.

\textbf{Proof of the number of queries.}
To show an upper bound for the number of queries, we also consider two cases. First, when
\begin{align}\label{eq:query-case-general}
	2T_\epsilon + 56A^2\beta\cdot\frac{\@{dim}_E\left(\+F,\epsilon\right)}{\epsilon}\cdot\log(2/(\delta\epsilon))
	<
	\sqrt{AT/\beta},
\end{align}
we can invoke \Cref{lem:lambda-all-0-general} and get that $\lambda_t=0$ for all $t\in[T]$. Hence, similar to the proof of \Cref{lem:query-bound}, we have
\begin{align*}
	\@{Queries}^{\@{CB}}_T
	=&\sum_{t=1}^T Z_t\\
	=&\sum_{t=1}^T Z_t\indic\{\@{Gap}(x_t) < \epsilon\}+\sum_{t=1}^T Z_t\indic\{\@{Gap}(x_t) \geq \epsilon\}\\
	=&T_\epsilon + \sum_{t=1}^TZ_t\sup_{a,b\in\+A_t}\indic\left\{\sup_{f,f'\in\+F_t}f(x_t,a,b)-f'(x_t,a,b)\geq\epsilon\right\}\\
	\leq& T_\epsilon +\sum_{t=1}^TZ_t\sum_{a,b}\indic\left\{\sup_{f,f'\in\+F_t}f(x_t,a,b)-f'(x_t,a,b)\geq\epsilon\right\}\\
	\leq& T_\epsilon +A^2\underbrace{\sum_{t=1}^TZ_t \E_{a,b\sim p_t}\indic\left\{\sup_{f,f'\in\+F_t}f(x_t,a,b)-f'(x_t,a,b)\geq\epsilon\right\}}_{(*)}
\end{align*}
where the second inequality holds as $p_t(a)$ is uniform for any $a,b$ when $\lambda_t=0$. We apply \Cref{lem:freedman} and \Cref{lem:w-eluder} to $(*)$ and obtain
\begin{align*}
	(*)\leq& 2\sum_{t=1}^TZ_t \indic\left\{\sup_{f,f'\in\+F_t}f(x_t,a_t,b_t)-f'(x_t,a_t,b_t)\geq\epsilon\right\}+8\log(\delta^{-1})\\
	\leq& 2\left(\frac{4\beta}{\epsilon^2}+1\right)\@{dim}_E(\+F;\epsilon)+8\log(\delta^{-1})\\
	\leq& \frac{10\beta}{\epsilon^2}\cdot\@{dim}_E(\+F;\epsilon)+8\log(\delta^{-1}).
\end{align*}
Plugging this back, we obtain
\begin{align*}
	\@{Queries}^{\@{CB}}_T
	\leq& T_\epsilon + \frac{10A^2\beta}{\epsilon^2}\cdot\@{dim}_E(\+F;\epsilon)+8A^2\log(\delta^{-1})\\
	\leq& 8 T^2_\epsilon\beta/A + 6272 A^3 \beta^3 \frac{\@{dim}^2_E\left(\+F,\epsilon\right)}{\epsilon^2}\cdot\log^2(2/(\delta\epsilon))
\end{align*}
where the second line corresponds to the upper bound derived from the alternative case, which is shown below.

When the contrary of \eqref{eq:query-case-general} holds, i.e., 
\begin{align*}
	2T_\epsilon + 56 A^2\beta\cdot\frac{\@{dim}_E\left(\+F,\epsilon\right)}{\epsilon}\cdot\log(2/(\delta\epsilon))
	\geq
	\sqrt{AT/\beta}.
\end{align*}
Squaring both sides and leveraging the inequality $(a+b)^2\leq 2a^2+2b^2$, we obtain
\begin{align*}
8T^2_\epsilon + 6272 A^4 \beta^2 \frac{\@{dim}^2_E\left(\+F,\epsilon\right)}{\epsilon^2}\cdot\log^2(2/(\delta\epsilon))
\geq AT/\beta
\end{align*}
which leads to
\begin{equation*}
T
\leq 8 T^2_\epsilon\beta/A + 6272 A^3 \beta^3 \frac{\@{dim}^2_E\left(\+F,\epsilon\right)}{\epsilon^2}\cdot\log^2(2/(\delta\epsilon)).
\end{equation*}
We note that we always have $\@{Queries}^{\@{CB}}_T\leq T$ and thus 
\begin{equation*}
\@{Queries}^{\@{CB}}_T
\leq
T
\leq 8 T^2_\epsilon\beta/A + 6272 A^3 \beta^3 \frac{\@{dim}^2_E\left(\+F,\epsilon\right)}{\epsilon^2}\cdot\log^2(2/(\delta\epsilon)).
\end{equation*}

\textbf{Minimizing on $\epsilon$.}
Given that the aforementioned proofs hold for any threshold $\epsilon$, we can select the specific value of $\epsilon$ that minimizes the upper bounds. Hence, we deduce the desired result.
\end{proof}

\subsection{Proof of Theorem~\ref{thm:il-regret}}\label{sec:pf-thm-il-regret}
\begin{proof}[Proof of Theorem~\ref{thm:il-regret}]
The upper bound of the number of queries is straightforward: \Cref{alg:il} is simply running $H$ instances of \Cref{alg:cb}, so the total number of queries is simply the sum of these $H$ instances. For bounding the regret, we have
	\begin{align*}
		\@{Regret}^{\@{IL}}_T
		=&\sum_{t=1}^T V^{\pi_e}_0(x_{t,0})-V^{\pi_t}_0(x_{t,0})\\
		\leq&\sum_{h=0}^{H-1} \sum_{t=1}^T \E_{x_{t,h},a_{t,h}\sim d^{\pi_t}_{x_{t,0},h}}\Big[Q^{\pi_e}_h(x_{t,h},\pi^{\pi_e}_h(x_{t,h}))-Q^{\pi_e}_h(x_{t,h},a_{t,h})\Big]\\
		\leq&\sum_{h=0}^{H-1} \sum_{t=1}^T \E_{x_{t,h},a_{t,h}\sim d^{\pi_t}_{x_{t,0},h}}\Big[Q^{\pi_e}_h(x_{t,h},\pi_h^+(x_{t,h}))-Q^{\pi_e}_h(x_{t,h},a_{t,h})\Big]\\
		&\quad-\sum_{h=0}^{H-1} \sum_{t=1}^T \E_{x_{t,h}\sim d^{\pi_t}_{x_{t,0},h}}\Big[A^{\pi_e}_h(x_{t,h},\pi^+_h(x_{t,h}))\Big]\\
		\leq&H\cdot\E\left[\@{Regret}^{\@{CB}}_T\right]-\@{Adv}_T.
	\end{align*}
	where the first inequality holds by \Cref{lem:pdl}, and we denote $\pi^+_h(x_{t,h})=\argmax_a Q^{\pi_e}_h(x_{t,h},a)$ in the second inequality. Then, we can plug the upper bound of $\@{Regret}^{\@{CB}}_T$ (\Cref{thm:cb-regret}). Moreover, we need to take a union bound over all $h \in [H]$.
\end{proof}

\end{document}